\documentclass{article}

\usepackage[preprint]{main}

\usepackage[utf8]{inputenc} 
\usepackage[T1]{fontenc}    
\usepackage{hyperref}       
\usepackage{url}            
\usepackage{booktabs}       
\usepackage{amsfonts}       
\usepackage{nicefrac}       
\usepackage{microtype}      
\usepackage{xcolor}         
\usepackage{amsmath}
\usepackage{amssymb}
\usepackage{mathtools}
\usepackage{amsthm}
\usepackage{algorithm, algpseudocode}
\usepackage{titletoc}
\usepackage{wrapfig}
\usepackage{multirow}
\usepackage{xspace}
\usepackage{thmtools}
\usepackage{thm-restate}

\theoremstyle{plain}

\newtheorem{lemma}{Lemma}

\newtheorem*{proposition*}{Proposition}

\newcommand{\Ex}{\mathbb{E}}
\newcommand{\KL}{\mathrm{KL}}

\newcommand{\x}{\mathbf{x}}

\newcommand{\btheta}{\boldsymbol{\theta}}
\newcommand{\bepsilon}{\boldsymbol{\varepsilon}}

\newcommand{\bdelta}{\boldsymbol{\delta}}

\renewcommand{\hat}{\widehat}
\newcommand{\Prob}{\mathbb{P}}

\newcommand{\Normal}{\mathcal{N}}

\newcommand{\R}{\mathbb{R}}            

\newcommand{\ours}{\textsc{RRISE}\xspace}

\newcommand{\JS}{\operatorname{JS}}

\newcommand{\diag}{\operatorname{diag}}

\newcommand{\bone}{\mathbf{1}}
\newcommand{\bzero}{\mathbf{0}}
\newcommand{\bI}{\mathbf{I}}

\newcommand{\pA}{p_A}
\newcommand{\qA}{q_A}
\newcommand{\ptildeA}{\widetilde p_A}
\newcommand{\Rtilde}{\widetilde R}

\title{\ours: Robust Radius Inference \\ via a Surrogate Estimator}

\author{%
  Jong-Ik Park\thanks{Equal contribution.} \\
  Carnegie Mellon University\\
  \texttt{jongikp@andrew.cmu.edu}
  \And
  Shreyas Chaudhari\footnotemark[1] \\
  Carnegie Mellon University\\
  \texttt{shreyasc@andrew.cmu.edu}
  \AND
  Carlee Joe-Wong \\
  Carnegie Mellon University\\
  \texttt{cjoewong@andrew.cmu.edu}
  \And
  Jos\'{e} M. F. Moura \\
  Carnegie Mellon University\\
  \texttt{moura@andrew.cmu.edu}
}

\begin{document}

\maketitle

\begin{abstract}
Randomized smoothing (RS) uses a smoothed classifier to provide architecture-agnostic certificates of $\ell_2$ classification robustness, but its dependence on per-input Monte Carlo (MC) sampling undermines its use in real-time systems. We argue that this cost is structural rather than fundamental, such that it can be significantly reduced by sharing information across the deployment stream. We introduce \ours, an RS framework that compresses certification into a single forward pass through a learned surrogate. \ours trains the surrogate against precomputed MC class-count targets via a soft-label cross-entropy loss and converts surrogate predictions into provably conservative certified radii through a one-time conformal calibration step. The resulting certificate is deployment-verifiable: whenever the calibrated radius is positive, the surrogate's prediction provably matches the smoothed classifier's and the smoothed classifier is constant on a ball of that radius around the input. Across image classification benchmarks, \ours\ matches fixed-budget MC certified accuracy within $0.84$ percentage points while replacing up to $10^4$ noisy base-model evaluations per query with a single surrogate forward pass, recouping MC training cost after $\approx 10^5$ deployment queries. On CIFAR-100 and Tiny ImageNet, where the only prior offline-surrogate method collapses, \ours\ achieves $1.23$ to $1.91\times$ higher certified accuracy, establishing efficient randomized smoothing as a practical path to certified robustness in repeated-deployment settings.
\end{abstract}

\section{Introduction}
Modern AI classification systems increasingly operate in high-stakes, {real-time} settings, where performance depends not only on pointwise accuracy but also on {stability under input perturbations}~\citep{fawzi2018analysis, liu2025comprehensive}. Physically realizable perturbations---such as changes in viewpoint, lighting, or sensor noise---can, for example, trigger safety-critical failures in autonomous driving~\citep{eykholt2018robust, chi2024adversarial}, while subtle variations in medical images may compromise clinical decision-making~\citep{finlayson2019adversarial, ma2021understanding}. Similar concerns arise in real-time robotics~\citep{cao2023robust} and speech recognition~\citep{xie2020real}, where reliable decisions must be produced under strict latency constraints despite naturally occurring or adversarial input perturbations. These settings motivate a \emph{geometric} view of robustness, in which predictions should remain invariant within a neighborhood of the input, and the size of this neighborhood defines an operational safety margin~\citep{hein2017formal, wang2018efficient}. By contrast, widely used pointwise reliability measures---such as confidence scores, predictive uncertainty, and calibration metrics~\citep{guo2017calibration, lakshminarayanan2017simple, gal2016dropout, geifman2017selective}---do not directly certify neighborhood invariance.

Randomized smoothing (RS)~\citep{lecuyer2019certified, cohen2019certified, li2019certified} has emerged as a leading approach for certifying classifier robustness. RS provides instance-specific guarantees of prediction invariance under bounded perturbations. Unlike bound-propagation and convex-relaxation methods~\citep{weng2018towards, singh2019abstract} that rely on architectural assumptions and remain difficult to scale to large networks, RS is architecture-agnostic, requiring only black-box query access to the classifier and applying broadly through Monte Carlo (MC) sampling.

Despite these advantages, standard RS entails substantial computational costs, which hinder its deployment in real-time, safety-critical, risk-aware decision-making systems \cite{kumari2023rethinking}. Certification requires estimating ``smoothed'' class probabilities via MC sampling for each input~\citep{cohen2019certified}, and achieving high-confidence guarantees may require on the order of $10^5$ forward passes per input example~\citep{salman2019provably}. In latency-sensitive settings, this overhead is prohibitive. On modern GPU hardware, a single forward pass for a large RGB color image can take several milliseconds~\citep{xu2024gtp}, resulting in per-input certification times on the order of hundreds of seconds~\citep{cohen2019certified, bhardwaj2024accelerated}, far exceeding the requirements of typical latency-sensitive applications like autonomous driving or speech recognition. This gap between certified robustness guarantees and practical deployment thus motivates the development of substantially more efficient randomized smoothing certification methods.

\paragraph{Contributions.}
We introduce \ours\ (Robust Radius Inference via a Surrogate Estimator), a computationally efficient framework for randomized-smoothing certification that replaces per-input MC sampling with a single surrogate forward pass. Our contributions are twofold.

\textbf{(i) A principled surrogate-training method for computationally efficient smoothing.} We fine-tune the base classifier to predict the smoothed class distribution under Gaussian noise, supervised by soft-label cross-entropy against finite-budget MC class-count targets. Since cross-entropy is linear in its target, its gradient is an unbiased estimate of the gradient at the realized MC target. Divergence-based alternatives used in prior offline-surrogate work~\citep{bhardwaj2024accelerated} are nonlinear in their first argument and incur a curvature-induced gradient bias (Appendix~\ref{app:rrise_vs_baseline4}). Fine-tuning, rather than training from scratch, lets the surrogate inherit the noise-invariant representations the base classifier has already learned through Gaussian-noise augmentation.

\textbf{(ii) A conformal calibration layer that yields deployment-verifiable certificates.} On a held-out calibration set, we compute a single scalar offset $\delta$ that, at inference time, converts the surrogate's top-class probability into a high-probability lower bound on the smoothed top-class probability --- and thus into a certified radius computed entirely from one surrogate forward pass. When this radius is positive, the surrogate's prediction provably matches the smoothed classifier's. The standard assumption underlying amortized certification --- that the surrogate's argmax agrees with the smoothed classifier's --- becomes a condition the practitioner can check at inference time, with one calibration covering the entire deployment.

The rest of this paper is organized as follows. After giving an overview of the problem background and related work (Section~\ref{sec:background}), we present the \textsc{RRISE} methodology in Section~\ref{sec:methodology} and evaluate it in Section~\ref{sec:experiments}. We discuss potential limitations of \textsc{RRISE} in Section~\ref{sec:discussion} before concluding in Section~\ref{sec:conclusion}.
\section{Background and Related Work}
\label{sec:background}

\subsection{Preliminaries}
\label{subsec:prelim}

Randomized smoothing (RS) \citep{cohen2019certified} constructs classifiers
with provable robustness against $\ell_2$-bounded adversarial perturbations.
Unlike empirical defenses, which remain vulnerable to adaptive
attacks~\citep{carlini2017adversarial, akhtar2018threat, tramer2020adaptive},
RS yields certified guarantees that hold for \emph{any} perturbation, no matter its source, within
a prescribed radius. The core idea is to convolve a base classifier with
isotropic Gaussian noise, producing a smoothed classifier whose decision
is provably stable in a neighborhood of each input.

Let $f:\R^d \to \{1,\dots,K\}$ be a base classifier trained with a
standard supervised objective. For a smoothing parameter $\sigma>0$ and input $\x$, RS
defines the smoothed class probabilities for each class $k$:
\begin{equation}
p(k \mid \x, \sigma) \;\triangleq\;
\Prob_{\bepsilon \sim \Normal(\mathbf{0}, \sigma^2 \mathbf{I})}\!\big(f(\x + \bepsilon) = k\big),
\label{eq:smoothed_prob}
\end{equation}
and the induced \emph{smoothed classifier}
$g(\x;\sigma) \triangleq \arg\max_{k} p(k \mid \x,\sigma)$ returns the
most likely class under noise. Letting $p_A = \max_k p(k \mid \x, \sigma)$
denote the smoothed top-class probability, \citet{cohen2019certified} prove
that whenever $p_A > 1/2$, the smoothed classifier $g$ is robust within
$\ell_2$-radius
\begin{equation}
R(\x;\sigma) \;\triangleq\; \sigma\,\Phi^{-1}(p_A),
\label{eq:rs_radius}
\end{equation}
in the sense that $g(\x + \bdelta;\sigma) = g(\x;\sigma)$ for all
$\|\bdelta\|_2 \leq R(\x;\sigma)$, where $\Phi^{-1}$ is the inverse
standard Gaussian CDF. Since $p_A$ cannot be computed in closed form, the standard approach is to estimate it via Monte Carlo (MC) sampling. Drawing $n$ noise vectors
$\bepsilon_j \sim \Normal(\mathbf{0},\sigma^2\mathbf{I})$, the perturbed
inputs $\x + \bepsilon_j$ are each classified by $f$, and the
most-frequently-predicted class $\hat{c}_A$ is taken as the prediction of the smoothed classifier $g(\x;\sigma)$.
The fraction of samples voting for that class,
$\hat p_A = \tfrac{1}{n}\sum_{j=1}^{n} \mathbf{1}\{f(\x+\bepsilon_j) = \hat c_A\}$,
is an empirical estimate of $p_A$. Since $\hat p_A$ is itself noisy, a
one-sided Clopper--Pearson lower confidence bound $\underline p_A \leq \hat p_A$
is used in place of $p_A$, yielding the high probability radius
$\hat R(\x;\sigma) \triangleq \sigma\,\Phi^{-1}(\underline p_A)$.

This procedure is statistically sound and broadly applicable, but its
cost scales with the per-input MC budget. \citet{cohen2019certified} use
up to $n = 10^5$ MC samples per certified ImageNet image, amounting to
over $1{,}500$ GPU-hours\footnote{Computed from the $\sim$110\,s per-image
certification time on a single NVIDIA RTX 2080 Ti reported in
\citet{cohen2019certified}.} to certify the 50K images.
This cost is structural, as $p_A$ is estimated from scratch at every input,
with no information shared across inputs. We organize the remainder of the paper around the question this raises:
\emph{can the dependence of the certificate on $p_A$ be amortized across
inputs, so that certifying a new input $\x$ no longer requires many forward
passes through $f$?} Section~\ref{sec:methodology} answers affirmatively by
training a neural surrogate that predicts the smoothed class distribution
directly, and in particular Section~\ref{sec:calibration} shows that a one-time
conformal calibration converts surrogate predictions into certified
radii with a high-probability coverage guarantee.

\subsection{Reliability, Smoothing, and Acceleration}
\label{subsec:related_work}

\paragraph{Pointwise reliability signals.}
Calibration, predictive entropy, Bayesian approximations, ensembles, distance-aware models, selective prediction, and out-of-distribution detectors provide useful pointwise reliability information for a given classification model~\citep{gal2016dropout,guo2017calibration,lakshminarayanan2017simple,geifman2017selective,maddox2019simple,liang2018enhancing,liu2020energy,liu2023simple}. These signals can often be computed with little additional cost per input, but they do not certify neighborhood invariance and therefore do not provide an instance-specific robustness radius.

\paragraph{Certified randomized smoothing.}
Randomized smoothing has been extended beyond the original Gaussian $\ell_2$ setting to additional norms, transformations, architectures, and smoothing distributions~\citep{lecuyer2019certified,cohen2019certified,li2019certified,yang2020randomized,fischer2020certified,pfrommer2023projected}. Another line of work studies data-dependent or input-adaptive smoothing levels~\citep{alfarra2022data}. These methods improve the flexibility or quality of smoothing certificates, but the certification step still relies on expensive per-input MC estimation of the smoothed class probabilities, making them difficult to deploy for latency-sensitive applications.

\paragraph{Reducing the Monte Carlo cost.}
Several methods reduce the online sampling burden without replacing MC certification entirely. Confidence-sequence and early-stopping approaches adaptively terminate sampling once the radius estimate is sufficiently stable~\citep{voracek2024treatment}. Input-specific budgeting methods allocate fewer samples to easy inputs and more samples to ambiguous ones~\citep{seferis2024estimating}. Incremental certification methods reuse information across related classifiers~\citep{ugare2024incremental}. These approaches reduce average sampling cost but still require noisy base-model evaluations at test time. The offline surrogate approach of \citet{bhardwaj2024accelerated} is closest to ours as it also trains a surrogate on precomputed MC targets. \ours differs in two ways: it uses a cross-entropy objective whose finite-budget loss is unbiased at fixed parameters, and it adds a conformal calibration layer that converts surrogate probabilities into conservative certified radii. Appendix~\ref{app:rrise_vs_baseline4} gives a detailed comparison.

\section{Methodology}
\label{sec:methodology}

Here, we describe \ours: a computationally efficient alternative to MC-based randomized smoothing certification. At its core is a learned surrogate $q_{\btheta}$ that predicts the
smoothed class distribution from the clean input, replacing $n$ noisy forward passes through $f$ with a single forward pass
through $q_{\btheta}$. A one-time calibration procedure converts
the surrogate's predictions into certified radii with a high-probability guarantee. We proceed to describe the surrogate and its training (Section~\ref{sec:training}),
and the calibration procedure (Section~\ref{sec:calibration}).

\subsection{Training the \ours\ Surrogate}
\label{sec:training}

\ours replaces the per-input MC estimate of $p(\cdot \mid \x, \sigma)$
with a learned predictor $q_{\btheta} : \R^d \to \Delta^{K-1}$ whose
$k$-th output approximates the smoothed class probability
in~\eqref{eq:smoothed_prob}: $q_{\btheta}(\x)_k \approx p(k \mid \x, \sigma)$.
The surrogate's argmax predicts the output of the smoothed classifier, and
its top-class probabilitiy estimates $p_A$. Concretely, the surrogate's
predicted class and top-class probability are
\begin{equation}
\hat g(\x) \;\triangleq\; \arg\max_k q_{\btheta}(\x)_k,
\qquad
q_A(\x) \;\triangleq\; \max_k q_{\btheta}(\x)_k,
\label{eq:surrogate_predictions}
\end{equation}
that mirror the smoothed classifier $g(\x;\sigma)$ and top-class
probability $p_A$ in~\eqref{eq:rs_radius}, but are computed in a single
forward pass rather than from $n$ noisy evaluations of $f$. We fix
$\sigma$ throughout and treat $q_{\btheta}$ as $\sigma$-specific and the framework can be extended to the multi-$\sigma$ setting.

We train $q_{\btheta}$ on a precomputed dataset of MC targets. For each
training input $\x_i$, we draw $n$ i.i.d. (independently and identically distributed) noise samples
$\bepsilon_{i,j} \sim \Normal(\mathbf{0}, \sigma^2 \mathbf{I})$ and form
the empirical smoothed distribution $\hat p_i \in \Delta^{K-1}$ with
$\hat p_{i,k} \triangleq \tfrac{1}{n}\sum_{j=1}^{n} \mathbf{1}\{f(\x_i + \bepsilon_{i,j}) = k\}$.
The dataset $\{(\x_i, \hat p_i)\}$ is fixed and reused across epochs and
hyperparameter sweeps. We initialize $q_{\btheta}$ as $f$, and fine-tune by minimizing the soft-target cross-entropy $\widehat{\mathcal{L}}(\btheta) \;\triangleq\;
\frac{1}{|\mathcal{B}|} \sum_{i \in \mathcal{B}}
\mathsf{CE}\big(\hat p_i,\, q_{\btheta}(\x_i)\big)$
on minibatches $\mathcal{B}$. 

Since $\hat p_i$ is itself an empirical
estimate of $p(\cdot \mid \x_i, \sigma)$, training against it incurs an
$O(1/\sqrt n)$ estimation bias in the target that is easily controllable. In Appendix~\ref{app:ablation_list} we report how the surrogate performance fluctuates with the $n$ used for dataset construction, and in Appendix~\ref{app:rrise_theoretical_strength} and~\ref{app:rrise_vs_baseline4} we show that cross entropy loss provides unbiased gradients with respect to the estimated distribution $\widehat{p}$ whereas alternative divergences used by existing approaches do not. In our experiments, $q_{\btheta}$ uses the same architecture as the base classifier, is initialized from the base classifier, and trains only the estimator head; Appendix~\ref{app:ablation_list} ablates this choice against end-to-end and random-initialized variants.

\subsection{From Surrogate Predictions to Certified Radii}
\label{sec:calibration}

We now turn the surrogate's predictions into certified radii with a
provable lower-bound guarantee. A single forward pass through
$q_{\btheta}$ yields the predicted class $\hat g(\x)$ and top
probability $q_A(\x)$ in~\eqref{eq:surrogate_predictions}. Since the
true radius $R(\x;\sigma) = \sigma\,\Phi^{-1}(p_A(\x))$ is monotone in
the smoothed top probability $p_A(\x)$, any high-probability lower
bound on $p_A(\x)$ immediately yields a high-probability lower bound on
$R(\x;\sigma)$. We thus reduce the certification problem to
lower-bounding $p_A(\x)$ from $q_A(\x)$, and address it with a one-time
conformal calibration~\citep{shafer2008tutorial,lei2018distribution}
that determines a single parameter $\delta \geq 0$ on a held-out set such that
$q_A(\x) - \delta \leq p_A(\x)$ with high probability. The calibrated
radius
\begin{equation}
\widetilde R(\x;\sigma) \;\triangleq\; \sigma\,\Phi^{-1}\!\big(q_A(\x) - \delta\big)
\end{equation}
is then a lower bound on $R(\x;\sigma)$, computed from the same forward
pass that produces $\hat g(\x)$. As a result, certification at deployment is label-free, MC-free, and requires no test-time sampling.

The key observation enabling this reduction is that, by definition of the smoothed argmax, $p_A(\x) \geq p(\hat g(\x) \mid \x, \sigma)$ for
any $\x$, with equality whenever $\hat g(\x) = g(\x;\sigma)$.
Calibrating against the surrogate's own argmax $\hat g$ therefore always yields a valid lower bound on $p_A(\x)$. 

\begin{restatable}[Calibrated lower bound on the smoothed top probability]{proposition}{calibrationprop}
\label{prop:calibration}
Fix confidence parameters $\beta, \gamma \in (0,1)$. Let
$\{\x_i^{\mathrm{cal}}\}_{i=1}^M$ be a calibration set drawn i.i.d.\
from the test distribution, disjoint from surrogate training. For each
calibration point, draw $n$ noise samples
$\bepsilon_{i,j} \sim \Normal(\mathbf{0}, \sigma^2 \mathbf{I})$ and let
$\underline p_i$ be the one-sided Clopper--Pearson lower bound at
confidence $1-\beta$ on
$p(\hat g(\x_i^{\mathrm{cal}}) \mid \x_i^{\mathrm{cal}}, \sigma)$.
Define residuals $r_i = q_A(\x_i^{\mathrm{cal}}) - \underline p_i$ and
set $\delta$ to the $\lceil(M+1)(1-\gamma)\rceil$-th smallest of
$r_1,\dots,r_M$. Then for an independent test point $\x$,
\begin{equation}
\Prob\big[\,p_A(\x) \geq q_A(\x) - \delta\,\big]
\;\geq\;
1 - \beta - \gamma.
\label{eq:coverage}
\end{equation}
\end{restatable}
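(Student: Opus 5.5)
The plan is to introduce a \emph{ghost} residual for the test point and run the standard split-conformal argument on it. The surrogate $q_{\btheta}$ is trained on data disjoint from the calibration set, so throughout we condition on it and treat $q_{\btheta}$, and hence $\hat g$ and $q_A$, as fixed deterministic functions.

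\textbf{Ghost construction.} For the test point $\x$, imagine drawing $n$ fresh noise samples $\bepsilon_j \sim \Normal(\mathbf{0},\sigma^2\mathbf{I})$, independent of everything else. Let $\underline p$ be the one-sided Clopper--Pearson lower bound at confidence $1-\beta$ on $p(\hat g(\x)\mid \x,\sigma)$, and set $r = q_A(\x) - \underline p$. This quantity is never computed at deployment; it is only an analytical device.

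\textbf{Step 1: exchangeability.} Each pair $(\x_i^{\mathrm{cal}}, \{\bepsilon_{i,j}\}_j)$ is i.i.d. across $i$, and the test pair $(\x,\{\bepsilon_j\})$ is drawn independently from the same product law. Each residual is the same measurable map applied to its pair. Hence $r_1,\dots,r_M,r$ are i.i.d., and in particular exchangeable. The usual rank argument then gives
\[
\Prob[r \le \delta] \ge 1-\gamma,
\]
where $\delta$ is the $\lceil (M+1)(1-\gamma)\rceil$-th smallest calibration residual. Ties only help, since the event is stated with $\le$. If $\lceil (M+1)(1-\gamma)\rceil > M$, we adopt the convention $\delta=+\infty$, which makes the bound trivial.

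\textbf{Step 2: validity of the Clopper--Pearson bound.} Conditional on $\x$, the class $\hat g(\x)$ is fixed. The count $\sum_j \mathbf{1}\{f(\x+\bepsilon_j)=\hat g(\x)\}$ is then $\mathrm{Binomial}(n, p(\hat g(\x)\mid\x,\sigma))$. The exact coverage of Clopper--Pearson gives
\[
\Prob\big[\underline p \le p(\hat g(\x)\mid \x,\sigma)\,\big|\,\x\big] \ge 1-\beta,
\]
and integrating over $\x$ makes this hold unconditionally. Crucially, we never need the calibration points' own Clopper--Pearson bounds to be valid. Their failures only shift $\delta$, which Step 1 already accounts for.

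\textbf{Step 3: combine the two events.} On the event $\{r\le\delta\}\cap\{\underline p \le p(\hat g(\x)\mid\x,\sigma)\}$ we have
\[
q_A(\x)-\delta \le \underline p \le p(\hat g(\x)\mid\x,\sigma) \le p_A(\x).
\]
The last inequality is the key observation stated before the proposition: $p_A$ is the maximum over classes of $p(k\mid\x,\sigma)$. A union bound over the complements of the two events gives a failure probability of at most $\gamma+\beta$, which is \eqref{eq:coverage}.

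\textbf{Main obstacle.} The delicate step is Step 1. One must argue carefully that the residual distribution is identical for calibration and test points. This requires the noise draws to be independent and identically sized ($n$ samples each), and $q_{\btheta}$ to be independent of the calibration set. The ghost construction is what lets the Clopper--Pearson randomness enter symmetrically, rather than comparing $q_A - \delta$ directly to the true $p_A$.
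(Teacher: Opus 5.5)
Your proposal is correct and follows the paper's proof essentially step for step. Both arguments use a ghost Clopper--Pearson bound at the test point, and both obtain exchangeability of the residuals by applying a common measurable map to i.i.d.\ input--noise pairs. Both then invoke the split-conformal quantile bound and chain the two events through $p(\hat g(\x)\mid\x,\sigma)\le p_A(\x)$ with a union bound.

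Your additions are harmless refinements: conditioning on $q_{\btheta}$, and the $\delta=+\infty$ convention when $\lceil (M+1)(1-\gamma)\rceil>M$. The paper's deployed offset $\delta=\max\{0,r_{(k)}\}$ versus your $\delta=r_{(k)}$ changes nothing, since the argument only needs $\delta\ge r_{(k)}$.
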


\begin{proof}[Proof sketch]
We have
$p(\hat g(\x) \mid \x, \sigma) \leq p_A(\x)$ for any $\x$. A
Clopper--Pearson lower bound $\underline p_{\mathrm{test}}$ on
$p(\hat g(\x) \mid \x, \sigma)$ from $n$ fresh noise samples satisfies
$\underline p_{\mathrm{test}} \leq p(\hat g(\x) \mid \x, \sigma)$ with
probability $\geq 1-\beta$. Since calibration and test residuals are
exchangeable, the conformal guarantee gives
$q_A(\x) - \delta \leq \underline p_{\mathrm{test}}$ with probability
$\geq 1-\gamma$. A union bound chains these as
$q_A(\x) - \delta \leq p(\hat g(\x) \mid \x, \sigma) \leq p_A(\x)$ with
probability $\geq 1-\beta-\gamma$. The full proof is in
Appendix~\ref{app:calibration_proof}.
\end{proof}

In Proposition~\ref{prop:calibration}, $\beta$ controls the
Clopper--Pearson noise from calibration sampling and $\gamma$ the
conformal slack absorbing the surrogate's over-prediction; their sum
$\beta + \gamma$ is the total miscoverage budget. Any allocation with
$\beta + \gamma = \alpha$ yields a $(1-\alpha)$-confidence lower bound
on $R(\x;\sigma)$ from a single forward pass through $q_{\btheta}$, in contrast to direct MC-based certification, which scales the
per-input noise budget $n$ to reach the same confidence at deployment. 

Moreover, the proposition certifies that with high probability, the smoothed classifier $g(\cdot;\sigma)$
is constant on the ball of radius $\widetilde R(\x;\sigma)$ around $\x$.
Deployment, meanwhile, returns the surrogate's prediction $\hat g(\x)$
as a fast stand-in for $g(\x;\sigma)$.
To bridge the two, we can consider the
argmax-agreement event $E(\x) \;\triangleq\; \{\hat g(\x) = g(\x;\sigma)\}$. A na\"ive guarantee for $\hat g$ would \emph{assume} $E(\x)$ at the
center and conclude that $\hat g(\x)$ inherits the smoothed
classifier's certificate. Such an assumption unfortunately cannot be verified at
deployment as checking $\hat g(\x) = g(\x;\sigma)$ requires evaluating
$g(\x;\sigma)$, which is precisely the expensive MC computation we
seek to avoid. The following corollary inverts this dependence, demonstrating that $E(\x)$ becomes a \emph{consequence} of a deployment-observable
condition, namely, the certified radius being positive.

\begin{restatable}[Surrogate prediction matches the smoothed classifier on positive radii]{corollary}{argmaxcorollary}
\label{cor:argmax_agreement}
Under the conditions of Proposition~\ref{prop:calibration}, if
$q_A(\x) - \delta > 1/2$, then with probability $\geq 1 - \beta - \gamma$
the surrogate's prediction $\hat g(\x)$ coincides with the smoothed
classifier's prediction $g(\x;\sigma)$, and $g(\cdot;\sigma)$ is
constant with value $\hat g(\x)$ on the ball
$\{\x' : \|\x' - \x\|_2 \leq \widetilde R(\x;\sigma)\}$.
\end{restatable}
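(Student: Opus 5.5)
The argument works on the good event of Proposition~\ref{prop:calibration}, which holds with probability at least $1-\beta-\gamma$. The point is to prove something slightly stronger than the displayed coverage statement. The proof sketch of the proposition establishes the chain
\[
q_A(\x) - \delta \;\leq\; p(\hat g(\x) \mid \x, \sigma) \;\leq\; p_A(\x)
\]
on a single event $G$ of probability at least $1-\beta-\gamma$. I would first restate the proposition in this form, so that the lower bound applies to the smoothed mass of the \emph{surrogate's own class} and not merely to $p_A(\x)$. This costs nothing extra: the union bound in the proposition's proof already delivers the left inequality, and the right inequality is deterministic.

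Next, fix $\x$ with $q_A(\x) - \delta > 1/2$. On $G$ we then have $p(\hat g(\x) \mid \x, \sigma) > 1/2$. The smoothed probabilities $p(\cdot \mid \x,\sigma)$ sum to one, so every other class $k \neq \hat g(\x)$ has $p(k\mid \x,\sigma) < 1/2$. Hence $\hat g(\x)$ is the unique argmax, which gives $g(\x;\sigma) = \hat g(\x)$ and $p_A(\x) = p(\hat g(\x)\mid\x,\sigma) \geq q_A(\x)-\delta$. This is the argmax-agreement event $E(\x)$. The strict majority is exactly what makes the argmax unambiguous, including under any tie-breaking convention.

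For the robustness part, I would invoke the certification theorem of \citet{cohen2019certified} in its lower-bound form. If some class $c_A$ satisfies $p(c_A\mid\x,\sigma) \geq \underline{p} > 1/2$, then $g(\x+\bdelta;\sigma) = c_A$ for all $\|\bdelta\|_2 < \sigma\Phi^{-1}(\underline p)$. This version uses $p_B \leq 1-\underline p$ as the bound on the runner-up. Applying it with $c_A = \hat g(\x)$ and $\underline p = q_A(\x)-\delta$ yields constancy on the ball of radius $\widetilde R(\x;\sigma)$. If the closed ball is wanted, note that the Cohen et al.\ bound is continuous: at the boundary, the worst-case half-space argument still gives $p(c_A\mid\x+\bdelta,\sigma) \geq 1/2$ with strict inequality over the other classes. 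Alternatively, one can rely on the paper's eq.~\eqref{eq:rs_radius} convention with $\leq$ together with monotonicity of $\Phi^{-1}$, since $\widetilde R \leq R$.

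The main obstacle is subtle rather than computational. The proposition as displayed only controls $p_A(\x)$, and $p_A(\x) > 1/2$ alone would certify $g$'s own class, not identify it with $\hat g(\x)$. The proof must therefore explicitly pass through the intermediate quantity $p(\hat g(\x)\mid\x,\sigma)$ on the same high-probability event. A second point needs a remark: the conditioning on the observable event $q_A(\x)-\delta>1/2$. I would interpret the claim as holding on the intersection of $G$ with that event, i.e.\ $\Prob[\{q_A-\delta>1/2\}\cap G^c]\leq\beta+\gamma$, rather than as a conditional probability, which conformal coverage does not directly provide.
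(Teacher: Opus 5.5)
Your proposal is correct and is essentially the paper's proof: work on the single event of \eqref{eq:proof_chain}, use $p(\hat g(\x)\mid\x,\sigma)\ge q_A(\x)-\delta>1/2$ to force $\hat g(\x)=g(\x;\sigma)$, then invoke the Cohen et al.\ certificate together with monotonicity of $\Phi^{-1}$; your intersection reading of the probability is also exactly what that argument establishes. One small correction: your first justification for the closed ball fails at the boundary, because at $\|\bdelta\|_2=\widetilde R(\x;\sigma)$ the worst-case half-space bound only yields $p(\hat g(\x)\mid\x+\bdelta,\sigma)\ge 1/2\ge p(c\mid\x+\bdelta,\sigma)$ for $c\neq\hat g(\x)$, so an exact tie can occur when $p(\hat g(\x)\mid\x,\sigma)=q_A(\x)-\delta$; your fallback to the $\le$ convention of \eqref{eq:rs_radius} is the route the paper itself takes.
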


Together, the clauses of the corollary turn the surrogate into a deployable
certified classifier. The first clause --- prediction-match at the
center --- closes the gap between Proposition~\ref{prop:calibration}
(which certifies $g(\cdot;\sigma)$) and deployment (which queries
$\hat g$). When the radius is positive, the surrogate's prediction at
$\x$ is provably the same prediction the smoothed classifier would
have made. The second clause --- constancy on the ball with value
$\hat g(\x)$ --- is the standard randomized-smoothing
certificate~\citep{cohen2019certified}, now anchored to the value
returned by the single forward pass of the surrogate.

\section{Experimental Evaluation}
\label{sec:experiments}

\paragraph{Evaluation Goals and Baselines}
\label{subsec:exp_goals}
We organize the evaluation around three questions that are answered explicitly in the results below.
\textbf{Q1: Certified accuracy.} Does \ours\ preserve the certified accuracy of fixed-budget MC randomized smoothing while replacing repeated noisy evaluations with one surrogate forward pass at inference?
\textbf{Q2: Boundary-radius reliability.} After calibration, does \ours\ avoid inflated radii in the boundary-confidence regime, where small probability errors can change whether an input is certified?
\textbf{Q3: Computational break-even.} After accounting for offline target construction and surrogate training, how many deployment queries are required before \ours\ becomes cheaper than MC-based certification?
We compare against four baselines.

\textbf{Baseline~1} is fixed-budget MC randomized smoothing~\citep{cohen2019certified}, which draws \(n\) noisy samples per input and certifies via a one-sided Clopper--Pearson lower bound.
\textbf{Baseline~2} is an input-specific sample-budgeting method following \citet{seferis2024estimating}, using a pilot estimate and budget-mapping rule to reduce noisy evaluations while still certifying from realized count evidence.
\textbf{Baseline~3} is a budget-prediction and early-stopping method inspired by \citet{voracek2024treatment}; the stopping rule is adaptive, but the final radius is again computed from a Clopper--Pearson lower bound.
\textbf{Baseline~4} is the offline Jensen--Shannon divergence surrogate of \citet{bhardwaj2024accelerated}, which shares \ours's MC class-count targets but has no calibration procedure. We equip it with \ours's conformal calibration for a fair radius comparison.

These baselines span the relevant design space: Baseline~1 tests whether \ours\ preserves the reference MC certificate; Baselines~2 and~3 test whether full surrogate inference provides additional savings beyond adaptive MC sampling; and Baseline~4 tests whether the calibrated surrogate design in Section~\ref{sec:methodology} improves over an offline surrogate trained from MC class-count targets.
Unless otherwise stated, all methods use an MC budget of \(n=10{,}000\), with Baselines~2--3 in their tighter \(1\%\) configuration: Baseline~2 uses decline level \(0.01\), and Baseline~3 uses stopping tolerance \(0.01\). These hyperparameters make the adaptive MC baselines more conservative by allowing less approximation slack before continuing sampling.
\ours\ is initialized from the base classifier with only the prediction head trained; Baseline~4 uses random initialization per its original setting.
Appendix~\ref{app:additional_results} ablates MC budget, training strategy, calibration level, and baseline hyperparameters. 

\paragraph{Experimental Setup}
\label{subsec:experimental_setup}
We evaluate on FashionMNIST, CIFAR-10, CIFAR-100, and Tiny ImageNet using MLP-Mixer-Tiny, ResNet-18 with a CIFAR-style stem, EfficientNet-B0, and ViT-Tiny, respectively; the \ours\ surrogate inherits each base architecture.
Following standard randomized-smoothing practice, base classifiers are trained with Gaussian noise augmentation at level \(\sigma_{\mathrm{base}}\), and certification, as well as surrogate-target construction, uses smoothing level \(\sigma\):
\[
(\sigma_{\mathrm{base}},\sigma)=(0.5,0.25)
\quad\text{on FashionMNIST and CIFAR-10,}
\]
and
\[
(\sigma_{\mathrm{base}},\sigma)=(0.25,0.10)
\quad\text{on CIFAR-100 and Tiny ImageNet.}
\]
The \ours\ offline target dataset stores the normalized class-count vector obtained by evaluating the frozen base classifier under noisy perturbations of each training input.
The surrogate is trained on clean inputs with the cross-entropy objective from Section~\ref{sec:methodology}; model selection follows the cross-validation procedure in Appendix~\ref{app:implementation_details}.

\paragraph{Calibration and Confidence Matching}
\label{subsec:exp_calibration}
For MC baselines, certificates use one-sided Clopper--Pearson lower bounds at failure level \(\alpha_{\mathrm{MC}}\).
For \ours\ and Baseline~4, we use the calibration procedure of Section~\ref{sec:calibration}.
A \(10\%\) calibration split estimates the scalar offset \(\delta\), and the same offset is used for reporting.
In the main comparison, \(\alpha_{\mathrm{MC}}=0.25\) for Baselines~1--3 and \(\beta_{\mathrm{sur}}=0.001\), \(\gamma_{\mathrm{sur}}=0.249\) for surrogate methods, so \(\beta_{\mathrm{sur}}+\gamma_{\mathrm{sur}}=0.25\).
Appendix~\ref{app:additional_results} includes stricter surrogate calibration levels \(\beta_{\mathrm{sur}}+\gamma_{\mathrm{sur}}\in\{0.10,0.05,0.01\}\). Throughout this section, we denote $\widetilde{p}_A(\mathbf{x})$ the method-specific lower bound on the smoothed top-class probability used for certification: the one-sided Clopper--Pearson lower bound for MC baselines, and the calibrated quantity $q_A(\mathbf{x}) - \delta$ for RRISE and Baseline 4. 

\paragraph{Metrics}
\label{subsec:exp_metrics}
All results are reported as mean \(\pm\) standard deviation over seeds \(\{100,200,300\}\).
Let \(\hat y(\x)\) denote the method's predicted class, \(y(\x)\) the ground-truth label, \(\ptildeA(\x)\) the method-specific lower top-probability estimate, and \(\Rtilde(\x)\) the corresponding calibrated or MC-certified radius.
Certified accuracy at threshold \(r\) is
\begin{equation}
 \mathrm{CertAcc}(r)
 =
 \frac{1}{|\mathcal D_{\mathrm{test}}|}
 \sum_{\x\in\mathcal D_{\mathrm{test}}}
 \bone\!\left[
 \hat y(\x)=y(\x),\
 \ptildeA(\x)>\frac12,\
 \Rtilde(\x)\ge r
 \right].
 \label{eq:certacc_metric}
\end{equation}
CertAcc@0 is the fraction of test inputs that are simultaneously correctly classified and certified with positive lower top probability; higher values indicate that the method certifies more of the test set with a non-trivial radius.

To study behavior near the certification boundary, we consider the boundary-confidence subset
\begin{equation}
\mathcal B
\triangleq
\{\x\in\mathcal D_{\mathrm{test}}:0.5<\ptildeA(\x)<0.75\}.
\end{equation}
Boundary Mass is \(|\mathcal B|/|\mathcal D_{\mathrm{test}}|\), the fraction of test inputs the method places in the diagnostic region just above the certification threshold.
A method that places too few inputs in \(\mathcal B\) may be over-confident, while a method that places too many may be under-confident; therefore boundary mass should be read jointly with average radius and CertAcc.
OCA denotes ordinary classification accuracy on the specified subset; in the boundary tables it is
\begin{equation}
 \mathrm{OCA}(\mathcal B)
 =
 \frac{1}{|\mathcal B|}
 \sum_{\x\in\mathcal B}
 \bone[\hat y(\x)=y(\x)],
\end{equation}
when \(|\mathcal B|>0\), and is undefined otherwise.
Avg. Radius is the average of \(\Rtilde(\x)\) on the same subset.

The certified-radius distribution (CRD) is reported in two complementary forms.
Boundary CRD measures the fraction of all test inputs that are both in \(\mathcal B\) and have radius above threshold \(t\):
\begin{equation}
 \mathrm{CRD}_{\mathrm{bdry}}(t)
 =
 \frac{1}{|\mathcal D_{\mathrm{test}}|}
 \sum_{\x\in\mathcal D_{\mathrm{test}}}
 \bone[\x\in\mathcal B,\ \Rtilde(\x)>t].
\end{equation}
Higher boundary CRD at threshold \(t\) means more borderline inputs are certified at radius at least \(t\); lower values mean fewer, either because the method has low boundary mass or because radii inside \(\mathcal B\) are compressed.
Boundary inputs are the most diagnostic of radius fidelity, since small probability errors in this region can flip whether the certificate is positive.
Certified-input CRD is instead conditioned on \(\ptildeA(\x)>1/2\) and describes the radius distribution among certified inputs.
Computational cost is measured in forward-pass equivalents, with one backward pass counted as two; break-even includes offline target construction, training to the selected checkpoint, and per-query test cost.

\paragraph{Q1: Certified accuracy.}
\label{subsec:q1_certified_accuracy}
Figure~\ref{fig:main_cert_boundary} shows that \ours\ closely preserves the fixed-budget MC certificate while replacing \(10{,}000\) noisy base-model evaluations with one surrogate forward pass at inference.
Its CertAcc@0 gaps relative to Baseline~1 are only \(0.19/0.62/0.84/0.55\) percentage points on FashionMNIST/CIFAR-10/CIFAR-100/Tiny ImageNet:
\ours\ certifies \(87.44/70.26/33.91/27.43\%\), compared with \(87.63/70.88/34.75/27.98\%\) for Baseline~1.
The small undercertification is expected because the conformal offset \(\delta\) shrinks each surrogate top-probability estimate before radius computation, converting a single surrogate prediction into a conservative lower bound on \(p_A(\x)\).

\begin{figure*}[t]
\centering
\includegraphics[width=\textwidth]{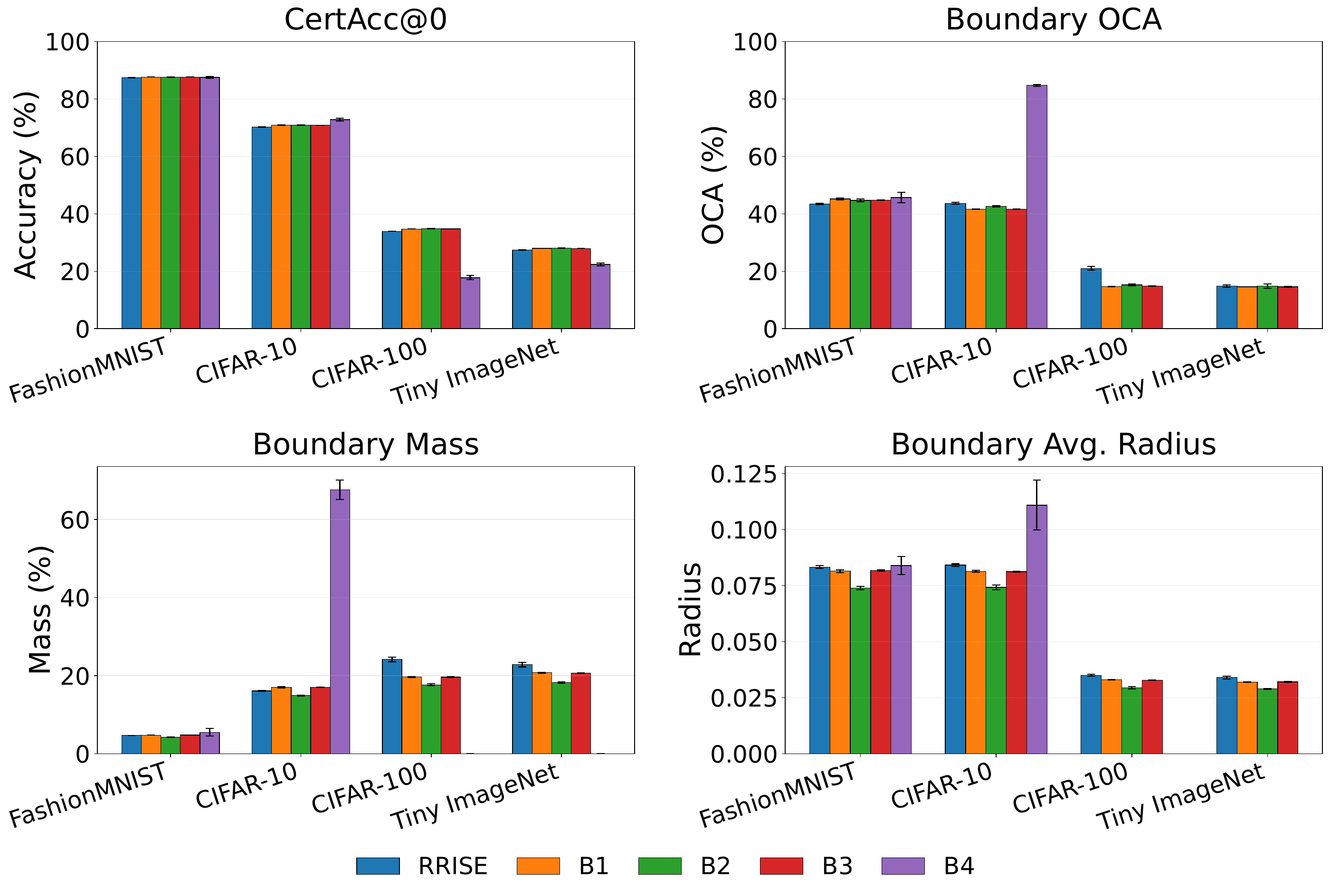}
\caption{\textbf{Certified accuracy and boundary behavior.} The four panels report: top-left, full-test CertAcc@0; top-right, ordinary classification accuracy on the boundary-confidence subset \(0.5<\ptildeA(\x)<0.75\); bottom-left, boundary mass; and bottom-right, average certified radius within the boundary subset. \ours\ closely tracks Baseline~1 in CertAcc@0 and average boundary radius across all four datasets. On CIFAR-100 and Tiny ImageNet, \ours\ places slightly more examples in the boundary region, reflecting conservative calibration: ambiguous inputs remain near the certification threshold rather than being assigned inflated confidence or radii. Computational cost is reported separately in Figure~\ref{fig:main_compute}.}
\label{fig:main_cert_boundary}
\end{figure*}

\paragraph{Q2: Boundary-radius reliability.}
\label{subsec:q2_boundary_reliability}
The boundary metrics---OCA, boundary mass, and average radius on \(\mathcal B\)---test whether calibration creates inflated radii near the certification threshold.
\ours\ matches Baseline~1's average boundary radius to two decimals on every dataset \((0.08/0.08/0.03/0.03)\), showing that the surrogate does not inflate radii in the region where probability errors are most consequential.
On the harder datasets, \ours\ places slightly more inputs in \(\mathcal B\) than Baseline~1: \(24.16\%\) vs.\ \(19.67\%\) on CIFAR-100 and \(22.85\%\) vs.\ \(20.76\%\) on Tiny ImageNet.
This is conservative: ambiguous inputs remain near the certification threshold rather than being pushed into high-confidence regions with inflated radii.
Boundary OCA remains competitive and improves over Baseline~1 by \(6.31\) percentage points on CIFAR-100, indicating that the inputs treated as borderline are often still correctly classified.
These boundary metrics are diagnostics of calibrated probability geometry; formal certificate validity follows from Proposition~\ref{prop:calibration}.

\paragraph{Comparison with the offline Jensen--Shannon surrogate.}
Baseline~4 also uses one surrogate forward pass at inference, so the comparison isolates surrogate quality rather than online cost.
On CIFAR-10, Baseline~4 attains higher CertAcc@0 \((72.79\%\) vs.\ \(70.26\%)\), but places \(67.58\%\) of test inputs in \(\mathcal B\), about four times Baseline~1's \(16.99\%\) and \ours's \(16.10\%\).
This is a warning sign rather than a strength: two-thirds of the test set remains in the diagnostic boundary region instead of being confidently certified, consistent with biased gradients in the Jensen--Shannon objective failing to converge to the smoothed distribution (Appendix~\ref{app:rrise_vs_baseline4}).
On the harder datasets the failure is sharper: CertAcc@0 drops to \(17.76\%\) on CIFAR-100 and \(22.37\%\) on Tiny ImageNet, and the calibrated outputs yield zero boundary mass.
Figure~\ref{fig:main_boundary_crd} visualizes this directly: the boundary CRD curves for Baseline~4 collapse to zero on the harder datasets, while \ours\ tracks the MC baselines closely on FashionMNIST and CIFAR-10 and remains well-defined throughout.

\begin{figure*}[t]
\centering
\includegraphics[width=\textwidth]{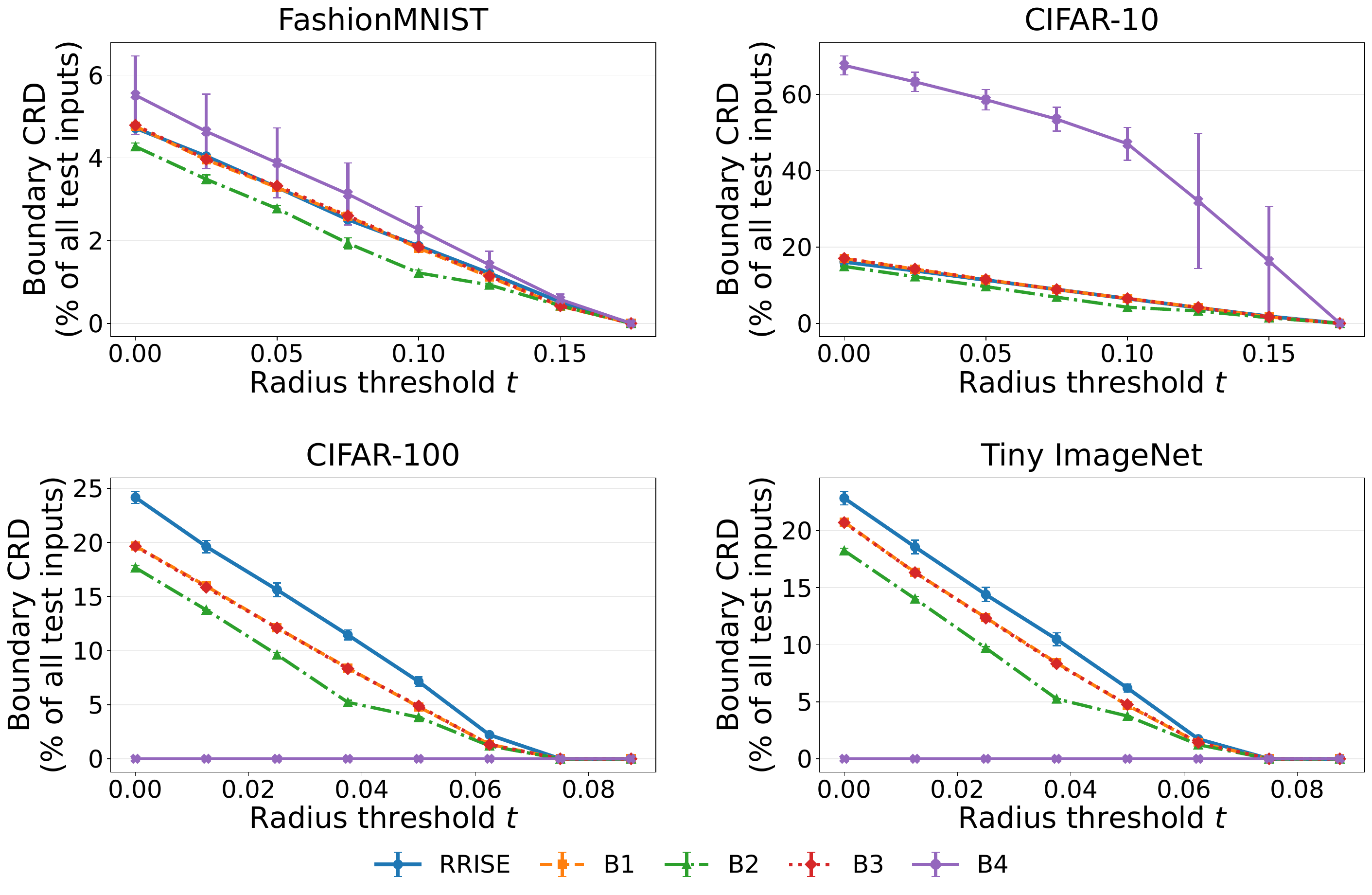}
\caption{\textbf{Boundary-confidence radius distributions.} Each curve plots the fraction of test inputs satisfying \(0.5<\ptildeA(\x)<0.75\) and \(\Rtilde(\x)>t\). Higher curves mean that more borderline inputs are certified at radius at least \(t\). \ours\ tracks the MC baselines on FashionMNIST and CIFAR-10 and remains well-defined on CIFAR-100 and Tiny ImageNet; Baseline~4 collapses on the harder datasets.}
\label{fig:main_boundary_crd}
\end{figure*}

\paragraph{Q3: Computational break-even.}
\label{subsec:q3_compute}
Figure~\ref{fig:main_compute} reports online cost and cumulative savings after accounting for offline MC target construction and surrogate training.
Let \(C_{Bk}(m)\) denote the total cost of Baseline~\(k\) after \(m\) deployment queries, and let \(C_{\mathrm{RRISE}}(m)\) denote the corresponding total cost of \ours.
We plot
\[
S_{\mathrm{RRISE}\leftarrow Bk}(m)
=
C_{Bk}(m)-C_{\mathrm{RRISE}}(m),
\]
where positive values mean that \ours\ has recovered its offline cost and is cheaper overall.

\begin{figure*}[t]
\centering
\includegraphics[width=\linewidth]{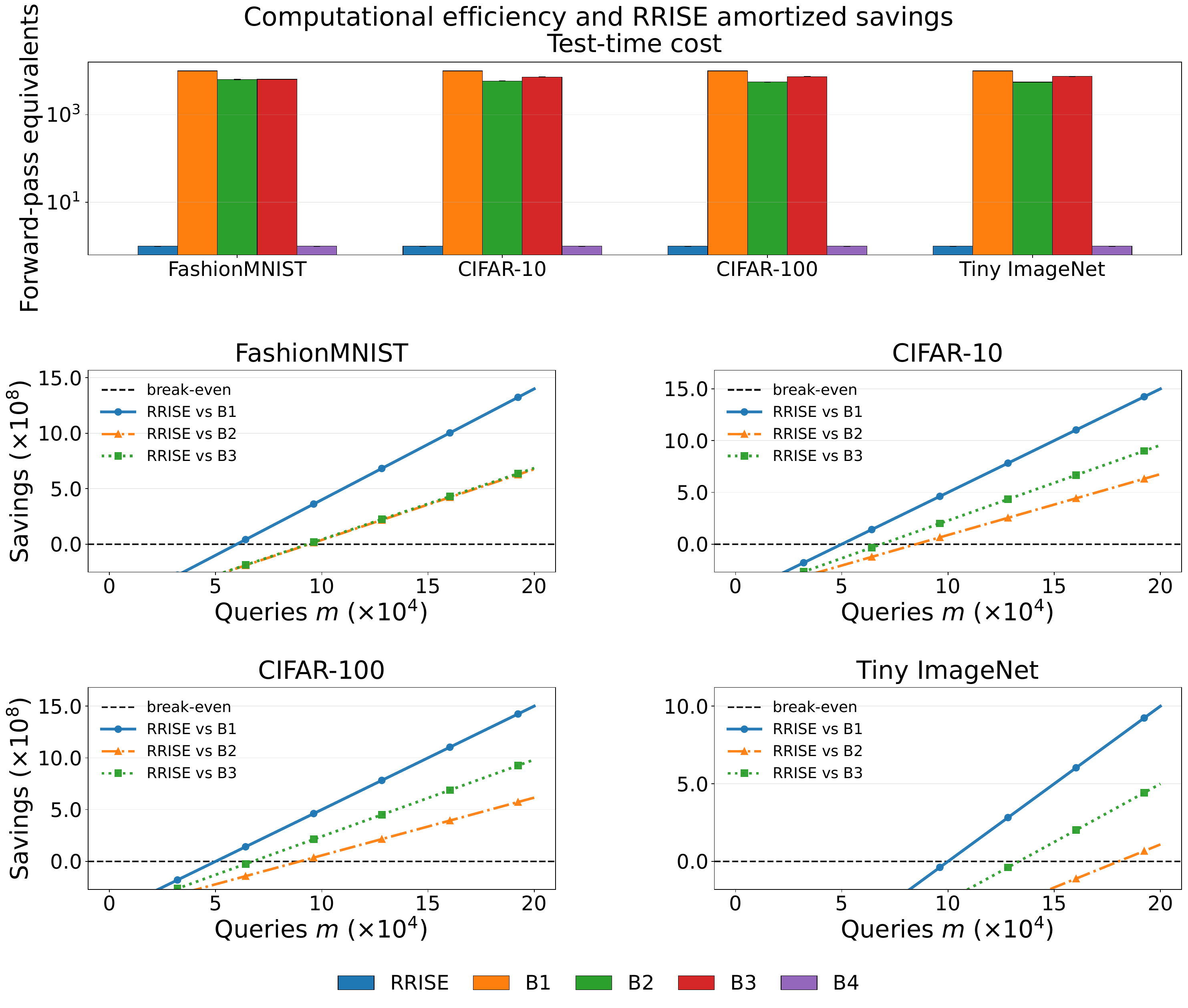}
\caption{\textbf{Computational cost and break-even.} Top: average online forward-pass equivalents per input. Bottom: cumulative savings of \ours\ over Baselines~1--3; the zero line marks break-even. Break-even against full-budget MC occurs at \(5\text{--}10\times10^4\) queries and against partial-sampling accelerators at \(1.8\text{--}2.1\times10^5\) queries.}
\label{fig:main_compute}
\end{figure*}

Against fixed-budget MC, \ours\ breaks even at approximately \(6\times10^4\) queries on FashionMNIST, \(5\times10^4\) on CIFAR-10 and CIFAR-100, and \(10^5\) on Tiny ImageNet.
Break-even is later against Baselines~2 and~3 because they already reduce average online sampling: on CIFAR-10 they use \(5883\) and \(7281\) noisy passes per input on average, and \ours\ breaks even at approximately \(2.1\times10^5\) and \(1.8\times10^5\) queries, respectively.
Thus, \ours\ is most useful in repeated-deployment settings where many inputs are certified under the same base classifier and smoothing distribution.

\paragraph{Main Results Summary.}
\label{subsec:main_results}
The experiments answer the three evaluation questions affirmatively.
First, \ours\ preserves fixed-budget MC certified accuracy: across all datasets, CertAcc@0 remains within \(0.84\) percentage points of Baseline~1 while reducing online certification from \(10{,}000\) noisy base-model evaluations to one surrogate forward pass.
Second, calibration does not inflate boundary radii: \ours\ matches Baseline~1's average boundary radius on every dataset and keeps ambiguous inputs near the certification threshold on the harder datasets, which is the conservative direction.
Third, the offline cost is recovered in repeated deployment: \ours\ breaks even against full-budget MC after roughly \(5\text{--}10\times10^4\) queries, and against adaptive MC baselines after roughly \(1.8\text{--}2.1\times10^5\) queries.
Together, these results show that \ours\ is not merely a faster proxy for randomized smoothing; it is a calibrated, sampling-free inference mechanism that preserves MC certification behavior while becoming substantially cheaper once certification is performed repeatedly.

\paragraph{Ablations.}
Appendix~\ref{app:additional_results} provides full numerical support tables and ablations over MC target budget, \ours\ training strategy, calibration level, Baseline~2 decline level, Baseline~3 stopping tolerance, and Baseline~4 initialization.
The appendix also reports the complete CRD tables over all evaluated thresholds.
\section{Discussion}
\label{sec:discussion}
\ours is designed for repeated certification under a fixed base classifier, smoothing distribution, and deployment domain. In this regime, the offline MC target-construction cost is amortized across many future queries, and each new certificate is obtained with one surrogate forward pass. This differs from adaptive MC methods, which reduce average sampling cost but still evaluate the base classifier under noisy perturbations at test time. The main limitation is the upfront cost of constructing offline targets. Although this cost is amortizable, it is still substantial for large training sets or large MC budgets. Active target selection, curriculum-style target generation, or multi-fidelity targets could reduce this cost. A second limitation is calibration under distribution shift. The conformal guarantee assumes exchangeability between calibration and deployment inputs; in practice, deployment monitoring or recalibration may be necessary when the input distribution changes. Third, our experiments use one target smoothing level per dataset. A noise-conditioned surrogate could share information across multiple $\sigma$ values and support input-adaptive smoothing, but would require additional calibration care. Finally, our empirical study focuses on image classification. 

 
\section{Conclusion}
\label{sec:conclusion}

We presented \ours, an accelerated framework for randomized-smoothing robustness-radius inference. \ours trains a surrogate on offline MC class-count targets and uses conformal calibration to turn the surrogate output into a conservative lower bound on the smoothed top-class probability. The resulting method replaces thousands of online noisy base-model evaluations with one surrogate forward pass while closely preserving fixed-budget MC certified accuracy. The empirical results show that acceleration is a practical path toward scalable smoothing certification when many inputs are certified under the same base model and smoothing distribution.



\bibliographystyle{unsrtnat}
\bibliography{references}

@inproceedings{guo2017calibration,
  title={On calibration of modern neural networks},
  author={Guo, Chuan and Pleiss, Geoff and Sun, Yu and Weinberger, Kilian Q},
  booktitle={International Conference on Machine Learning},
  pages={1321--1330},
  year={2017},
  organization={PMLR}
}

@inproceedings{gal2016dropout,
  title={Dropout as a Bayesian approximation: Representing model uncertainty in deep learning},
  author={Gal, Yarin and Ghahramani, Zoubin},
  booktitle={International Conference on Machine Learning},
  pages={1050--1059},
  year={2016},
  organization={PMLR}
}

@article{lakshminarayanan2017simple,
  title={Simple and scalable predictive uncertainty estimation using deep ensembles},
  author={Lakshminarayanan, Balaji and Pritzel, Alexander and Blundell, Charles},
  journal={Advances in Neural Information Processing Systems},
  volume={30},
  year={2017}
}

@article{maddox2019simple,
  title={A simple baseline for Bayesian uncertainty in deep learning},
  author={Maddox, Wesley J and Izmailov, Pavel and Garipov, Timur and Vetrov, Dmitry P and Wilson, Andrew Gordon},
  journal={Advances in Neural Information Processing Systems},
  volume={32},
  year={2019}
}

@article{liu2023simple,
  title={A simple approach to improve single-model deep uncertainty via distance-awareness},
  author={Liu, Jeremiah Zhe and Padhy, Shreyas and Ren, Jie and Lin, Zi and Wen, Yeming and Jerfel, Ghassen and Nado, Zachary and Snoek, Jasper and Tran, Dustin and Lakshminarayanan, Balaji},
  journal={Journal of Machine Learning Research},
  volume={24},
  number={42},
  pages={1--63},
  year={2023}
}

@article{liu2020energy,
  title={Energy-based out-of-distribution detection},
  author={Liu, Weitang and Wang, Xiaoyun and Owens, John and Li, Yixuan},
  journal={Advances in Neural Information Processing Systems},
  volume={33},
  pages={21464--21475},
  year={2020}
}

@inproceedings{liang2018enhancing,
  title={Enhancing The Reliability of Out-of-distribution Image Detection in Neural Networks},
  author={Liang, Shiyu and Li, Yixuan and Srikant, R},
  booktitle={International Conference on Learning Representations},
  year={2018}
}

@article{hein2017formal,
  title={Formal guarantees on the robustness of a classifier against adversarial manipulation},
  author={Hein, Matthias and Andriushchenko, Maksym},
  journal={Advances in Neural Information Processing Systems},
  volume={30},
  year={2017}
}

@inproceedings{cohen2019certified,
  title={Certified adversarial robustness via randomized smoothing},
  author={Cohen, Jeremy and Rosenfeld, Elan and Kolter, Zico},
  booktitle={International Conference on Machine Learning},
  pages={1310--1320},
  year={2019},
  organization={PMLR}
}

@inproceedings{lecuyer2019certified,
  title={Certified robustness to adversarial examples with differential privacy},
  author={Lecuyer, Mathias and Atlidakis, Vaggelis and Geambasu, Roxana and Hsu, Daniel and Jana, Suman},
  booktitle={2019 IEEE Symposium on Security and Privacy (SP)},
  pages={656--672},
  year={2019},
  organization={IEEE}
}

@article{seferis2024estimating,
  title={Estimating the Robustness Radius for Randomized Smoothing with 100x Sample Efficiency},
  author={Seferis, Emmanouil and Kollias, Stefanos and Cheng, Chih-Hong},
  journal={arXiv preprint arXiv:2404.17371},
  year={2024}
}

@article{voracek2024treatment,
  title={Treatment of statistical estimation problems in randomized smoothing for adversarial robustness},
  author={Voracek, Vaclav},
  journal={Advances in Neural Information Processing Systems},
  volume={37},
  pages={133464--133486},
  year={2024}
}

@inproceedings{ugare2024incremental,
  title={Incremental Randomized Smoothing Certification},
  author={Ugare, Shubham and Suresh, Tarun and Banerjee, Debangshu and Singh, Gagandeep and Misailovic, Sasa},
  booktitle={The Twelfth International Conference on Learning Representations},
  year={2024}
}

@article{salman2019provably,
  title={Provably robust deep learning via adversarially trained smoothed classifiers},
  author={Salman, Hadi and Li, Jerry and Razenshteyn, Ilya and Zhang, Pengchuan and Zhang, Huan and Bubeck, Sebastien and Yang, Greg},
  journal={Advances in Neural Information Processing Systems},
  volume={32},
  year={2019}
}

@article{fischer2020certified,
  title={Certified defense to image transformations via randomized smoothing},
  author={Fischer, Marc and Baader, Maximilian and Vechev, Martin},
  journal={Advances in Neural Information Processing Systems},
  volume={33},
  pages={8404--8417},
  year={2020}
}

@inproceedings{yang2020randomized,
  title={Randomized smoothing of all shapes and sizes},
  author={Yang, Greg and Duan, Tony and Hu, J Edward and Salman, Hadi and Razenshteyn, Ilya and Li, Jerry},
  booktitle={International Conference on Machine Learning},
  pages={10693--10705},
  year={2020},
  organization={PMLR}
}

@article{bhardwaj2024accelerated,
  title={Accelerated Smoothing: A Scalable Approach to Randomized Smoothing},
  author={Bhardwaj, Devansh and Kaushik, Kshitiz and Gupta, Sarthak},
  journal={arXiv preprint arXiv:2402.07498},
  year={2024}
}

@inproceedings{eykholt2018robust,
  title={Robust physical-world attacks on deep learning visual classification},
  author={Eykholt, Kevin and Evtimov, Ivan and Fernandes, Earlence and Li, Bo and Rahmati, Amir and Xiao, Chaowei and Prakash, Atul and Kohno, Tadayoshi and Song, Dawn},
  booktitle={Proceedings of the IEEE Conference on Computer Cision and Pattern Recognition},
  pages={1625--1634},
  year={2018}
}

@article{ma2021understanding,
  title={Understanding adversarial attacks on deep learning based medical image analysis systems},
  author={Ma, Xingjun and Niu, Yuhao and Gu, Lin and Wang, Yisen and Zhao, Yitian and Bailey, James and Lu, Feng},
  journal={Pattern Recognition},
  volume={110},
  pages={107332},
  year={2021},
  publisher={Elsevier}
}

@inproceedings{xu2024gtp,
  title={{GTP-ViT}: Efficient vision transformers via graph-based token propagation},
  author={Xu, Xuwei and Wang, Sen and Chen, Yudong and Zheng, Yanping and Wei, Zhewei and Liu, Jiajun},
  booktitle={Proceedings of the IEEE/CVF Winter Conference on Applications of Computer Vision},
  pages={86--95},
  year={2024}
}

@article{li2019certified,
  title={Certified adversarial robustness with additive noise},
  author={Li, Bai and Chen, Changyou and Wang, Wenlin and Carin, Lawrence},
  journal={Advances in Neural Information Processing Systems},
  volume={32},
  year={2019}
}

@article{pfrommer2023projected,
  title={Projected randomized smoothing for certified adversarial robustness},
  author={Pfrommer, Samuel and Anderson, Brendon G and Sojoudi, Somayeh},
  journal={Transactions on Machine Learning Research},
  year={2023}
}

@inproceedings{alfarra2022data,
  title={Data dependent randomized smoothing},
  author={Alfarra, Motasem and Bibi, Adel and Torr, Philip HS and Ghanem, Bernard},
  booktitle={Uncertainty in Artificial Intelligence},
  pages={64--74},
  year={2022},
  organization={PMLR}
}

@article{singh2019abstract,
  title={An abstract domain for certifying neural networks},
  author={Singh, Gagandeep and Gehr, Timon and P{\"u}schel, Markus and Vechev, Martin},
  journal={Proceedings of the ACM on Programming Languages},
  volume={3},
  number={POPL},
  pages={1--30},
  year={2019},
  publisher={ACM New York, NY, USA}
}

@inproceedings{weng2018towards,
  title={Towards fast computation of certified robustness for relu networks},
  author={Weng, Lily and Zhang, Huan and Chen, Hongge and Song, Zhao and Hsieh, Cho-Jui and Daniel, Luca and Boning, Duane and Dhillon, Inderjit},
  booktitle={International Conference on Machine Learning},
  pages={5276--5285},
  year={2018},
  organization={PMLR}
}

@article{wang2018efficient,
  title={Efficient formal safety analysis of neural networks},
  author={Wang, Shiqi and Pei, Kexin and Whitehouse, Justin and Yang, Junfeng and Jana, Suman},
  journal={Advances in Neural Information Processing Systems},
  volume={31},
  year={2018}
}

@inproceedings{cao2023robust,
  title={Robust trajectory prediction against adversarial attacks},
  author={Cao, Yulong and Xu, Danfei and Weng, Xinshuo and Mao, Zhuoqing and Anandkumar, Anima and Xiao, Chaowei and Pavone, Marco},
  booktitle={Conference on Robot Learning},
  pages={128--137},
  year={2023},
  organization={PMLR}
}

@inproceedings{xie2020real,
  title={Real-time, universal, and robust adversarial attacks against speaker recognition systems},
  author={Xie, Yi and Shi, Cong and Li, Zhuohang and Liu, Jian and Chen, Yingying and Yuan, Bo},
  booktitle={ICASSP 2020-2020 IEEE International Conference on Acoustics, Speech and Signal Processing (ICASSP)},
  pages={1738--1742},
  year={2020},
  organization={IEEE}
}

@article{chi2024adversarial,
  title={Adversarial attacks on autonomous driving systems in the physical world: a survey},
  author={Chi, Lijun and Msahli, Mounira and Zhang, Qingjie and Qiu, Han and Zhang, Tianwei and Memmi, Gerard and Qiu, Meikang},
  journal={IEEE Transactions on Intelligent Vehicles},
  year={2024},
  publisher={IEEE}
}

@article{fawzi2018analysis,
  title={Analysis of classifiers’ robustness to adversarial perturbations},
  author={Fawzi, Alhussein and Fawzi, Omar and Frossard, Pascal},
  journal={Machine learning},
  volume={107},
  number={3},
  pages={481--508},
  year={2018},
  publisher={Springer}
}

@article{shafer2008tutorial,
  title={A tutorial on conformal prediction.},
  author={Shafer, Glenn and Vovk, Vladimir},
  journal={Journal of Machine Learning Research},
  volume={9},
  number={3},
  year={2008}
}

@article{finlayson2019adversarial,
  title={Adversarial attacks on medical machine learning},
  author={Finlayson, Samuel G and Bowers, John D and Ito, Joichi and Zittrain, Jonathan L and Beam, Andrew L and Kohane, Isaac S},
  journal={Science},
  volume={363},
  number={6433},
  pages={1287--1289},
  year={2019},
  publisher={American Association for the Advancement of Science}
}

@article{geifman2017selective,
  title={Selective classification for deep neural networks},
  author={Geifman, Yonatan and El-Yaniv, Ran},
  journal={Advances in Neural Information Processing Systems},
  volume={30},
  year={2017}
}

@article{liu2025comprehensive,
  title={A comprehensive study on robustness of image classification models: Benchmarking and rethinking},
  author={Liu, Chang and Dong, Yinpeng and Xiang, Wenzhao and Yang, Xiao and Su, Hang and Zhu, Jun and Chen, Yuefeng and He, Yuan and Xue, Hui and Zheng, Shibao},
  journal={International Journal of Computer Vision},
  volume={133},
  number={2},
  pages={567--589},
  year={2025},
  publisher={Springer}
}

@article{kumari2023rethinking,
  title={Rethinking Randomized Smoothing from the Perspective of Scalability},
  author={Kumari, Anupriya and Bhardwaj, Devansh and Jindal, Sukrit},
  journal={arXiv preprint arXiv:2312.12608},
  year={2023}
}

@inproceedings{carlini2017adversarial,
  title={Adversarial examples are not easily detected: Bypassing ten detection methods},
  author={Carlini, Nicholas and Wagner, David},
  booktitle={Proceedings of the 10th ACM workshop on artificial intelligence and security},
  pages={3--14},
  year={2017}
}

@article{akhtar2018threat,
  title={Threat of adversarial attacks on deep learning in computer vision: A survey},
  author={Akhtar, Naveed and Mian, Ajmal},
  journal={IEEE Access},
  volume={6},
  pages={14410--14430},
  year={2018},
  publisher={IEEE}
}

@article{lei2018distribution,
  title={Distribution-free predictive inference for regression},
  author={Lei, Jing and G’Sell, Max and Rinaldo, Alessandro and Tibshirani, Ryan J and Wasserman, Larry},
  journal={Journal of the American Statistical Association},
  volume={113},
  number={523},
  pages={1094--1111},
  year={2018},
  publisher={Taylor \& Francis}
}

@article{tramer2020adaptive,
  title={On adaptive attacks to adversarial example defenses},
  author={Tramer, Florian and Carlini, Nicholas and Brendel, Wieland and Madry, Aleksander},
  journal={Advances in Neural Information Processing Systems},
  volume={33},
  pages={1633--1645},
  year={2020}
}

\newpage 
\appendix
\startcontents[appendix]
\printcontents[appendix]{}{1}{\section*{Appendix}}
\clearpage
\newpage
\section{Experimental Protocol and Implementation Details}
\label{app:implementation_details}

This appendix provides the implementation details supporting Section~\ref{sec:experiments}. We describe dataset preprocessing, network architectures, base-classifier training, offline MC target construction, RRISE surrogate training, baseline implementations, model selection, calibration and confidence matching, metric reporting details, compute accounting, and ablation settings.

\subsection{Datasets and Preprocessing}
\label{app:datasets_preprocessing}

We evaluate on FashionMNIST, CIFAR-10, CIFAR-100, and Tiny ImageNet using their standard train/test splits. All images are represented in pixel space \([0,1]\). Gaussian noise is always added in pixel space, clipped to \([0,1]\), and then normalized using dataset-specific statistics. This convention is used consistently for base-model noise augmentation, offline MC target construction, calibration sampling, and final certification.

FashionMNIST images are resized from \(28\times28\) to \(32\times32\) and replicated to three channels. CIFAR-10 and CIFAR-100 are evaluated at \(32\times32\), and Tiny ImageNet is evaluated at \(64\times64\). Stochastic data augmentation is used only for base-classifier training. Offline MC target construction, RRISE validation, calibration, and final evaluation use deterministic preprocessing so that each dataset index corresponds to a fixed clean input.

The training-time augmentations are:
\[
\begin{array}{ll}
\text{FashionMNIST:} & \text{resize to }32\times32,\ \text{random crop with padding }2,\\
\text{CIFAR-10:} & \text{random crop with padding }4,\ \text{random horizontal flip},\\
\text{CIFAR-100:} & \text{random crop with padding }4,\ \text{random horizontal flip,\ RandAugment},\\
\text{Tiny ImageNet:} & \text{random resized crop,\ random horizontal flip,\ RandAugment,\ random erasing}.
\end{array}
\]
For evaluation, we use deterministic resizing/cropping; Tiny ImageNet uses resize followed by center crop.

\subsection{Network Architectures}
\label{app:architectures}

Table~\ref{tab:architecture_details} summarizes the architecture used for each dataset. The same architecture is used for the base classifier \(f\), the RRISE surrogate \(q_{\btheta}\), and the offline surrogate baseline.

\begin{table}[ht]
\centering
\caption{Dataset--architecture pairs used in all experiments.}
\label{tab:architecture_details}
\begin{tabular}{lccc}
\toprule
Dataset & Architecture & Input size & Classes \\
\midrule
FashionMNIST & MLP-Mixer-Tiny & \(32\times32\) & 10 \\
CIFAR-10 & ResNet-18 with CIFAR stem & \(32\times32\) & 10 \\
CIFAR-100 & EfficientNet-B0 & \(32\times32\) & 100 \\
Tiny ImageNet & ViT-Tiny & \(64\times64\) & 200 \\
\bottomrule
\end{tabular}
\end{table}

For CIFAR-10, we use ResNet-18 with a CIFAR-style stem: the first convolution is replaced by a \(3\times3\) convolution with stride 1 and padding 1, and the initial max-pooling layer is removed. For Tiny ImageNet, the ViT-Tiny model uses patch size 8, embedding dimension 192, depth 9, 12 attention heads, MLP ratio 2.0, and LayerNorm with \(\epsilon=10^{-6}\).

\subsection{Base Classifier Training}
\label{app:base_training}

Base classifiers are trained with supervised cross-entropy and Gaussian noise augmentation. Noise is injected in pixel space before normalization, matching the smoothing distribution used during certification. Table~\ref{tab:base_training_details} summarizes the dataset-specific settings.

\begin{table}[ht]
\centering
\caption{Base-classifier training settings.}
\label{tab:base_training_details}
\begin{tabular}{lcccc}
\toprule
Dataset & Architecture & \(\sigma_{\mathrm{base}}\) & Epochs & Learning rate \\
\midrule
FashionMNIST & MLP-Mixer-Tiny & 0.5 & 200 & \(10^{-3}\) \\
CIFAR-10 & ResNet-18 & 0.5 & 200 & \(10^{-3}\) \\
CIFAR-100 & EfficientNet-B0 & 0.25 & 500 & \(10^{-3}\) \\
Tiny ImageNet & ViT-Tiny & 0.25 & 500 & \(2\times10^{-3}\) \\
\bottomrule
\end{tabular}
\end{table}

All base classifiers use AdamW with weight decay \(5\times10^{-4}\), batch size 512, label smoothing 0.1, gradient clipping at norm 1.0, and cosine learning-rate decay with a 5-epoch warmup. Automatic mixed precision is used on CUDA unless disabled. The base-model training seed is fixed to 0, and the best checkpoint is selected by validation/evaluation accuracy.

\subsection{Offline Monte Carlo Target Construction}
\label{app:mc_target_construction}

RRISE and Baseline~4 use offline MC target distributions. For each clean training example \(\x_i\), we draw \(n\) Gaussian perturbations using the target smoothing noise \(\sigma\), evaluate the frozen base classifier, and store the normalized class-count vector \(C_i^{(n)}\):
\[
C_i^{(n)}[c]
=
\frac{1}{n}
\sum_{j=1}^{n}
\mathbf{1}\!\left[
f(\x_i+\bepsilon_{i,j})=c
\right],
\qquad
\bepsilon_{i,j}\sim\Normal(\bzero,\sigma^2\bI).
\]
We evaluate MC target budgets
\[
n\in\{500,1000,5000,10000\}.
\]
The target smoothing noise is
\[
\sigma =
\begin{cases}
0.25, & \text{FashionMNIST and CIFAR-10},\\
0.10, & \text{CIFAR-100 and Tiny ImageNet}.
\end{cases}
\]
MC targets are constructed using deterministic preprocessing, so each training index has a stable target vector. Target datasets are constructed separately for seeds \(\{100,200,300\}\), matching the seed-averaged evaluation protocol.

\subsection{RRISE Surrogate Training}
\label{app:rrise_training}

The RRISE surrogate \(q_{\btheta}\) is trained on clean inputs to match the offline MC target vector \(C_i^{(n)}\). We use soft-label cross-entropy:
\[
\mathcal{L}_{\mathrm{RRISE}}(\btheta)
=
-\frac{1}{|\mathcal{B}|}
\sum_{i\in\mathcal{B}}
\sum_{c=1}^{K}
C_i^{(n)}[c]\log q_{\btheta}(c\mid \x_i).
\]
The surrogate uses the same architecture as the corresponding base classifier. We evaluate three training strategies:
\[
\begin{array}{ll}
\textbf{Full + base initialization:} & \text{initialize from }f\text{ and train all parameters},\\
\textbf{Full + random initialization:} & \text{randomly initialize and train all parameters},\\
\textbf{Head + base initialization:} & \text{initialize from }f\text{ and train only the classifier head}.
\end{array}
\]
The main results use head-only training initialized from the base classifier.

Table~\ref{tab:rrise_training_details} summarizes the surrogate optimization hyperparameters.

\begin{table}[ht]
\centering
\caption{RRISE surrogate training settings.}
\label{tab:rrise_training_details}
\begin{tabular}{lccc}
\toprule
Dataset & Epochs & Learning rate & Target noise \(\sigma\) \\
\midrule
FashionMNIST & 200 & \(5\times10^{-4}\) & 0.25 \\
CIFAR-10 & 200 & \(5\times10^{-4}\) & 0.25 \\
CIFAR-100 & 500 & \(5\times10^{-4}\) & 0.10 \\
Tiny ImageNet & 500 & \(10^{-3}\) & 0.10 \\
\bottomrule
\end{tabular}
\end{table}

All RRISE surrogates use AdamW with weight decay \(5\times10^{-4}\), batch size 512, gradient clipping at norm 1.0, cosine learning-rate decay with 5 warmup epochs, and automatic mixed precision on CUDA unless disabled. Checkpoint selection is described in Appendix~\ref{app:model_selection}.

\subsection{Baseline Details}
\label{app:baseline_details}

All baselines use the same trained base classifier \(f\), the same test split, the same deterministic evaluation preprocessing, and the same pixel-space Gaussian noise convention axs RRISE. In particular, noisy inputs are generated by adding Gaussian noise in pixel space, clipping to \([0,1]\), and then normalizing before model evaluation. Unless otherwise stated, all baseline results are averaged over seeds \(\{100,200,300\}\). The target smoothing noise is \(\sigma=0.25\) for FashionMNIST/CIFAR-10 and \(\sigma=0.10\) for CIFAR-100/Tiny ImageNet.

\subsubsection{Baseline~1: Fixed-Budget MC Randomized Smoothing}
\label{app:baseline1_details}

Baseline~1 implements standard randomized smoothing~\citep{cohen2019certified}. For each test input, it draws \(n\) noisy samples from the fixed target smoothing distribution, evaluates the frozen base classifier on each noisy input, and selects the empirical top class as the smoothed prediction. It then computes a one-sided Clopper--Pearson lower confidence bound for the probability of this empirical top class and certifies if the lower bound exceeds \(1/2\). We evaluate
\[
n\in\{500,1000,5000,10000\}.
\]
The clean base-model prediction is saved for analysis, but certification and certified accuracy are computed using the smoothed prediction. Although the stored output files were generated with \(\alpha=0.001\), they include the empirical top-class counts and budgets. Therefore, in the final evaluation we recompute the Clopper--Pearson lower bound at the desired failure level \(\alpha_{\mathrm{MC}}\).

\subsubsection{Baseline~2: Input-Specific Sample Budgeting}
\label{app:baseline2_details}

Baseline~2 follows the sample-efficient MC-budgeting idea of~\citet{seferis2024estimating}. It uses a pilot stage and a precomputed budget-mapping rule to reduce the number of noisy evaluations. For a maximum budget \(k_{\max}\), the pilot size is
\[
n_0=\min\{k_{\max},\max(100,\lfloor0.01k_{\max}\rfloor)\}.
\]
The pilot samples are used to estimate the empirical top class and a confidence interval for its probability. A piecewise budget mapping then determines the predicted estimation budget \(n_{\mathrm{hat}}\). To avoid degenerate probability estimates from very small sample sizes, we enforce a minimum pilot size of 100 and a minimum final estimation size of 100 whenever a positive final budget is used. The realized sample count is capped by \(k_{\max}\) and lower-bounded by this minimum estimation size. If the budget mapping returns zero, we use the pilot evidence conservatively rather than fabricating a zero probability estimate.

We evaluate
\[
k_{\max}\in\{500,1000,5000,10000\},
\qquad
\text{decline}\in\{0.01,0.05\}.
\]
The decline parameter is used in the relative-decline setting. The mapping grid uses resolution \(0.001\). The final certificate is computed from the realized count evidence using a one-sided Clopper--Pearson lower bound. We report both certified performance and the average realized sample count \(n_{\mathrm{used}}\).

\subsubsection{Baseline~3: Budget Prediction / Early Stopping}
\label{app:baseline3_details}

Baseline~3 follows the statistical-estimation and early-stopping perspective of~\citet{voracek2024treatment}. It samples sequentially up to \(n_{\max}\), using a CLT-style approximation to predict whether additional samples are likely to improve the radius. This approximation is used only for the stopping decision. The reported certificate is still a standard Clopper--Pearson certificate computed from the realized number of samples \(n_{\mathrm{used}}\). We evaluate
\[
n_{\max}\in\{500,1000,5000,10000\},
\qquad
\text{tolerance}\in\{0.01,0.05\}.
\]
The sequential step size is
\[
n_{\mathrm{step}}=\left\lceil\frac{n_{\max}}{10}\right\rceil.
\]
Sampling stops when the predicted relative radius gap is below the chosen tolerance or when the maximum budget is reached. We report the final certified radius, stopping behavior, and average/median realized sample count.

\subsubsection{Baseline~4: Jensen--Shannon Offline Surrogate}
\label{app:baseline4_details}

Baseline~4 follows the offline-surrogate setting of~\citet{bhardwaj2024accelerated}. It uses the same offline MC target vectors \(C_i^{(n)}\) as RRISE, but trains the surrogate using Jensen--Shannon divergence between the predicted class distribution and the MC target vector. This baseline isolates the effect of the surrogate-learning objective while controlling for the use of offline MC targets.

Baseline~4 uses the same architecture as the corresponding base classifier and the same deterministic clean input associated with each MC target. The MC target dataset is split into 90\% training and 10\% validation. The surrogate is trained with AdamW, weight decay \(5\times10^{-4}\), batch size 512, cosine learning-rate decay with 5 warmup epochs, gradient clipping at norm 1.0, and automatic mixed precision on CUDA unless disabled. The learning rate is \(5\times10^{-4}\) for FashionMNIST/CIFAR-10/CIFAR-100 and \(10^{-3}\) for Tiny ImageNet. The number of epochs follows the RRISE training schedule: 200 epochs for FashionMNIST/CIFAR-10 and 500 epochs for CIFAR-100/Tiny ImageNet. We evaluate both random initialization and base-model initialization. The main comparison uses random initialization and \(n=10000\), matching the original offline-surrogate setting. Checkpoint selection is described in Appendix~\ref{app:model_selection}.

\subsection{Model Selection}
\label{app:model_selection}

We use held-out validation criteria to select all trained checkpoints.

\subsubsection{Base Classifiers}

For each dataset, the base classifier \(f\) is trained with Gaussian noise augmentation using the base smoothing level \(\sigma_{\mathrm{base}}\). We save the checkpoint with the highest evaluation accuracy on the validation/evaluation split and use this checkpoint as the frozen teacher for all downstream randomized-smoothing experiments, offline MC target construction, RRISE training, and baseline evaluations.

\subsubsection{RRISE Surrogates}

For each MC target budget \(n\) and seed, the offline MC target dataset is split into a training subset and a held-out validation subset. We use a 10\% validation split stratified by the MC target top probability \(p_A=\max_c C_i^{(n)}[c]\). During training, we evaluate the surrogate on this held-out MC-target validation split and select the checkpoint with the lowest validation all-\(p_A\) mean absolute error. This metric measures how closely the surrogate probability assigned to the MC target top class matches the MC target top probability. This selection criterion is independent of the final test set used for reporting calibrated certified performance.

\subsubsection{Baseline~4 Surrogates}

Baseline~4 is also trained on offline MC target vectors. For a fair comparison, it uses the same MC target train/validation split convention as RRISE. The checkpoint is selected on the held-out MC-target validation split using the validation objective specified for the Jensen--Shannon surrogate baseline. In our implementation, the main Baseline~4 checkpoint is selected by validation certified performance under the surrogate-based rule, while additional validation statistics such as Jensen--Shannon divergence, soft-label cross-entropy, and \(p_A\) error against the MC target are logged for analysis.

\subsubsection{Final Reporting}

After model selection, the selected checkpoint is fixed. Calibration for RRISE and Baseline~4 is then performed using 10\% of the test set as described in Section~\ref{sec:calibration}, and final metrics are computed on the full test set using the selected checkpoint and calibrated probabilities. The test labels are not used to select RRISE or Baseline~4 checkpoints.

\subsection{Calibration and Confidence Levels}
\label{app:calibration_eval_details}

For MC baselines, we evaluate the Clopper--Pearson certificate at failure level \(\alpha_{\mathrm{MC}}\). For surrogate methods, we use the calibration procedure from Section~\ref{sec:calibration}. In all surrogate calibrations, 10\% of the test set is used to compute the calibration offset \(\delta\), and the same \(\delta\) is then applied to the full test set, including the calibration subset.

We fix \(\beta_{\mathrm{sur}}=0.001\) and evaluate total surrogate failure levels
\[
\beta_{\mathrm{sur}}+\gamma_{\mathrm{sur}}
\in
\{0.25,0.10,0.05,0.01\}.
\]
Thus, the corresponding conformal failure levels are
\[
\gamma_{\mathrm{sur}}
\in
\{0.249,0.099,0.049,0.009\}.
\]
The main comparison uses \(\alpha_{\mathrm{MC}}=0.25\) for Baselines~1--3 and \(\beta_{\mathrm{sur}}+\gamma_{\mathrm{sur}}=0.25\) for RRISE and Baseline~4. This matches the failure level used by MC and surrogate methods.

\subsection{Metric Reporting Details}
\label{app:metric_reporting_details}

We use the evaluation metrics defined in Section~\ref{subsec:exp_metrics}: OCA, certified accuracy at fixed radius thresholds, CRD, boundary-confidence CRD, sample reduction, and amortized break-even cost. This subsection specifies the reporting conventions used to instantiate these metrics in the tables.

All metrics are averaged over seeds \(\{100,200,300\}\) and reported as mean \(\pm\) standard deviation. For MC-based methods, \(\widetilde p_A(\x)\) is the one-sided Clopper--Pearson lower bound recomputed from the stored count evidence at the evaluation failure level \(\alpha_{\mathrm{MC}}\). For RRISE and Baseline~4, \(\widetilde p_A(\x)\) is the calibrated surrogate probability obtained using the procedure in Section~\ref{sec:calibration}.

For certified accuracy and CRD, we use the following radius thresholds:
\[
\mathcal{T}_{\mathrm{Tiny/CIFAR100}}
=
\{0,0.0125,0.025,0.0375,0.05,0.0625,0.075,0.0875\},
\]
and
\[
\mathcal{T}_{\mathrm{CIFAR10/FMNIST}}
=
\{0,0.025,0.05,0.075,0.10,0.125,0.15,0.175\}.
\]
The boundary-confidence CRD is computed over the subset satisfying
\[
0.5<\widetilde p_A(\x)<0.75,
\]
and is reported as a fraction of the full test set unless explicitly stated otherwise.

For sample-efficiency comparisons, Baseline~1 uses the fixed budget \(n\), while Baselines~2--3 use the realized sample count \(n_{\mathrm{used}}(x)\). RRISE and Baseline~4 use one surrogate forward pass per test input after training. Radius ratios and radius drops are always computed relative to Baseline~1 at the same dataset, budget, seed, and evaluation failure level.

\subsection{Compute Accounting}
\label{app:compute_accounting}

This subsection gives the implementation-level details behind the computational-efficiency metrics reported in Section~\ref{subsec:exp_metrics}. We report computational cost in forward-pass equivalents. Baseline~1 requires \(n\) noisy base-model forward passes per test input. Baselines~2--3 require the realized number of noisy forward passes \(n_{\mathrm{used}}(x)\). RRISE and Baseline~4 require one surrogate forward pass per test input after training.

For amortized methods, we include one-time offline MC target construction and surrogate training cost. Constructing the MC target dataset costs approximately
\[
|\mathcal{D}_{\mathrm{train}}|\,n
\]
base-model forward passes. For surrogate training, we count one backward pass as approximately two forward-pass equivalents, so one training update costs approximately three forward-pass equivalents. Therefore, the total amortized cost for \(m\) test queries is
\[
C(m)
=
C_{\mathrm{train}}
+
m\,C_{\mathrm{test}}.
\]
The break-even point against a baseline with average per-input cost \(\bar F\) is
\[
m^\star
=
\left\lceil
\frac{C_{\mathrm{train}}}
{\bar F-C_{\mathrm{test}}}
\right\rceil,
\]
when \(\bar F>C_{\mathrm{test}}\). This indicates how many test inputs are needed before the amortized surrogate becomes computationally cheaper than MC-based certification.

\subsection{Ablation Studies Reported in the Appendix}
\label{app:reported_ablations}

The appendix reports the following ablations:
\begin{itemize}
    \item MC target budgets \(n\in\{500,1000,5000,10000\}\).
    \item Baseline~2 decline levels \(\{0.01,0.05\}\).
    \item Baseline~3 stopping tolerances \(\{0.01,0.05\}\).
    \item RRISE training strategy: full-model training with base initialization, full-model training from random initialization, and head-only training with base initialization.
    \item Baseline~4 initialization: random initialization and base-model initialization.
    \item Surrogate calibration levels \(\beta_{\mathrm{sur}}+\gamma_{\mathrm{sur}}\in\{0.25,0.10,0.05,0.01\}\).
\end{itemize}
\clearpage
\newpage
\section{Additional Experimental Results}
\label{app:additional_results}

This appendix provides the full numerical results supporting the main experimental section.  The goal is not only to list the numbers, but also to make each ablation directly interpretable.  Appendix~\ref{app:main_support_tables} reports the exact values used to generate the main-body figures.  Appendix~\ref{app:ablation_list} reports controlled ablations over MC target budget, RRISE training strategy, calibration level, and baseline-specific hyperparameters.  Appendix~\ref{app:full_crd_results} reports the complete certified-radius distribution (CRD) results.

\paragraph{How to read the tables.}
All entries are reported as mean $\pm$ standard deviation over runs.  CertAcc@0 is computed over the full test set.  OCA, average radius, and boundary CRD are computed on the boundary-confidence subset $0.5<\widetilde p_A(\x)<0.75$ unless a table explicitly says otherwise.  If a method has zero mass in the conditioning set, the corresponding conditional metrics are undefined and reported as a dash ``--''.  A dash ``--'' also indicates that a quantity is not applicable to that method.

\begin{table*}[ht]
\centering
\small
\caption{Column glossary for Appendix~\ref{app:additional_results}.}
\label{tab:app_column_glossary}
\begin{tabular}{p{0.22\textwidth}p{0.73\textwidth}}
\toprule
Column & Description \\
\midrule
Dataset & Evaluation dataset. \\
Method & Evaluated certification method: RRISE is the proposed amortized surrogate; Baseline~1 is the fixed-budget MC estimator; Baseline~2 uses an input-specific decline-level budgeting rule; Baseline~3 uses an early-stopping rule; Baseline~4 is the Jensen--Shannon offline surrogate. \\
Variant / Training Strategy & Encodes implementation-specific settings.  For RRISE, \texttt{trainhead} trains only the estimator head and \texttt{trainall} trains all parameters; \texttt{initbase} initializes from the base classifier and \texttt{initrandom} uses random initialization.  For Baseline~4, \texttt{init0} denotes random initialization and \texttt{init1} denotes base-model initialization. \\
Budget & MC sample budget.  For MC baselines, this is the online test-time sampling budget.  For surrogate methods, this is the offline MC target budget used to train the surrogate. \\
CertAcc@0 (\%) & Certified accuracy at radius zero, measured over all test inputs. \\
OCA (\%) & Ordinary classification accuracy on the boundary-confidence subset $0.5<\widetilde p_A(\x)<0.75$. \\
Avg. Radius & Average certified radius on the same boundary-confidence subset. \\
Boundary Mass & Percentage of all test samples whose estimated top-class probability lies in $0.5<\widetilde p_A(\x)<0.75$. \\
$\delta$ & Learned or selected surrogate calibration offset.  Larger values indicate stricter post-hoc correction and can shrink radii or remove boundary mass. \\
Best Epoch & Epoch selected by validation or calibration criteria. \\
Test Cost & Average number of forward-pass equivalents per test input. \\
Train Cost & Offline cost in forward-pass equivalents, including target construction and optimization up to the selected checkpoint. \\
RRISE BE vs B1/B2/B3 & Break-even number of test queries after which RRISE's offline training cost is amortized relative to Baseline~1/2/3. \\
Sample Red. & Sampling reduction factor relative to the full fixed budget.  Larger is cheaper at test time. \\
Total Failure & Surrogate total failure level $\beta_{\mathrm{sur}}+\gamma_{\mathrm{sur}}$ used for calibration. \\
Threshold $t$ & Radius threshold used in CRD tables.  Boundary CRD entries measure $\{\x:0.5<\widetilde p_A(\x)<0.75,\ \widetilde R(\x)>t\}$ as a percentage of all test inputs.  Certified-input CRD entries are conditional fractions over certified inputs, so 1.00 means all certified inputs exceed the threshold. \\
\bottomrule
\end{tabular}
\end{table*}

\begin{table*}[ht]
\centering
\small
\caption{Variant-name glossary.}
\label{tab:app_variant_glossary}
\begin{tabular}{p{0.25\textwidth}p{0.70\textwidth}}
\toprule
Notation & Meaning \\
\midrule
\texttt{n10000}, \texttt{n5000}, etc. & Fixed MC budget $n$ used either at test time or for offline target construction, depending on the method. \\
\texttt{k*\_decline0.01} & Baseline~2 with maximum/sample cap $k$ and decline level $0.01$. \\
\texttt{k*\_decline0.05} & Baseline~2 with maximum/sample cap $k$ and decline level $0.05$. \\
\texttt{n*\_tol0.01} & Baseline~3 with MC cap $n$ and stopping tolerance $0.01$. \\
\texttt{n*\_tol0.05} & Baseline~3 with MC cap $n$ and stopping tolerance $0.05$. \\
\texttt{trainhead\_initbase\_n*} & RRISE trained only in the estimator head, initialized from the base classifier, using offline MC targets with budget $n$. \\
\texttt{trainall\_initbase\_n*} & RRISE trained end-to-end from the base-classifier initialization. \\
\texttt{trainall\_initrandom\_n*} & RRISE trained end-to-end from random initialization. \\
\texttt{init0\_n*} & Baseline~4 trained from random initialization with offline MC target budget $n$. \\
\texttt{init1\_n*} & Baseline~4 initialized from the base classifier with offline MC target budget $n$. \\
\bottomrule
\end{tabular}
\end{table*}

\subsection{Main-Figure Support Tables}
\label{app:main_support_tables}

Tables~\ref{tab:app_main_cert_boundary}--\ref{tab:app_main_crd_representative} provide the exact numerical values behind the main-body figures.  Table~\ref{tab:app_main_cert_boundary} supports Figure~\ref{fig:main_cert_boundary}, Table~\ref{tab:app_main_compute} supports Figure~\ref{fig:main_compute}, and Table~\ref{tab:app_main_crd_representative} supports Figure~\ref{fig:main_boundary_crd}.  The table captions repeat the conditioning set so that each table can be read independently.

\begin{table*}[t]
\centering
\caption{Exact values behind the main certified-performance and boundary-behavior figure. CertAcc@0 is computed over all test samples. OCA and Avg. Radius are computed over the boundary-confidence subset \(0.5<\widetilde p_A(\x)<0.75\). Boundary Mass reports the percentage of all test samples in that subset.}
\label{tab:app_main_cert_boundary}
\resizebox{\textwidth}{!}{%
\begin{tabular}{llcccc}
\toprule
Dataset & Method & CertAcc@0 (\%) & OCA (\%) & Avg. Radius & Boundary Mass \\
\midrule
FashionMNIST & RRISE & 87.44 $\pm$ 0.01 & 43.50 $\pm$ 0.21 & 0.08 $\pm$ 0.00 & 4.72 $\pm$ 0.01 \\
 & Baseline 1 & 87.63 $\pm$ 0.02 & 45.21 $\pm$ 0.35 & 0.08 $\pm$ 0.00 & 4.76 $\pm$ 0.02 \\
 & Baseline 2 & 87.61 $\pm$ 0.03 & 44.70 $\pm$ 0.54 & 0.07 $\pm$ 0.00 & 4.27 $\pm$ 0.09 \\
 & Baseline 3 & 87.64 $\pm$ 0.01 & 44.82 $\pm$ 0.08 & 0.08 $\pm$ 0.00 & 4.79 $\pm$ 0.02 \\
 & Baseline 4 & 87.50 $\pm$ 0.26 & 45.69 $\pm$ 1.88 & 0.08 $\pm$ 0.00 & 5.52 $\pm$ 0.95 \\
\midrule
CIFAR-10 & RRISE & 70.26 $\pm$ 0.07 & 43.68 $\pm$ 0.35 & 0.08 $\pm$ 0.00 & 16.10 $\pm$ 0.12 \\
 & Baseline 1 & 70.88 $\pm$ 0.03 & 41.63 $\pm$ 0.11 & 0.08 $\pm$ 0.00 & 16.99 $\pm$ 0.16 \\
 & Baseline 2 & 70.87 $\pm$ 0.05 & 42.64 $\pm$ 0.28 & 0.07 $\pm$ 0.00 & 14.92 $\pm$ 0.13 \\
 & Baseline 3 & 70.87 $\pm$ 0.01 & 41.61 $\pm$ 0.09 & 0.08 $\pm$ 0.00 & 17.04 $\pm$ 0.07 \\
 & Baseline 4 & 72.79 $\pm$ 0.46 & 84.74 $\pm$ 0.30 & 0.11 $\pm$ 0.01 & 67.58 $\pm$ 2.47 \\
\midrule
CIFAR-100 & RRISE & 33.91 $\pm$ 0.02 & 20.99 $\pm$ 0.66 & 0.03 $\pm$ 0.00 & 24.16 $\pm$ 0.56 \\
 & Baseline 1 & 34.75 $\pm$ 0.00 & 14.68 $\pm$ 0.01 & 0.03 $\pm$ 0.00 & 19.67 $\pm$ 0.09 \\
 & Baseline 2 & 34.82 $\pm$ 0.06 & 15.25 $\pm$ 0.27 & 0.03 $\pm$ 0.00 & 17.66 $\pm$ 0.23 \\
 & Baseline 3 & 34.78 $\pm$ 0.01 & 14.85 $\pm$ 0.12 & 0.03 $\pm$ 0.00 & 19.65 $\pm$ 0.11 \\
 & Baseline 4 & 17.76 $\pm$ 0.74 & -- & -- & 0.00 $\pm$ 0.00 \\
\midrule
Tiny ImageNet & RRISE & 27.43 $\pm$ 0.04 & 14.88 $\pm$ 0.38 & 0.03 $\pm$ 0.00 & 22.85 $\pm$ 0.60 \\
 & Baseline 1 & 27.98 $\pm$ 0.01 & 14.61 $\pm$ 0.02 & 0.03 $\pm$ 0.00 & 20.76 $\pm$ 0.08 \\
 & Baseline 2 & 28.02 $\pm$ 0.06 & 14.89 $\pm$ 0.72 & 0.03 $\pm$ 0.00 & 18.25 $\pm$ 0.22 \\
 & Baseline 3 & 27.94 $\pm$ 0.01 & 14.58 $\pm$ 0.15 & 0.03 $\pm$ 0.00 & 20.71 $\pm$ 0.07 \\
 & Baseline 4 & 22.37 $\pm$ 0.55 & -- & -- & 0.00 $\pm$ 0.00 \\
\midrule
\bottomrule
\end{tabular}%
}
\end{table*}

\begin{table*}[t]
\centering
\caption{Exact values behind the computational-efficiency figure. Test cost is the average number of forward-pass equivalents per input. Training cost counts offline MC target construction plus optimization up to the selected best checkpoint. Break-even columns report RRISE break-even queries relative to Baselines~1--3.}
\label{tab:app_main_compute}
\resizebox{\textwidth}{!}{%
\begin{tabular}{llccccc}
\toprule
Dataset & Method & Test Cost & Train Cost & RRISE BE vs B1 & RRISE BE vs B2 & RRISE BE vs B3 \\
\midrule
FashionMNIST & RRISE & 1.00 $\pm$ 0.00 & 600058056.00 $\pm$ 1276.37 & 60012.00 $\pm$ 0.00 & 276465.00 $\pm$ 757.91 & 396559.33 $\pm$ 812.58 \\
 & Baseline 1 & 10000.00 $\pm$ 0.00 & 0.00 $\pm$ 0.00 & -- & -- & -- \\
 & Baseline 2 & 6377.85 $\pm$ 10.99 & 0.00 $\pm$ 0.00 & -- & -- & -- \\
 & Baseline 3 & 6429.47 $\pm$ 2.31 & 0.00 $\pm$ 0.00 & -- & -- & -- \\
 & Baseline 4 & 1.00 $\pm$ 0.00 & 600013570.00 $\pm$ 9381.56 & -- & -- & -- \\
\midrule
CIFAR-10 & RRISE & 1.00 $\pm$ 0.00 & 500042924.00 $\pm$ 6447.92 & 50010.00 $\pm$ 1.00 & 210824.00 $\pm$ 904.85 & 181392.33 $\pm$ 152.67 \\
 & Baseline 1 & 10000.00 $\pm$ 0.00 & 0.00 $\pm$ 0.00 & -- & -- & -- \\
 & Baseline 2 & 5883.06 $\pm$ 11.69 & 0.00 $\pm$ 0.00 & -- & -- & -- \\
 & Baseline 3 & 7281.03 $\pm$ 1.89 & 0.00 $\pm$ 0.00 & -- & -- & -- \\
 & Baseline 4 & 1.00 $\pm$ 0.00 & 500039886.00 $\pm$ 6483.57 & -- & -- & -- \\
\midrule
CIFAR-100 & RRISE & 1.00 $\pm$ 0.00 & 500038220.00 $\pm$ 1836.03 & 50009.33 $\pm$ 0.58 & 212140.33 $\pm$ 1687.32 & 164667.33 $\pm$ 209.29 \\
 & Baseline 1 & 10000.00 $\pm$ 0.00 & 0.00 $\pm$ 0.00 & -- & -- & -- \\
 & Baseline 2 & 5579.45 $\pm$ 15.43 & 0.00 $\pm$ 0.00 & -- & -- & -- \\
 & Baseline 3 & 7413.80 $\pm$ 2.80 & 0.00 $\pm$ 0.00 & -- & -- & -- \\
 & Baseline 4 & 1.00 $\pm$ 0.00 & 500034692.00 $\pm$ 15288.00 & -- & -- & -- \\
\midrule
Tiny ImageNet & RRISE & 1.00 $\pm$ 0.00 & 1000119952.00 $\pm$ 10047.75 & 100022.33 $\pm$ 1.15 & 424920.00 $\pm$ 3328.49 & 316745.00 $\pm$ 247.57 \\
 & Baseline 1 & 10000.00 $\pm$ 0.00 & 0.00 $\pm$ 0.00 & -- & -- & -- \\
 & Baseline 2 & 5547.50 $\pm$ 5.51 & 0.00 $\pm$ 0.00 & -- & -- & -- \\
 & Baseline 3 & 7498.50 $\pm$ 1.73 & 0.00 $\pm$ 0.00 & -- & -- & -- \\
 & Baseline 4 & 1.00 $\pm$ 0.00 & 1000059780.00 $\pm$ 17386.65 & -- & -- & -- \\
\midrule
\bottomrule
\end{tabular}%
}
\end{table*}

\begin{table*}[t]
\centering
\caption{Representative boundary-confidence CRD values behind the main CRD figure. Each entry reports the percentage of all test inputs satisfying \(0.5<\widetilde p_A(\x)<0.75\) and \(\widetilde R(\x)>t\). For FashionMNIST/CIFAR-10, \((t_1,t_2,t_3)=(0.050,0.100,0.150)\). For CIFAR-100/Tiny ImageNet, \((t_1,t_2,t_3)=(0.025,0.050,0.075)\).}
\label{tab:app_main_crd_representative}
\resizebox{\textwidth}{!}{%
\begin{tabular}{llcccc}
\toprule
Dataset & Method & CRD($t_1$) & CRD($t_2$) & CRD($t_3$) & Boundary Mass \\
\midrule
FashionMNIST & RRISE & 0.03 $\pm$ 0.00 & 0.02 $\pm$ 0.00 & 0.01 $\pm$ 0.00 & 4.72 $\pm$ 0.01 \\
 & Baseline 1 & 0.03 $\pm$ 0.00 & 0.02 $\pm$ 0.00 & 0.00 $\pm$ 0.00 & 4.76 $\pm$ 0.02 \\
 & Baseline 2 & 0.03 $\pm$ 0.00 & 0.01 $\pm$ 0.00 & 0.00 $\pm$ 0.00 & 4.27 $\pm$ 0.09 \\
 & Baseline 3 & 0.03 $\pm$ 0.00 & 0.02 $\pm$ 0.00 & 0.00 $\pm$ 0.00 & 4.79 $\pm$ 0.02 \\
 & Baseline 4 & 0.04 $\pm$ 0.01 & 0.02 $\pm$ 0.01 & 0.01 $\pm$ 0.00 & 5.52 $\pm$ 0.95 \\
\midrule
CIFAR-10 & RRISE & 0.11 $\pm$ 0.00 & 0.06 $\pm$ 0.00 & 0.02 $\pm$ 0.00 & 16.10 $\pm$ 0.12 \\
 & Baseline 1 & 0.11 $\pm$ 0.00 & 0.07 $\pm$ 0.00 & 0.02 $\pm$ 0.00 & 16.99 $\pm$ 0.16 \\
 & Baseline 2 & 0.10 $\pm$ 0.00 & 0.04 $\pm$ 0.00 & 0.01 $\pm$ 0.00 & 14.92 $\pm$ 0.13 \\
 & Baseline 3 & 0.11 $\pm$ 0.00 & 0.07 $\pm$ 0.00 & 0.02 $\pm$ 0.00 & 17.04 $\pm$ 0.07 \\
 & Baseline 4 & 0.59 $\pm$ 0.03 & 0.47 $\pm$ 0.04 & 0.16 $\pm$ 0.14 & 67.58 $\pm$ 2.47 \\
\midrule
CIFAR-100 & RRISE & 0.16 $\pm$ 0.01 & 0.07 $\pm$ 0.00 & 0.00 $\pm$ 0.00 & 24.16 $\pm$ 0.56 \\
 & Baseline 1 & 0.12 $\pm$ 0.00 & 0.05 $\pm$ 0.00 & 0.00 $\pm$ 0.00 & 19.67 $\pm$ 0.09 \\
 & Baseline 2 & 0.10 $\pm$ 0.00 & 0.04 $\pm$ 0.00 & 0.00 $\pm$ 0.00 & 17.66 $\pm$ 0.23 \\
 & Baseline 3 & 0.12 $\pm$ 0.00 & 0.05 $\pm$ 0.00 & 0.00 $\pm$ 0.00 & 19.65 $\pm$ 0.11 \\
 & Baseline 4 & 0.00 $\pm$ 0.00 & 0.00 $\pm$ 0.00 & 0.00 $\pm$ 0.00 & 0.00 $\pm$ 0.00 \\
\midrule
Tiny ImageNet & RRISE & 0.14 $\pm$ 0.01 & 0.06 $\pm$ 0.00 & 0.00 $\pm$ 0.00 & 22.85 $\pm$ 0.60 \\
 & Baseline 1 & 0.12 $\pm$ 0.00 & 0.05 $\pm$ 0.00 & 0.00 $\pm$ 0.00 & 20.76 $\pm$ 0.08 \\
 & Baseline 2 & 0.10 $\pm$ 0.00 & 0.04 $\pm$ 0.00 & 0.00 $\pm$ 0.00 & 18.25 $\pm$ 0.22 \\
 & Baseline 3 & 0.12 $\pm$ 0.00 & 0.05 $\pm$ 0.00 & 0.00 $\pm$ 0.00 & 20.71 $\pm$ 0.07 \\
 & Baseline 4 & 0.00 $\pm$ 0.00 & 0.00 $\pm$ 0.00 & 0.00 $\pm$ 0.00 & 0.00 $\pm$ 0.00 \\
\midrule
\bottomrule
\end{tabular}%
}
\end{table*}

\subsection{Ablation Studies}
\label{app:ablation_list}

Unless otherwise stated, the main setting uses $n=10000$, the tighter $1\%$ setting for Baselines~2--3, head-only RRISE initialized from the base classifier, and Baseline~4 trained from random initialization following its original offline-surrogate setting.  For surrogate methods, the same calibration policy is applied to RRISE and Baseline~4 so that certified radii are comparable.

\begin{table*}[t]
\centering
\small
\caption{Ablation map.  Each ablation changes one family of choices while leaving the remaining main-setting choices fixed.}
\label{tab:app_ablation_map}
\begin{tabular}{p{0.18\textwidth}p{0.26\textwidth}p{0.23\textwidth}p{0.25\textwidth}}
\toprule
Ablation & Question answered & Varied quantity & Fixed quantities \\
\midrule
Budget & Does more MC evidence improve certification or boundary behavior? & MC budget / offline target budget & Calibration policy and method-specific main hyperparameters. \\
RRISE training strategy & Which parts of RRISE should be trained, and how important is initialization? & Trainable parameter subset and initialization & Offline target budget $n=10000$ and calibration policy. \\
Calibration level & How conservative is the surrogate after post-hoc calibration? & Total failure $\beta_{\mathrm{sur}}+\gamma_{\mathrm{sur}}$ & Trained surrogate checkpoint and evaluation protocol. \\
Baseline~2 decline & How much test-time sampling can the input-specific budget rule save? & Decline level & Maximum budget $k=10000$. \\
Baseline~3 tolerance & How aggressively can early stopping reduce MC sampling? & Stopping tolerance & Maximum budget $n=10000$. \\
Baseline~4 initialization & How sensitive is the JS surrogate to initialization? & Random vs. base-model initialization & Offline target budget $n=10000$ and calibration policy. \\
\bottomrule
\end{tabular}
\end{table*}

\paragraph{Budget ablation.}
Tables~\ref{tab:app_budget_ablation}--\ref{tab:app_budget_ablation_tiny_imagenet} vary the MC budget used by the MC baselines and the offline target budget used by the surrogate methods.  These tables are split by dataset to avoid a single long table.  RRISE and Baseline~4 keep test cost fixed at one forward-pass equivalent because their MC work is moved offline; Baseline~1 scales linearly with the online MC budget.

\begin{table*}[t]
\centering
\scriptsize
\caption{Budget ablation on FashionMNIST. The budget is the online MC sample count for Baseline~1 and the offline target-construction budget for surrogate methods. Boundary metrics use $0.5<\widetilde p_A(\x)<0.75$.}
\label{tab:app_budget_ablation}
\resizebox{\textwidth}{!}{%
\begin{tabular}{llccccc}
\toprule
Method & Budget & CertAcc@0 (\%) & OCA (\%) & Avg. Radius & Boundary Mass & Test Cost \\
\midrule
RRISE & 500 & 87.51 $\pm$ 0.01 & 45.66 $\pm$ 0.52 & 0.08 $\pm$ 0.00 & 4.69 $\pm$ 0.05 & 1.00 $\pm$ 0.00 \\
RRISE & 1000 & 87.49 $\pm$ 0.01 & 44.75 $\pm$ 0.06 & 0.09 $\pm$ 0.00 & 4.79 $\pm$ 0.01 & 1.00 $\pm$ 0.00 \\
RRISE & 5000 & 87.50 $\pm$ 0.02 & 45.55 $\pm$ 0.41 & 0.08 $\pm$ 0.00 & 4.72 $\pm$ 0.04 & 1.00 $\pm$ 0.00 \\
RRISE & 10000 & 87.44 $\pm$ 0.01 & 43.50 $\pm$ 0.21 & 0.08 $\pm$ 0.00 & 4.72 $\pm$ 0.01 & 1.00 $\pm$ 0.00 \\
\midrule
Baseline 1 & 500 & 87.65 $\pm$ 0.02 & 44.77 $\pm$ 0.11 & 0.08 $\pm$ 0.00 & 4.82 $\pm$ 0.07 & 500.00 $\pm$ 0.00 \\
Baseline 1 & 1000 & 87.64 $\pm$ 0.02 & 44.54 $\pm$ 0.26 & 0.08 $\pm$ 0.00 & 4.76 $\pm$ 0.06 & 1000.00 $\pm$ 0.00 \\
Baseline 1 & 5000 & 87.65 $\pm$ 0.01 & 44.89 $\pm$ 0.32 & 0.08 $\pm$ 0.00 & 4.80 $\pm$ 0.06 & 5000.00 $\pm$ 0.00 \\
Baseline 1 & 10000 & 87.63 $\pm$ 0.02 & 45.21 $\pm$ 0.35 & 0.08 $\pm$ 0.00 & 4.76 $\pm$ 0.02 & 10000.00 $\pm$ 0.00 \\
\midrule
Baseline 4 & 500 & 87.52 $\pm$ 0.24 & 47.41 $\pm$ 0.63 & 0.08 $\pm$ 0.00 & 6.12 $\pm$ 0.72 & 1.00 $\pm$ 0.00 \\
Baseline 4 & 1000 & 87.77 $\pm$ 0.07 & 48.32 $\pm$ 0.75 & 0.08 $\pm$ 0.00 & 6.67 $\pm$ 0.15 & 1.00 $\pm$ 0.00 \\
Baseline 4 & 5000 & 87.67 $\pm$ 0.17 & 47.53 $\pm$ 2.14 & 0.08 $\pm$ 0.00 & 6.31 $\pm$ 0.23 & 1.00 $\pm$ 0.00 \\
Baseline 4 & 10000 & 87.50 $\pm$ 0.26 & 45.69 $\pm$ 1.88 & 0.08 $\pm$ 0.00 & 5.52 $\pm$ 0.95 & 1.00 $\pm$ 0.00 \\
\bottomrule
\end{tabular}%
}
\end{table*}

\begin{table*}[t]
\centering
\scriptsize
\caption{Budget ablation on CIFAR-10. The budget is the online MC sample count for Baseline~1 and the offline target-construction budget for surrogate methods. Boundary metrics use $0.5<\widetilde p_A(\x)<0.75$.}
\label{tab:app_budget_ablation_cifar10}
\resizebox{\textwidth}{!}{%
\begin{tabular}{llccccc}
\toprule
Method & Budget & CertAcc@0 (\%) & OCA (\%) & Avg. Radius & Boundary Mass & Test Cost \\
\midrule
RRISE & 500 & 70.38 $\pm$ 0.05 & 44.72 $\pm$ 0.60 & 0.08 $\pm$ 0.00 & 16.84 $\pm$ 0.16 & 1.00 $\pm$ 0.00 \\
RRISE & 1000 & 70.36 $\pm$ 0.09 & 44.52 $\pm$ 0.30 & 0.09 $\pm$ 0.00 & 16.57 $\pm$ 0.09 & 1.00 $\pm$ 0.00 \\
RRISE & 5000 & 70.30 $\pm$ 0.04 & 44.13 $\pm$ 0.06 & 0.08 $\pm$ 0.00 & 16.26 $\pm$ 0.07 & 1.00 $\pm$ 0.00 \\
RRISE & 10000 & 70.26 $\pm$ 0.07 & 43.68 $\pm$ 0.35 & 0.08 $\pm$ 0.00 & 16.10 $\pm$ 0.12 & 1.00 $\pm$ 0.00 \\
\midrule
Baseline 1 & 500 & 70.98 $\pm$ 0.06 & 42.42 $\pm$ 0.36 & 0.08 $\pm$ 0.00 & 16.93 $\pm$ 0.09 & 500.00 $\pm$ 0.00 \\
Baseline 1 & 1000 & 70.94 $\pm$ 0.03 & 42.15 $\pm$ 0.23 & 0.08 $\pm$ 0.00 & 16.89 $\pm$ 0.11 & 1000.00 $\pm$ 0.00 \\
Baseline 1 & 5000 & 70.89 $\pm$ 0.02 & 41.92 $\pm$ 0.16 & 0.08 $\pm$ 0.00 & 17.05 $\pm$ 0.12 & 5000.00 $\pm$ 0.00 \\
Baseline 1 & 10000 & 70.88 $\pm$ 0.03 & 41.63 $\pm$ 0.11 & 0.08 $\pm$ 0.00 & 16.99 $\pm$ 0.16 & 10000.00 $\pm$ 0.00 \\
\midrule
Baseline 4 & 500 & 72.92 $\pm$ 0.65 & 86.47 $\pm$ 0.42 & 0.08 $\pm$ 0.01 & 62.45 $\pm$ 0.45 & 1.00 $\pm$ 0.00 \\
Baseline 4 & 1000 & 73.26 $\pm$ 0.46 & 85.85 $\pm$ 0.31 & 0.10 $\pm$ 0.02 & 64.92 $\pm$ 2.10 & 1.00 $\pm$ 0.00 \\
Baseline 4 & 5000 & 72.71 $\pm$ 0.29 & 81.46 $\pm$ 5.10 & 0.11 $\pm$ 0.01 & 58.54 $\pm$ 16.12 & 1.00 $\pm$ 0.00 \\
Baseline 4 & 10000 & 72.79 $\pm$ 0.46 & 84.74 $\pm$ 0.30 & 0.11 $\pm$ 0.01 & 67.58 $\pm$ 2.47 & 1.00 $\pm$ 0.00 \\
\bottomrule
\end{tabular}%
}
\end{table*}

\begin{table*}[t]
\centering
\scriptsize
\caption{Budget ablation on CIFAR-100. The budget is the online MC sample count for Baseline~1 and the offline target-construction budget for surrogate methods. Boundary metrics use $0.5<\widetilde p_A(\x)<0.75$.}
\label{tab:app_budget_ablation_cifar100}
\resizebox{\textwidth}{!}{%
\begin{tabular}{llccccc}
\toprule
Method & Budget & CertAcc@0 (\%) & OCA (\%) & Avg. Radius & Boundary Mass & Test Cost \\
\midrule
RRISE & 500 & 33.81 $\pm$ 0.03 & 22.62 $\pm$ 0.19 & 0.04 $\pm$ 0.00 & 28.13 $\pm$ 0.47 & 1.00 $\pm$ 0.00 \\
RRISE & 1000 & 33.89 $\pm$ 0.02 & 22.27 $\pm$ 0.91 & 0.04 $\pm$ 0.00 & 27.98 $\pm$ 1.31 & 1.00 $\pm$ 0.00 \\
RRISE & 5000 & 33.93 $\pm$ 0.03 & 21.51 $\pm$ 0.71 & 0.04 $\pm$ 0.00 & 24.84 $\pm$ 1.22 & 1.00 $\pm$ 0.00 \\
RRISE & 10000 & 33.91 $\pm$ 0.02 & 20.99 $\pm$ 0.66 & 0.03 $\pm$ 0.00 & 24.16 $\pm$ 0.56 & 1.00 $\pm$ 0.00 \\
\midrule
Baseline 1 & 500 & 34.83 $\pm$ 0.07 & 15.23 $\pm$ 0.03 & 0.03 $\pm$ 0.00 & 19.53 $\pm$ 0.06 & 500.00 $\pm$ 0.00 \\
Baseline 1 & 1000 & 34.83 $\pm$ 0.02 & 15.33 $\pm$ 0.20 & 0.03 $\pm$ 0.00 & 19.62 $\pm$ 0.05 & 1000.00 $\pm$ 0.00 \\
Baseline 1 & 5000 & 34.77 $\pm$ 0.03 & 14.79 $\pm$ 0.02 & 0.03 $\pm$ 0.00 & 19.63 $\pm$ 0.07 & 5000.00 $\pm$ 0.00 \\
Baseline 1 & 10000 & 34.75 $\pm$ 0.00 & 14.68 $\pm$ 0.01 & 0.03 $\pm$ 0.00 & 19.67 $\pm$ 0.09 & 10000.00 $\pm$ 0.00 \\
\midrule
Baseline 4 & 500 & 17.97 $\pm$ 0.79 & -- & -- & 0.00 $\pm$ 0.00 & 1.00 $\pm$ 0.00 \\
Baseline 4 & 1000 & 17.47 $\pm$ 1.08 & -- & -- & 0.00 $\pm$ 0.00 & 1.00 $\pm$ 0.00 \\
Baseline 4 & 5000 & 17.27 $\pm$ 0.82 & -- & -- & 0.00 $\pm$ 0.00 & 1.00 $\pm$ 0.00 \\
Baseline 4 & 10000 & 17.76 $\pm$ 0.74 & -- & -- & 0.00 $\pm$ 0.00 & 1.00 $\pm$ 0.00 \\
\bottomrule
\end{tabular}%
}
\end{table*}

\begin{table*}[t]
\centering
\scriptsize
\caption{Budget ablation on Tiny ImageNet. The budget is the online MC sample count for Baseline~1 and the offline target-construction budget for surrogate methods. Boundary metrics use $0.5<\widetilde p_A(\x)<0.75$.}
\label{tab:app_budget_ablation_tiny_imagenet}
\resizebox{\textwidth}{!}{%
\begin{tabular}{llccccc}
\toprule
Method & Budget & CertAcc@0 (\%) & OCA (\%) & Avg. Radius & Boundary Mass & Test Cost \\
\midrule
RRISE & 500 & 27.33 $\pm$ 0.01 & 14.72 $\pm$ 0.20 & 0.03 $\pm$ 0.00 & 24.15 $\pm$ 0.46 & 1.00 $\pm$ 0.00 \\
RRISE & 1000 & 27.41 $\pm$ 0.03 & 15.24 $\pm$ 0.37 & 0.03 $\pm$ 0.00 & 23.68 $\pm$ 0.57 & 1.00 $\pm$ 0.00 \\
RRISE & 5000 & 27.50 $\pm$ 0.01 & 15.31 $\pm$ 0.19 & 0.03 $\pm$ 0.00 & 22.48 $\pm$ 0.12 & 1.00 $\pm$ 0.00 \\
RRISE & 10000 & 27.43 $\pm$ 0.04 & 14.88 $\pm$ 0.38 & 0.03 $\pm$ 0.00 & 22.85 $\pm$ 0.60 & 1.00 $\pm$ 0.00 \\
\midrule
Baseline 1 & 500 & 27.96 $\pm$ 0.06 & 14.70 $\pm$ 0.28 & 0.03 $\pm$ 0.00 & 20.55 $\pm$ 0.16 & 500.00 $\pm$ 0.00 \\
Baseline 1 & 1000 & 27.94 $\pm$ 0.02 & 14.94 $\pm$ 0.33 & 0.03 $\pm$ 0.00 & 20.61 $\pm$ 0.19 & 1000.00 $\pm$ 0.00 \\
Baseline 1 & 5000 & 27.95 $\pm$ 0.01 & 14.70 $\pm$ 0.07 & 0.03 $\pm$ 0.00 & 20.77 $\pm$ 0.09 & 5000.00 $\pm$ 0.00 \\
Baseline 1 & 10000 & 27.98 $\pm$ 0.01 & 14.61 $\pm$ 0.02 & 0.03 $\pm$ 0.00 & 20.76 $\pm$ 0.08 & 10000.00 $\pm$ 0.00 \\
\midrule
Baseline 4 & 500 & 22.10 $\pm$ 0.36 & -- & -- & 0.00 $\pm$ 0.00 & 1.00 $\pm$ 0.00 \\
Baseline 4 & 1000 & 22.04 $\pm$ 0.63 & -- & -- & 0.00 $\pm$ 0.00 & 1.00 $\pm$ 0.00 \\
Baseline 4 & 5000 & 22.07 $\pm$ 0.31 & -- & -- & 0.00 $\pm$ 0.00 & 1.00 $\pm$ 0.00 \\
Baseline 4 & 10000 & 22.37 $\pm$ 0.55 & -- & -- & 0.00 $\pm$ 0.00 & 1.00 $\pm$ 0.00 \\
\bottomrule
\end{tabular}%
}
\end{table*}

\paragraph{RRISE training strategy.}
Table~\ref{tab:app_rrise_training_ablation} compares RRISE variants with different trainable parameter subsets and initialization choices.  The main-body configuration is \texttt{trainhead\_initbase\_n10000}.  The comparison is useful because high CertAcc@0 alone does not guarantee a meaningful boundary subset: some variants achieve high or low certified accuracy while producing zero boundary mass after calibration.

\begin{table*}[t]
\centering
\scriptsize
\caption{RRISE training-strategy ablation. This table isolates which parameters are trained and whether the network is initialized from the base classifier or randomly.}
\label{tab:app_rrise_training_ablation}
\resizebox{\textwidth}{!}{%
\begin{tabular}{llcccccc}
\toprule
Dataset & Training Strategy & CertAcc@0 (\%) & OCA (\%) & Avg. Radius & Boundary Mass & $\delta$ & Best Epoch \\
\midrule
FashionMNIST & trainall\_initbase\_n10000 & 87.90 $\pm$ 0.05 & 49.60 $\pm$ 1.43 & 0.08 $\pm$ 0.00 & 5.08 $\pm$ 0.23 & 0.00 $\pm$ 0.00 & 6.67 $\pm$ 4.16 \\
FashionMNIST & trainall\_initrandom\_n10000 & 87.71 $\pm$ 0.06 & 46.79 $\pm$ 0.66 & 0.08 $\pm$ 0.00 & 6.43 $\pm$ 0.12 & 0.00 $\pm$ 0.00 & 70.67 $\pm$ 15.95 \\
FashionMNIST & trainhead\_initbase\_n10000 & 87.44 $\pm$ 0.01 & 43.50 $\pm$ 0.21 & 0.08 $\pm$ 0.00 & 4.72 $\pm$ 0.01 & 0.00 $\pm$ 0.00 & 164.00 $\pm$ 3.61 \\
\midrule
CIFAR-10 & trainall\_initbase\_n10000 & 70.93 $\pm$ 0.10 & 42.51 $\pm$ 0.33 & 0.08 $\pm$ 0.00 & 16.52 $\pm$ 0.37 & 0.01 $\pm$ 0.00 & 1.67 $\pm$ 0.58 \\
CIFAR-10 & trainall\_initrandom\_n10000 & 72.56 $\pm$ 0.38 & 67.59 $\pm$ 4.91 & 0.09 $\pm$ 0.01 & 30.22 $\pm$ 5.75 & 0.19 $\pm$ 0.04 & 124.33 $\pm$ 65.24 \\
CIFAR-10 & trainhead\_initbase\_n10000 & 70.26 $\pm$ 0.07 & 43.68 $\pm$ 0.35 & 0.08 $\pm$ 0.00 & 16.10 $\pm$ 0.12 & 0.02 $\pm$ 0.01 & 146.00 $\pm$ 21.93 \\
\midrule
CIFAR-100 & trainall\_initbase\_n10000 & 35.80 $\pm$ 0.09 & 20.80 $\pm$ 0.77 & 0.03 $\pm$ 0.00 & 22.51 $\pm$ 0.19 & 0.00 $\pm$ 0.00 & 2.00 $\pm$ 0.00 \\
CIFAR-100 & trainall\_initrandom\_n10000 & 24.21 $\pm$ 0.57 & -- & -- & 0.00 $\pm$ 0.00 & 0.73 $\pm$ 0.03 & 208.33 $\pm$ 117.07 \\
CIFAR-100 & trainhead\_initbase\_n10000 & 33.91 $\pm$ 0.02 & 20.99 $\pm$ 0.66 & 0.03 $\pm$ 0.00 & 24.16 $\pm$ 0.56 & 0.16 $\pm$ 0.01 & 130.00 $\pm$ 6.24 \\
\midrule
Tiny ImageNet & trainall\_initbase\_n10000 & 27.87 $\pm$ 0.30 & 18.40 $\pm$ 0.27 & 0.03 $\pm$ 0.00 & 26.58 $\pm$ 0.37 & 0.01 $\pm$ 0.02 & 3.00 $\pm$ 0.00 \\
Tiny ImageNet & trainall\_initrandom\_n10000 & 24.71 $\pm$ 0.42 & -- & -- & 0.00 $\pm$ 0.00 & 0.57 $\pm$ 0.04 & 438.67 $\pm$ 24.95 \\
Tiny ImageNet & trainhead\_initbase\_n10000 & 27.43 $\pm$ 0.04 & 14.88 $\pm$ 0.38 & 0.03 $\pm$ 0.00 & 22.85 $\pm$ 0.60 & 0.10 $\pm$ 0.02 & 204.00 $\pm$ 17.09 \\
\bottomrule
\end{tabular}%
}
\end{table*}

\paragraph{Calibration level.}
Tables~\ref{tab:app_calibration_ablation}--\ref{tab:app_calibration_ablation_tiny_imagenet} vary the surrogate total failure level $\beta_{\mathrm{sur}}+\gamma_{\mathrm{sur}}$.  Smaller total-failure values impose stricter calibration, which increases the calibration offset $\delta$ and can shrink radii or eliminate the boundary-confidence subset.  The split tables make the calibration behavior of RRISE and Baseline~4 directly comparable on each dataset.

\begin{table*}[t]
\centering
\scriptsize
\caption{Calibration-level ablation on FashionMNIST. Total Failure denotes $\beta_{\mathrm{sur}}+\gamma_{\mathrm{sur}}$; smaller values impose stricter surrogate calibration.}
\label{tab:app_calibration_ablation}
\resizebox{\textwidth}{!}{%
\begin{tabular}{llccccc}
\toprule
Method & Total Failure & $\delta$ & CertAcc@0 (\%) & OCA (\%) & Avg. Radius & Boundary Mass \\
\midrule
RRISE & 0.01 & 0.28 $\pm$ 0.02 & 87.44 $\pm$ 0.01 & 90.14 $\pm$ 0.17 & 0.14 $\pm$ 0.01 & 94.11 $\pm$ 0.43 \\
RRISE & 0.05 & 0.07 $\pm$ 0.01 & 87.44 $\pm$ 0.01 & 46.42 $\pm$ 0.42 & 0.09 $\pm$ 0.00 & 5.18 $\pm$ 0.13 \\
RRISE & 0.1 & 0.00 $\pm$ 0.00 & 87.44 $\pm$ 0.01 & 43.42 $\pm$ 0.20 & 0.08 $\pm$ 0.00 & 4.71 $\pm$ 0.03 \\
RRISE & 0.25 & 0.00 $\pm$ 0.00 & 87.44 $\pm$ 0.01 & 43.50 $\pm$ 0.21 & 0.08 $\pm$ 0.00 & 4.72 $\pm$ 0.01 \\
\midrule
Baseline 4 & 0.01 & 0.82 $\pm$ 0.17 & 87.50 $\pm$ 0.26 & -- & -- & 0.00 $\pm$ 0.00 \\
Baseline 4 & 0.05 & 0.33 $\pm$ 0.12 & 87.50 $\pm$ 0.26 & 80.72 $\pm$ 19.08 & 0.07 $\pm$ 0.02 & 62.86 $\pm$ 46.20 \\
Baseline 4 & 0.1 & 0.03 $\pm$ 0.02 & 87.50 $\pm$ 0.26 & 46.43 $\pm$ 1.09 & 0.09 $\pm$ 0.00 & 5.70 $\pm$ 0.97 \\
Baseline 4 & 0.25 & 0.00 $\pm$ 0.00 & 87.50 $\pm$ 0.26 & 45.69 $\pm$ 1.88 & 0.08 $\pm$ 0.00 & 5.52 $\pm$ 0.95 \\
\bottomrule
\end{tabular}%
}
\end{table*}

\begin{table*}[t]
\centering
\scriptsize
\caption{Calibration-level ablation on CIFAR-10. Total Failure denotes $\beta_{\mathrm{sur}}+\gamma_{\mathrm{sur}}$; smaller values impose stricter surrogate calibration.}
\label{tab:app_calibration_ablation_cifar10}
\resizebox{\textwidth}{!}{%
\begin{tabular}{llccccc}
\toprule
Method & Total Failure & $\delta$ & CertAcc@0 (\%) & OCA (\%) & Avg. Radius & Boundary Mass \\
\midrule
RRISE & 0.01 & 0.40 $\pm$ 0.02 & 70.26 $\pm$ 0.07 & 84.12 $\pm$ 1.09 & 0.05 $\pm$ 0.01 & 65.26 $\pm$ 3.21 \\
RRISE & 0.05 & 0.22 $\pm$ 0.01 & 70.26 $\pm$ 0.07 & 56.35 $\pm$ 1.40 & 0.10 $\pm$ 0.00 & 27.59 $\pm$ 2.20 \\
RRISE & 0.1 & 0.14 $\pm$ 0.01 & 70.26 $\pm$ 0.07 & 49.11 $\pm$ 0.27 & 0.09 $\pm$ 0.00 & 19.41 $\pm$ 0.34 \\
RRISE & 0.25 & 0.02 $\pm$ 0.01 & 70.26 $\pm$ 0.07 & 43.68 $\pm$ 0.35 & 0.08 $\pm$ 0.00 & 16.10 $\pm$ 0.12 \\
\midrule
Baseline 4 & 0.01 & 0.96 $\pm$ 0.03 & 72.79 $\pm$ 0.46 & -- & -- & 0.00 $\pm$ 0.00 \\
Baseline 4 & 0.05 & 0.76 $\pm$ 0.02 & 72.79 $\pm$ 0.46 & -- & -- & 0.00 $\pm$ 0.00 \\
Baseline 4 & 0.1 & 0.61 $\pm$ 0.02 & 72.79 $\pm$ 0.46 & -- & -- & 0.00 $\pm$ 0.00 \\
Baseline 4 & 0.25 & 0.28 $\pm$ 0.02 & 72.79 $\pm$ 0.46 & 84.74 $\pm$ 0.30 & 0.11 $\pm$ 0.01 & 67.58 $\pm$ 2.47 \\
\bottomrule
\end{tabular}%
}
\end{table*}

\begin{table*}[t]
\centering
\scriptsize
\caption{Calibration-level ablation on CIFAR-100. Total Failure denotes $\beta_{\mathrm{sur}}+\gamma_{\mathrm{sur}}$; smaller values impose stricter surrogate calibration.}
\label{tab:app_calibration_ablation_cifar100}
\resizebox{\textwidth}{!}{%
\begin{tabular}{llccccc}
\toprule
Method & Total Failure & $\delta$ & CertAcc@0 (\%) & OCA (\%) & Avg. Radius & Boundary Mass \\
\midrule
RRISE & 0.01 & 0.76 $\pm$ 0.04 & 33.91 $\pm$ 0.02 & -- & -- & 0.00 $\pm$ 0.00 \\
RRISE & 0.05 & 0.57 $\pm$ 0.02 & 33.91 $\pm$ 0.02 & -- & -- & 0.00 $\pm$ 0.00 \\
RRISE & 0.1 & 0.44 $\pm$ 0.01 & 33.91 $\pm$ 0.02 & 55.18 $\pm$ 0.86 & 0.01 $\pm$ 0.00 & 42.83 $\pm$ 1.31 \\
RRISE & 0.25 & 0.16 $\pm$ 0.01 & 33.91 $\pm$ 0.02 & 20.99 $\pm$ 0.66 & 0.03 $\pm$ 0.00 & 24.16 $\pm$ 0.56 \\
\midrule
Baseline 4 & 0.01 & 1.00 $\pm$ 0.00 & 17.76 $\pm$ 0.74 & -- & -- & 0.00 $\pm$ 0.00 \\
Baseline 4 & 0.05 & 1.00 $\pm$ 0.00 & 17.76 $\pm$ 0.74 & -- & -- & 0.00 $\pm$ 0.00 \\
Baseline 4 & 0.1 & 1.00 $\pm$ 0.00 & 17.76 $\pm$ 0.74 & -- & -- & 0.00 $\pm$ 0.00 \\
Baseline 4 & 0.25 & 0.98 $\pm$ 0.01 & 17.76 $\pm$ 0.74 & -- & -- & 0.00 $\pm$ 0.00 \\
\bottomrule
\end{tabular}%
}
\end{table*}

\begin{table*}[t]
\centering
\scriptsize
\caption{Calibration-level ablation on Tiny ImageNet. Total Failure denotes $\beta_{\mathrm{sur}}+\gamma_{\mathrm{sur}}$; smaller values impose stricter surrogate calibration.}
\label{tab:app_calibration_ablation_tiny_imagenet}
\resizebox{\textwidth}{!}{%
\begin{tabular}{llccccc}
\toprule
Method & Total Failure & $\delta$ & CertAcc@0 (\%) & OCA (\%) & Avg. Radius & Boundary Mass \\
\midrule
RRISE & 0.01 & 0.66 $\pm$ 0.03 & 27.43 $\pm$ 0.04 & -- & -- & 0.00 $\pm$ 0.00 \\
RRISE & 0.05 & 0.45 $\pm$ 0.02 & 27.43 $\pm$ 0.04 & 44.10 $\pm$ 2.39 & 0.01 $\pm$ 0.00 & 43.20 $\pm$ 4.79 \\
RRISE & 0.1 & 0.33 $\pm$ 0.00 & 27.43 $\pm$ 0.04 & 36.49 $\pm$ 0.05 & 0.04 $\pm$ 0.00 & 61.54 $\pm$ 0.24 \\
RRISE & 0.25 & 0.10 $\pm$ 0.02 & 27.43 $\pm$ 0.04 & 14.88 $\pm$ 0.38 & 0.03 $\pm$ 0.00 & 22.85 $\pm$ 0.60 \\
\midrule
Baseline 4 & 0.01 & 0.99 $\pm$ 0.00 & 22.37 $\pm$ 0.55 & -- & -- & 0.00 $\pm$ 0.00 \\
Baseline 4 &  0.05 & 0.94 $\pm$ 0.01 & 22.37 $\pm$ 0.55 & -- & -- & 0.00 $\pm$ 0.00 \\
Baseline 4 & 0.1 & 0.86 $\pm$ 0.04 & 22.37 $\pm$ 0.55 & -- & -- & 0.00 $\pm$ 0.00 \\
Baseline 4 & 0.25 & 0.65 $\pm$ 0.07 & 22.37 $\pm$ 0.55 & -- & -- & 0.00 $\pm$ 0.00 \\
\bottomrule
\end{tabular}%
}
\end{table*}

\paragraph{Baseline~2 decline level.}
Table~\ref{tab:app_baseline2_ablation} reports the Baseline~2 ablation over decline levels.  The decline level controls the input-specific budgeting rule.  Larger decline levels substantially reduce test-time cost through higher sample-reduction factors, while the reported CertAcc@0 changes only mildly in these runs.

\begin{table*}[t]
\centering
\scriptsize
\caption{Baseline~2 decline-level ablation.  Sample Red. is the fixed-budget sample count divided by the realized average test cost.}
\label{tab:app_baseline2_ablation}
\resizebox{\textwidth}{!}{%
\begin{tabular}{llcccccc}
\toprule
Dataset & Variant & CertAcc@0 (\%) & OCA (\%) & Avg. Radius & Boundary Mass & Test Cost & Sample Red. \\
\midrule
FashionMNIST & k10000\_decline0.01 & 87.61 $\pm$ 0.03 & 44.70 $\pm$ 0.54 & 0.07 $\pm$ 0.00 & 4.27 $\pm$ 0.09 & 6377.85 $\pm$ 10.99 & 1.57 $\pm$ 0.00 \\
FashionMNIST & k10000\_decline0.05 & 87.65 $\pm$ 0.03 & 44.79 $\pm$ 0.83 & 0.07 $\pm$ 0.00 & 4.23 $\pm$ 0.11 & 2171.48 $\pm$ 5.94 & 4.61 $\pm$ 0.01 \\
\midrule
CIFAR-10 & k10000\_decline0.01 & 70.87 $\pm$ 0.05 & 42.64 $\pm$ 0.28 & 0.07 $\pm$ 0.00 & 14.92 $\pm$ 0.13 & 5883.06 $\pm$ 11.69 & 1.70 $\pm$ 0.00 \\
CIFAR-10 & k10000\_decline0.05 & 70.96 $\pm$ 0.20 & 42.17 $\pm$ 0.19 & 0.07 $\pm$ 0.00 & 14.95 $\pm$ 0.17 & 2372.89 $\pm$ 10.18 & 4.21 $\pm$ 0.02 \\
\midrule
CIFAR-100 & k10000\_decline0.01 & 34.82 $\pm$ 0.06 & 15.25 $\pm$ 0.27 & 0.03 $\pm$ 0.00 & 17.66 $\pm$ 0.23 & 5579.45 $\pm$ 15.43 & 1.79 $\pm$ 0.00 \\
CIFAR-100 & k10000\_decline0.05 & 34.82 $\pm$ 0.07 & 15.04 $\pm$ 0.30 & 0.03 $\pm$ 0.00 & 17.51 $\pm$ 0.17 & 2358.21 $\pm$ 18.83 & 4.24 $\pm$ 0.03 \\
\midrule
Tiny ImageNet & k10000\_decline0.01 & 28.02 $\pm$ 0.06 & 14.89 $\pm$ 0.72 & 0.03 $\pm$ 0.00 & 18.25 $\pm$ 0.22 & 5547.50 $\pm$ 5.51 & 1.80 $\pm$ 0.00 \\
Tiny ImageNet & k10000\_decline0.05 & 28.01 $\pm$ 0.09 & 14.55 $\pm$ 0.51 & 0.03 $\pm$ 0.00 & 18.19 $\pm$ 0.19 & 2354.77 $\pm$ 18.48 & 4.25 $\pm$ 0.03 \\
\bottomrule
\end{tabular}%
}
\end{table*}

\paragraph{Baseline~3 stopping tolerance.}
Table~\ref{tab:app_baseline3_ablation} reports the Baseline~3 ablation over stopping tolerances.  The stopping tolerance controls how aggressively the early-stopping rule reduces MC sampling.  Larger tolerance values reduce test cost more strongly; the table should therefore be read as a cost--certification tradeoff rather than a pure accuracy comparison.

\begin{table*}[t]
\centering
\scriptsize
\caption{Baseline~3 stopping-tolerance ablation.  Larger tolerance values allow earlier termination and therefore lower average test cost.}
\label{tab:app_baseline3_ablation}
\resizebox{\textwidth}{!}{%
\begin{tabular}{llcccccc}
\toprule
Dataset & Variant & CertAcc@0 (\%) & OCA (\%) & Avg. Radius & Boundary Mass & Test Cost & Sample Red. \\
\midrule
FashionMNIST & n10000\_tol0.01 & 87.64 $\pm$ 0.01 & 44.82 $\pm$ 0.08 & 0.08 $\pm$ 0.00 & 4.79 $\pm$ 0.02 & 6429.47 $\pm$ 2.31 & 1.56 $\pm$ 0.00 \\
FashionMNIST & n10000\_tol0.05 & 87.65 $\pm$ 0.03 & 44.45 $\pm$ 0.33 & 0.08 $\pm$ 0.00 & 4.84 $\pm$ 0.01 & 1514.17 $\pm$ 3.10 & 6.60 $\pm$ 0.01 \\
\midrule
CIFAR-10 & n10000\_tol0.01 & 70.87 $\pm$ 0.01 & 41.61 $\pm$ 0.09 & 0.08 $\pm$ 0.00 & 17.04 $\pm$ 0.07 & 7281.03 $\pm$ 1.89 & 1.37 $\pm$ 0.00 \\
CIFAR-10 & n10000\_tol0.05 & 70.86 $\pm$ 0.01 & 41.90 $\pm$ 0.16 & 0.08 $\pm$ 0.00 & 17.21 $\pm$ 0.04 & 2757.70 $\pm$ 2.35 & 3.63 $\pm$ 0.00 \\
\midrule
CIFAR-100 & n10000\_tol0.01 & 34.78 $\pm$ 0.01 & 14.85 $\pm$ 0.12 & 0.03 $\pm$ 0.00 & 19.65 $\pm$ 0.11 & 7413.80 $\pm$ 2.80 & 1.35 $\pm$ 0.00 \\
CIFAR-100 & n10000\_tol0.05 & 34.78 $\pm$ 0.03 & 15.05 $\pm$ 0.14 & 0.03 $\pm$ 0.00 & 19.80 $\pm$ 0.03 & 3037.67 $\pm$ 3.87 & 3.29 $\pm$ 0.00 \\
\midrule
Tiny ImageNet & n10000\_tol0.01 & 27.94 $\pm$ 0.01 & 14.58 $\pm$ 0.15 & 0.03 $\pm$ 0.00 & 20.71 $\pm$ 0.07 & 7498.50 $\pm$ 1.73 & 1.33 $\pm$ 0.00 \\
Tiny ImageNet & n10000\_tol0.05 & 27.94 $\pm$ 0.01 & 14.57 $\pm$ 0.12 & 0.03 $\pm$ 0.00 & 21.00 $\pm$ 0.09 & 3158.50 $\pm$ 2.45 & 3.17 $\pm$ 0.00 \\
\bottomrule
\end{tabular}%
}
\end{table*}

\paragraph{Baseline~4 initialization.}
Table~\ref{tab:app_baseline4_init_ablation} reports the Baseline~4 initialization ablation.  The main comparison uses random initialization (\texttt{init0}), following the original offline-surrogate setting.  We also report base-model initialization (\texttt{init1}) to quantify how strongly the Jensen--Shannon surrogate depends on initialization.

\begin{table*}[t]
\centering
\scriptsize
\caption{Baseline~4 initialization ablation.  	exttt{init0} is random initialization and 	exttt{init1} is base-model initialization.}
\label{tab:app_baseline4_init_ablation}
\resizebox{\textwidth}{!}{%
\begin{tabular}{llcccccc}
\toprule
Dataset & Variant & CertAcc@0 (\%) & OCA (\%) & Avg. Radius & Boundary Mass & \(\delta\) & Best Epoch \\
\midrule
FashionMNIST & init0\_n10000 & 87.50 $\pm$ 0.26 & 45.69 $\pm$ 1.88 & 0.08 $\pm$ 0.00 & 5.52 $\pm$ 0.95 & 0.00 $\pm$ 0.00 & 38.33 $\pm$ 26.50 \\
FashionMNIST & init1\_n10000 & 87.96 $\pm$ 0.05 & 47.62 $\pm$ 0.45 & 0.08 $\pm$ 0.00 & 4.54 $\pm$ 0.32 & 0.00 $\pm$ 0.00 & 13.33 $\pm$ 8.96 \\
\midrule
CIFAR-10 & init0\_n10000 & 72.79 $\pm$ 0.46 & 84.74 $\pm$ 0.30 & 0.11 $\pm$ 0.01 & 67.58 $\pm$ 2.47 & 0.28 $\pm$ 0.02 & 135.67 $\pm$ 22.05 \\
CIFAR-10 & init1\_n10000 & 72.77 $\pm$ 0.45 & 46.67 $\pm$ 0.15 & 0.08 $\pm$ 0.00 & 16.83 $\pm$ 0.10 & 0.03 $\pm$ 0.01 & 8.00 $\pm$ 4.36 \\
\midrule
CIFAR-100 & init0\_n10000 & 17.76 $\pm$ 0.74 & -- & -- & 0.00 $\pm$ 0.00 & 0.98 $\pm$ 0.01 & 118.00 $\pm$ 52.00 \\
CIFAR-100 & init1\_n10000 & 40.04 $\pm$ 0.17 & 74.27 $\pm$ 0.49 & 0.02 $\pm$ 0.00 & 29.84 $\pm$ 0.82 & 0.42 $\pm$ 0.01 & 1.00 $\pm$ 0.00 \\
\midrule
Tiny ImageNet & init0\_n10000 & 22.37 $\pm$ 0.55 & -- & -- & 0.00 $\pm$ 0.00 & 0.65 $\pm$ 0.07 & 101.67 $\pm$ 29.57 \\
Tiny ImageNet & init1\_n10000 & 29.06 $\pm$ 0.08 & 30.01 $\pm$ 8.67 & 0.04 $\pm$ 0.01 & 42.23 $\pm$ 14.02 & 0.23 $\pm$ 0.02 & 15.33 $\pm$ 6.51 \\
\bottomrule
\end{tabular}%
}
\end{table*}

\subsection{Full Certified Radius Distribution Results}
\label{app:full_crd_results}

The main text reports representative CRD values and CRD curves.  For completeness, Tables~\ref{tab:app_crd_boundary_full}--\ref{tab:app_crd_certified_full_tiny_imagenet} report the complete CRD results over all evaluated thresholds, datasets, methods, budgets, and variants.  The original long-row format has been converted into a wide threshold format: each row is one method--variant--budget setting, and the threshold columns contain the same values as the original rows.

\paragraph{Boundary-confidence CRD.}
Tables~\ref{tab:app_crd_boundary_full}--\ref{tab:app_crd_boundary_full_tiny_imagenet} report the fraction of all test samples that both fall in the boundary-confidence subset $0.5<\widetilde p_A(\x)<0.75$ and satisfy $\widetilde R(\x)>t$.  The thresholds differ by dataset family: FashionMNIST and CIFAR-10 use $t\in\{0,0.025,\ldots,0.175\}$; CIFAR-100 and Tiny ImageNet use $t\in\{0,0.0125,\ldots,0.0875\}$.

\begin{table*}[t]
\centering
\scriptsize
\caption{Full boundary-confidence CRD results on FashionMNIST. Entries are percentages of all test inputs satisfying both the boundary-confidence condition and the radius threshold.}
\label{tab:app_crd_boundary_full}
\resizebox{\textwidth}{!}{%
\begin{tabular}{lllcccccccc}
\toprule
Method & Variant & Budget & $t=0$ & $t=0.025$ & $t=0.05$ & $t=0.075$ & $t=0.1$ & $t=0.125$ & $t=0.15$ & $t=0.175$ \\
\midrule
RRISE & trainhead\_initbase\_n500 & 500 & 0.05 $\pm$ 0.00 & 0.04 $\pm$ 0.00 & 0.03 $\pm$ 0.00 & 0.03 $\pm$ 0.00 & 0.02 $\pm$ 0.00 & 0.01 $\pm$ 0.00 & 0.01 $\pm$ 0.00 & 0.00 $\pm$ 0.00 \\
RRISE & trainhead\_initbase\_n1000 & 1000 & 0.05 $\pm$ 0.00 & 0.04 $\pm$ 0.00 & 0.03 $\pm$ 0.00 & 0.03 $\pm$ 0.00 & 0.02 $\pm$ 0.00 & 0.01 $\pm$ 0.00 & 0.01 $\pm$ 0.00 & 0.00 $\pm$ 0.00 \\
RRISE & trainhead\_initbase\_n5000 & 5000 & 0.05 $\pm$ 0.00 & 0.04 $\pm$ 0.00 & 0.03 $\pm$ 0.00 & 0.03 $\pm$ 0.00 & 0.02 $\pm$ 0.00 & 0.01 $\pm$ 0.00 & 0.01 $\pm$ 0.00 & 0.00 $\pm$ 0.00 \\
RRISE & trainhead\_initbase\_n10000 & 10000 & 0.05 $\pm$ 0.00 & 0.04 $\pm$ 0.00 & 0.03 $\pm$ 0.00 & 0.03 $\pm$ 0.00 & 0.02 $\pm$ 0.00 & 0.01 $\pm$ 0.00 & 0.01 $\pm$ 0.00 & 0.00 $\pm$ 0.00 \\
RRISE & trainall\_initbase\_n500 & 500 & 0.05 $\pm$ 0.00 & 0.04 $\pm$ 0.00 & 0.04 $\pm$ 0.00 & 0.03 $\pm$ 0.00 & 0.02 $\pm$ 0.00 & 0.01 $\pm$ 0.00 & 0.01 $\pm$ 0.00 & 0.00 $\pm$ 0.00 \\
RRISE & trainall\_initbase\_n1000 & 1000 & 0.05 $\pm$ 0.00 & 0.04 $\pm$ 0.00 & 0.04 $\pm$ 0.00 & 0.03 $\pm$ 0.00 & 0.02 $\pm$ 0.00 & 0.01 $\pm$ 0.00 & 0.01 $\pm$ 0.00 & 0.00 $\pm$ 0.00 \\
RRISE & trainall\_initbase\_n5000 & 5000 & 0.05 $\pm$ 0.00 & 0.04 $\pm$ 0.00 & 0.04 $\pm$ 0.00 & 0.03 $\pm$ 0.00 & 0.02 $\pm$ 0.00 & 0.01 $\pm$ 0.00 & 0.01 $\pm$ 0.00 & 0.00 $\pm$ 0.00 \\
RRISE & trainall\_initbase\_n10000 & 10000 & 0.05 $\pm$ 0.00 & 0.04 $\pm$ 0.00 & 0.04 $\pm$ 0.00 & 0.03 $\pm$ 0.00 & 0.02 $\pm$ 0.00 & 0.01 $\pm$ 0.00 & 0.01 $\pm$ 0.00 & 0.00 $\pm$ 0.00 \\
RRISE & trainall\_initrandom\_n500 & 500 & 0.06 $\pm$ 0.00 & 0.05 $\pm$ 0.00 & 0.04 $\pm$ 0.00 & 0.03 $\pm$ 0.00 & 0.03 $\pm$ 0.00 & 0.02 $\pm$ 0.00 & 0.01 $\pm$ 0.00 & 0.00 $\pm$ 0.00 \\
RRISE & trainall\_initrandom\_n1000 & 1000 & 0.06 $\pm$ 0.00 & 0.05 $\pm$ 0.00 & 0.04 $\pm$ 0.00 & 0.04 $\pm$ 0.00 & 0.03 $\pm$ 0.00 & 0.02 $\pm$ 0.00 & 0.01 $\pm$ 0.00 & 0.00 $\pm$ 0.00 \\
RRISE & trainall\_initrandom\_n5000 & 5000 & 0.06 $\pm$ 0.00 & 0.06 $\pm$ 0.00 & 0.05 $\pm$ 0.00 & 0.04 $\pm$ 0.00 & 0.03 $\pm$ 0.00 & 0.02 $\pm$ 0.00 & 0.01 $\pm$ 0.00 & 0.00 $\pm$ 0.00 \\
RRISE & trainall\_initrandom\_n10000 & 10000 & 0.06 $\pm$ 0.00 & 0.05 $\pm$ 0.00 & 0.04 $\pm$ 0.00 & 0.04 $\pm$ 0.00 & 0.03 $\pm$ 0.00 & 0.02 $\pm$ 0.00 & 0.01 $\pm$ 0.00 & 0.00 $\pm$ 0.00 \\
\midrule
Baseline 1 & n500 & 500 & 0.05 $\pm$ 0.00 & 0.04 $\pm$ 0.00 & 0.03 $\pm$ 0.00 & 0.03 $\pm$ 0.00 & 0.02 $\pm$ 0.00 & 0.01 $\pm$ 0.00 & 0.00 $\pm$ 0.00 & 0.00 $\pm$ 0.00 \\
Baseline 1 & n1000 & 1000 & 0.05 $\pm$ 0.00 & 0.04 $\pm$ 0.00 & 0.03 $\pm$ 0.00 & 0.03 $\pm$ 0.00 & 0.02 $\pm$ 0.00 & 0.01 $\pm$ 0.00 & 0.00 $\pm$ 0.00 & 0.00 $\pm$ 0.00 \\
Baseline 1 & n5000 & 5000 & 0.05 $\pm$ 0.00 & 0.04 $\pm$ 0.00 & 0.03 $\pm$ 0.00 & 0.03 $\pm$ 0.00 & 0.02 $\pm$ 0.00 & 0.01 $\pm$ 0.00 & 0.00 $\pm$ 0.00 & 0.00 $\pm$ 0.00 \\
Baseline 1 & n10000 & 10000 & 0.05 $\pm$ 0.00 & 0.04 $\pm$ 0.00 & 0.03 $\pm$ 0.00 & 0.03 $\pm$ 0.00 & 0.02 $\pm$ 0.00 & 0.01 $\pm$ 0.00 & 0.00 $\pm$ 0.00 & 0.00 $\pm$ 0.00 \\
\midrule
Baseline 2 & k500\_decline0.01 & 500 & 0.05 $\pm$ 0.00 & 0.04 $\pm$ 0.00 & 0.03 $\pm$ 0.00 & 0.02 $\pm$ 0.00 & 0.01 $\pm$ 0.00 & 0.01 $\pm$ 0.00 & 0.00 $\pm$ 0.00 & 0.00 $\pm$ 0.00 \\
Baseline 2 & k1000\_decline0.01 & 1000 & 0.04 $\pm$ 0.00 & 0.04 $\pm$ 0.00 & 0.03 $\pm$ 0.00 & 0.02 $\pm$ 0.00 & 0.01 $\pm$ 0.00 & 0.01 $\pm$ 0.00 & 0.00 $\pm$ 0.00 & 0.00 $\pm$ 0.00 \\
Baseline 2 & k5000\_decline0.01 & 5000 & 0.04 $\pm$ 0.00 & 0.03 $\pm$ 0.00 & 0.03 $\pm$ 0.00 & 0.02 $\pm$ 0.00 & 0.01 $\pm$ 0.00 & 0.01 $\pm$ 0.00 & 0.00 $\pm$ 0.00 & 0.00 $\pm$ 0.00 \\
Baseline 2 & k10000\_decline0.01 & 10000 & 0.04 $\pm$ 0.00 & 0.03 $\pm$ 0.00 & 0.03 $\pm$ 0.00 & 0.02 $\pm$ 0.00 & 0.01 $\pm$ 0.00 & 0.01 $\pm$ 0.00 & 0.00 $\pm$ 0.00 & 0.00 $\pm$ 0.00 \\
Baseline 2 & k500\_decline0.05 & 500 & 0.05 $\pm$ 0.00 & 0.04 $\pm$ 0.00 & 0.03 $\pm$ 0.00 & 0.02 $\pm$ 0.00 & 0.02 $\pm$ 0.00 & 0.01 $\pm$ 0.00 & 0.00 $\pm$ 0.00 & 0.00 $\pm$ 0.00 \\
Baseline 2 & k1000\_decline0.05 & 1000 & 0.04 $\pm$ 0.00 & 0.04 $\pm$ 0.00 & 0.03 $\pm$ 0.00 & 0.02 $\pm$ 0.00 & 0.01 $\pm$ 0.00 & 0.01 $\pm$ 0.00 & 0.00 $\pm$ 0.00 & 0.00 $\pm$ 0.00 \\
Baseline 2 & k5000\_decline0.05 & 5000 & 0.04 $\pm$ 0.00 & 0.04 $\pm$ 0.00 & 0.03 $\pm$ 0.00 & 0.02 $\pm$ 0.00 & 0.01 $\pm$ 0.00 & 0.01 $\pm$ 0.00 & 0.00 $\pm$ 0.00 & 0.00 $\pm$ 0.00 \\
Baseline 2 & k10000\_decline0.05 & 10000 & 0.04 $\pm$ 0.00 & 0.03 $\pm$ 0.00 & 0.03 $\pm$ 0.00 & 0.02 $\pm$ 0.00 & 0.01 $\pm$ 0.00 & 0.01 $\pm$ 0.00 & 0.00 $\pm$ 0.00 & 0.00 $\pm$ 0.00 \\
\midrule
Baseline 3 & n500\_tol0.01 & 500 & 0.05 $\pm$ 0.00 & 0.04 $\pm$ 0.00 & 0.03 $\pm$ 0.00 & 0.03 $\pm$ 0.00 & 0.02 $\pm$ 0.00 & 0.01 $\pm$ 0.00 & 0.00 $\pm$ 0.00 & 0.00 $\pm$ 0.00 \\
Baseline 3 & n1000\_tol0.01 & 1000 & 0.05 $\pm$ 0.00 & 0.04 $\pm$ 0.00 & 0.03 $\pm$ 0.00 & 0.03 $\pm$ 0.00 & 0.02 $\pm$ 0.00 & 0.01 $\pm$ 0.00 & 0.00 $\pm$ 0.00 & 0.00 $\pm$ 0.00 \\
Baseline 3 & n5000\_tol0.01 & 5000 & 0.05 $\pm$ 0.00 & 0.04 $\pm$ 0.00 & 0.03 $\pm$ 0.00 & 0.03 $\pm$ 0.00 & 0.02 $\pm$ 0.00 & 0.01 $\pm$ 0.00 & 0.00 $\pm$ 0.00 & 0.00 $\pm$ 0.00 \\
Baseline 3 & n10000\_tol0.01 & 10000 & 0.05 $\pm$ 0.00 & 0.04 $\pm$ 0.00 & 0.03 $\pm$ 0.00 & 0.03 $\pm$ 0.00 & 0.02 $\pm$ 0.00 & 0.01 $\pm$ 0.00 & 0.00 $\pm$ 0.00 & 0.00 $\pm$ 0.00 \\
Baseline 3 & n500\_tol0.05 & 500 & 0.05 $\pm$ 0.00 & 0.04 $\pm$ 0.00 & 0.03 $\pm$ 0.00 & 0.03 $\pm$ 0.00 & 0.02 $\pm$ 0.00 & 0.01 $\pm$ 0.00 & 0.00 $\pm$ 0.00 & 0.00 $\pm$ 0.00 \\
Baseline 3 & n1000\_tol0.05 & 1000 & 0.05 $\pm$ 0.00 & 0.04 $\pm$ 0.00 & 0.03 $\pm$ 0.00 & 0.03 $\pm$ 0.00 & 0.02 $\pm$ 0.00 & 0.01 $\pm$ 0.00 & 0.00 $\pm$ 0.00 & 0.00 $\pm$ 0.00 \\
Baseline 3 & n5000\_tol0.05 & 5000 & 0.05 $\pm$ 0.00 & 0.04 $\pm$ 0.00 & 0.03 $\pm$ 0.00 & 0.03 $\pm$ 0.00 & 0.02 $\pm$ 0.00 & 0.01 $\pm$ 0.00 & 0.00 $\pm$ 0.00 & 0.00 $\pm$ 0.00 \\
Baseline 3 & n10000\_tol0.05 & 10000 & 0.05 $\pm$ 0.00 & 0.04 $\pm$ 0.00 & 0.03 $\pm$ 0.00 & 0.03 $\pm$ 0.00 & 0.02 $\pm$ 0.00 & 0.01 $\pm$ 0.00 & 0.00 $\pm$ 0.00 & 0.00 $\pm$ 0.00 \\
\midrule
Baseline 4 & init0\_n500 & 500 & 0.06 $\pm$ 0.01 & 0.05 $\pm$ 0.01 & 0.04 $\pm$ 0.01 & 0.03 $\pm$ 0.01 & 0.03 $\pm$ 0.00 & 0.02 $\pm$ 0.00 & 0.01 $\pm$ 0.00 & 0.00 $\pm$ 0.00 \\
Baseline 4 & init0\_n1000 & 1000 & 0.07 $\pm$ 0.00 & 0.06 $\pm$ 0.00 & 0.05 $\pm$ 0.00 & 0.04 $\pm$ 0.00 & 0.03 $\pm$ 0.00 & 0.02 $\pm$ 0.00 & 0.01 $\pm$ 0.00 & 0.00 $\pm$ 0.00 \\
Baseline 4 & init0\_n5000 & 5000 & 0.06 $\pm$ 0.00 & 0.05 $\pm$ 0.00 & 0.04 $\pm$ 0.00 & 0.03 $\pm$ 0.00 & 0.02 $\pm$ 0.00 & 0.02 $\pm$ 0.00 & 0.01 $\pm$ 0.00 & 0.00 $\pm$ 0.00 \\
Baseline 4 & init0\_n10000 & 10000 & 0.06 $\pm$ 0.01 & 0.05 $\pm$ 0.01 & 0.04 $\pm$ 0.01 & 0.03 $\pm$ 0.01 & 0.02 $\pm$ 0.01 & 0.01 $\pm$ 0.00 & 0.01 $\pm$ 0.00 & 0.00 $\pm$ 0.00 \\
Baseline 4 & init1\_n500 & 500 & 0.05 $\pm$ 0.00 & 0.04 $\pm$ 0.00 & 0.03 $\pm$ 0.00 & 0.03 $\pm$ 0.00 & 0.02 $\pm$ 0.00 & 0.01 $\pm$ 0.00 & 0.01 $\pm$ 0.00 & 0.00 $\pm$ 0.00 \\
Baseline 4 & init1\_n1000 & 1000 & 0.05 $\pm$ 0.01 & 0.04 $\pm$ 0.01 & 0.03 $\pm$ 0.00 & 0.03 $\pm$ 0.00 & 0.02 $\pm$ 0.00 & 0.01 $\pm$ 0.00 & 0.00 $\pm$ 0.00 & 0.00 $\pm$ 0.00 \\
Baseline 4 & init1\_n5000 & 5000 & 0.05 $\pm$ 0.00 & 0.04 $\pm$ 0.00 & 0.03 $\pm$ 0.00 & 0.03 $\pm$ 0.00 & 0.02 $\pm$ 0.00 & 0.01 $\pm$ 0.00 & 0.01 $\pm$ 0.00 & 0.00 $\pm$ 0.00 \\
Baseline 4 & init1\_n10000 & 10000 & 0.05 $\pm$ 0.00 & 0.04 $\pm$ 0.00 & 0.03 $\pm$ 0.00 & 0.03 $\pm$ 0.00 & 0.02 $\pm$ 0.00 & 0.01 $\pm$ 0.00 & 0.00 $\pm$ 0.00 & 0.00 $\pm$ 0.00 \\
\bottomrule
\end{tabular}%
}
\end{table*}

\begin{table*}[t]
\centering
\scriptsize
\caption{Full boundary-confidence CRD results on CIFAR-10. Entries are percentages of all test inputs satisfying both the boundary-confidence condition and the radius threshold.}
\label{tab:app_crd_boundary_full_cifar10}
\resizebox{\textwidth}{!}{%
\begin{tabular}{lllcccccccc}
\toprule
Method & Variant & Budget & $t=0$ & $t=0.025$ & $t=0.05$ & $t=0.075$ & $t=0.1$ & $t=0.125$ & $t=0.15$ & $t=0.175$ \\
\midrule
RRISE & trainhead\_initbase\_n500 & 500 & 0.17 $\pm$ 0.00 & 0.14 $\pm$ 0.00 & 0.12 $\pm$ 0.00 & 0.09 $\pm$ 0.00 & 0.07 $\pm$ 0.00 & 0.04 $\pm$ 0.00 & 0.02 $\pm$ 0.00 & 0.00 $\pm$ 0.00 \\
RRISE & trainhead\_initbase\_n1000 & 1000 & 0.17 $\pm$ 0.00 & 0.14 $\pm$ 0.00 & 0.12 $\pm$ 0.00 & 0.09 $\pm$ 0.00 & 0.07 $\pm$ 0.00 & 0.04 $\pm$ 0.00 & 0.02 $\pm$ 0.00 & 0.00 $\pm$ 0.00 \\
RRISE & trainhead\_initbase\_n5000 & 5000 & 0.16 $\pm$ 0.00 & 0.14 $\pm$ 0.00 & 0.11 $\pm$ 0.00 & 0.09 $\pm$ 0.00 & 0.06 $\pm$ 0.00 & 0.04 $\pm$ 0.00 & 0.02 $\pm$ 0.00 & 0.00 $\pm$ 0.00 \\
RRISE & trainhead\_initbase\_n10000 & 10000 & 0.16 $\pm$ 0.00 & 0.14 $\pm$ 0.00 & 0.11 $\pm$ 0.00 & 0.09 $\pm$ 0.00 & 0.06 $\pm$ 0.00 & 0.04 $\pm$ 0.00 & 0.02 $\pm$ 0.00 & 0.00 $\pm$ 0.00 \\
RRISE & trainall\_initbase\_n500 & 500 & 0.17 $\pm$ 0.00 & 0.14 $\pm$ 0.00 & 0.12 $\pm$ 0.00 & 0.09 $\pm$ 0.00 & 0.07 $\pm$ 0.00 & 0.04 $\pm$ 0.00 & 0.02 $\pm$ 0.00 & 0.00 $\pm$ 0.00 \\
RRISE & trainall\_initbase\_n1000 & 1000 & 0.16 $\pm$ 0.00 & 0.14 $\pm$ 0.00 & 0.11 $\pm$ 0.00 & 0.09 $\pm$ 0.00 & 0.07 $\pm$ 0.00 & 0.04 $\pm$ 0.00 & 0.02 $\pm$ 0.00 & 0.00 $\pm$ 0.00 \\
RRISE & trainall\_initbase\_n5000 & 5000 & 0.16 $\pm$ 0.00 & 0.14 $\pm$ 0.00 & 0.11 $\pm$ 0.00 & 0.09 $\pm$ 0.00 & 0.06 $\pm$ 0.00 & 0.04 $\pm$ 0.00 & 0.02 $\pm$ 0.00 & 0.00 $\pm$ 0.00 \\
RRISE & trainall\_initbase\_n10000 & 10000 & 0.17 $\pm$ 0.00 & 0.14 $\pm$ 0.00 & 0.11 $\pm$ 0.00 & 0.09 $\pm$ 0.00 & 0.06 $\pm$ 0.00 & 0.04 $\pm$ 0.00 & 0.02 $\pm$ 0.00 & 0.00 $\pm$ 0.00 \\
RRISE & trainall\_initrandom\_n500 & 500 & 0.59 $\pm$ 0.19 & 0.56 $\pm$ 0.19 & 0.51 $\pm$ 0.19 & 0.47 $\pm$ 0.19 & 0.42 $\pm$ 0.19 & 0.37 $\pm$ 0.18 & 0.28 $\pm$ 0.17 & 0.00 $\pm$ 0.00 \\
RRISE & trainall\_initrandom\_n1000 & 1000 & 0.33 $\pm$ 0.03 & 0.29 $\pm$ 0.03 & 0.25 $\pm$ 0.03 & 0.21 $\pm$ 0.03 & 0.17 $\pm$ 0.03 & 0.12 $\pm$ 0.02 & 0.06 $\pm$ 0.01 & 0.00 $\pm$ 0.00 \\
RRISE & trainall\_initrandom\_n5000 & 5000 & 0.29 $\pm$ 0.02 & 0.25 $\pm$ 0.02 & 0.22 $\pm$ 0.02 & 0.18 $\pm$ 0.01 & 0.14 $\pm$ 0.01 & 0.09 $\pm$ 0.01 & 0.04 $\pm$ 0.00 & 0.00 $\pm$ 0.00 \\
RRISE & trainall\_initrandom\_n10000 & 10000 & 0.30 $\pm$ 0.06 & 0.27 $\pm$ 0.06 & 0.23 $\pm$ 0.05 & 0.19 $\pm$ 0.05 & 0.15 $\pm$ 0.05 & 0.10 $\pm$ 0.04 & 0.05 $\pm$ 0.03 & 0.00 $\pm$ 0.00 \\
\midrule
Baseline 1 & n500 & 500 & 0.17 $\pm$ 0.00 & 0.14 $\pm$ 0.00 & 0.11 $\pm$ 0.00 & 0.09 $\pm$ 0.00 & 0.07 $\pm$ 0.00 & 0.04 $\pm$ 0.00 & 0.02 $\pm$ 0.00 & 0.00 $\pm$ 0.00 \\
Baseline 1 & n1000 & 1000 & 0.17 $\pm$ 0.00 & 0.14 $\pm$ 0.00 & 0.11 $\pm$ 0.00 & 0.09 $\pm$ 0.00 & 0.06 $\pm$ 0.00 & 0.04 $\pm$ 0.00 & 0.02 $\pm$ 0.00 & 0.00 $\pm$ 0.00 \\
Baseline 1 & n5000 & 5000 & 0.17 $\pm$ 0.00 & 0.14 $\pm$ 0.00 & 0.11 $\pm$ 0.00 & 0.09 $\pm$ 0.00 & 0.07 $\pm$ 0.00 & 0.04 $\pm$ 0.00 & 0.02 $\pm$ 0.00 & 0.00 $\pm$ 0.00 \\
Baseline 1 & n10000 & 10000 & 0.17 $\pm$ 0.00 & 0.14 $\pm$ 0.00 & 0.11 $\pm$ 0.00 & 0.09 $\pm$ 0.00 & 0.07 $\pm$ 0.00 & 0.04 $\pm$ 0.00 & 0.02 $\pm$ 0.00 & 0.00 $\pm$ 0.00 \\
\midrule
Baseline 2 & k500\_decline0.01 & 500 & 0.16 $\pm$ 0.00 & 0.13 $\pm$ 0.00 & 0.11 $\pm$ 0.00 & 0.08 $\pm$ 0.00 & 0.05 $\pm$ 0.00 & 0.03 $\pm$ 0.00 & 0.01 $\pm$ 0.00 & 0.00 $\pm$ 0.00 \\
Baseline 2 & k1000\_decline0.01 & 1000 & 0.16 $\pm$ 0.00 & 0.13 $\pm$ 0.00 & 0.10 $\pm$ 0.00 & 0.08 $\pm$ 0.00 & 0.05 $\pm$ 0.00 & 0.03 $\pm$ 0.00 & 0.01 $\pm$ 0.00 & 0.00 $\pm$ 0.00 \\
Baseline 2 & k5000\_decline0.01 & 5000 & 0.15 $\pm$ 0.00 & 0.12 $\pm$ 0.00 & 0.10 $\pm$ 0.00 & 0.07 $\pm$ 0.00 & 0.04 $\pm$ 0.00 & 0.03 $\pm$ 0.00 & 0.02 $\pm$ 0.00 & 0.00 $\pm$ 0.00 \\
Baseline 2 & k10000\_decline0.01 & 10000 & 0.15 $\pm$ 0.00 & 0.12 $\pm$ 0.00 & 0.10 $\pm$ 0.00 & 0.07 $\pm$ 0.00 & 0.04 $\pm$ 0.00 & 0.03 $\pm$ 0.00 & 0.01 $\pm$ 0.00 & 0.00 $\pm$ 0.00 \\
Baseline 2 & k500\_decline0.05 & 500 & 0.16 $\pm$ 0.00 & 0.13 $\pm$ 0.00 & 0.11 $\pm$ 0.00 & 0.08 $\pm$ 0.00 & 0.05 $\pm$ 0.00 & 0.03 $\pm$ 0.00 & 0.01 $\pm$ 0.00 & 0.00 $\pm$ 0.00 \\
Baseline 2 & k1000\_decline0.05 & 1000 & 0.16 $\pm$ 0.00 & 0.13 $\pm$ 0.00 & 0.10 $\pm$ 0.00 & 0.08 $\pm$ 0.00 & 0.05 $\pm$ 0.00 & 0.03 $\pm$ 0.00 & 0.01 $\pm$ 0.00 & 0.00 $\pm$ 0.00 \\
Baseline 2 & k5000\_decline0.05 & 5000 & 0.15 $\pm$ 0.00 & 0.12 $\pm$ 0.00 & 0.10 $\pm$ 0.00 & 0.07 $\pm$ 0.00 & 0.04 $\pm$ 0.00 & 0.03 $\pm$ 0.00 & 0.02 $\pm$ 0.00 & 0.00 $\pm$ 0.00 \\
Baseline 2 & k10000\_decline0.05 & 10000 & 0.15 $\pm$ 0.00 & 0.12 $\pm$ 0.00 & 0.10 $\pm$ 0.00 & 0.07 $\pm$ 0.00 & 0.04 $\pm$ 0.00 & 0.03 $\pm$ 0.00 & 0.02 $\pm$ 0.00 & 0.00 $\pm$ 0.00 \\
\midrule
Baseline 3 & n500\_tol0.01 & 500 & 0.17 $\pm$ 0.00 & 0.14 $\pm$ 0.00 & 0.11 $\pm$ 0.00 & 0.09 $\pm$ 0.00 & 0.06 $\pm$ 0.00 & 0.04 $\pm$ 0.00 & 0.02 $\pm$ 0.00 & 0.00 $\pm$ 0.00 \\
Baseline 3 & n1000\_tol0.01 & 1000 & 0.17 $\pm$ 0.00 & 0.14 $\pm$ 0.00 & 0.11 $\pm$ 0.00 & 0.09 $\pm$ 0.00 & 0.06 $\pm$ 0.00 & 0.04 $\pm$ 0.00 & 0.02 $\pm$ 0.00 & 0.00 $\pm$ 0.00 \\
Baseline 3 & n5000\_tol0.01 & 5000 & 0.17 $\pm$ 0.00 & 0.14 $\pm$ 0.00 & 0.11 $\pm$ 0.00 & 0.09 $\pm$ 0.00 & 0.07 $\pm$ 0.00 & 0.04 $\pm$ 0.00 & 0.02 $\pm$ 0.00 & 0.00 $\pm$ 0.00 \\
Baseline 3 & n10000\_tol0.01 & 10000 & 0.17 $\pm$ 0.00 & 0.14 $\pm$ 0.00 & 0.11 $\pm$ 0.00 & 0.09 $\pm$ 0.00 & 0.07 $\pm$ 0.00 & 0.04 $\pm$ 0.00 & 0.02 $\pm$ 0.00 & 0.00 $\pm$ 0.00 \\
Baseline 3 & n500\_tol0.05 & 500 & 0.17 $\pm$ 0.00 & 0.14 $\pm$ 0.00 & 0.11 $\pm$ 0.00 & 0.09 $\pm$ 0.00 & 0.07 $\pm$ 0.00 & 0.04 $\pm$ 0.00 & 0.02 $\pm$ 0.00 & 0.00 $\pm$ 0.00 \\
Baseline 3 & n1000\_tol0.05 & 1000 & 0.17 $\pm$ 0.00 & 0.14 $\pm$ 0.00 & 0.12 $\pm$ 0.00 & 0.09 $\pm$ 0.00 & 0.07 $\pm$ 0.00 & 0.04 $\pm$ 0.00 & 0.02 $\pm$ 0.00 & 0.00 $\pm$ 0.00 \\
Baseline 3 & n5000\_tol0.05 & 5000 & 0.17 $\pm$ 0.00 & 0.14 $\pm$ 0.00 & 0.12 $\pm$ 0.00 & 0.09 $\pm$ 0.00 & 0.07 $\pm$ 0.00 & 0.04 $\pm$ 0.00 & 0.02 $\pm$ 0.00 & 0.00 $\pm$ 0.00 \\
Baseline 3 & n10000\_tol0.05 & 10000 & 0.17 $\pm$ 0.00 & 0.14 $\pm$ 0.00 & 0.12 $\pm$ 0.00 & 0.09 $\pm$ 0.00 & 0.07 $\pm$ 0.00 & 0.04 $\pm$ 0.00 & 0.02 $\pm$ 0.00 & 0.00 $\pm$ 0.00 \\
\midrule
Baseline 4 & init0\_n500 & 500 & 0.62 $\pm$ 0.00 & 0.58 $\pm$ 0.01 & 0.52 $\pm$ 0.01 & 0.44 $\pm$ 0.01 & 0.11 $\pm$ 0.18 & 0.00 $\pm$ 0.00 & 0.00 $\pm$ 0.00 & 0.00 $\pm$ 0.00 \\
Baseline 4 & init0\_n1000 & 1000 & 0.65 $\pm$ 0.02 & 0.61 $\pm$ 0.03 & 0.56 $\pm$ 0.03 & 0.49 $\pm$ 0.05 & 0.30 $\pm$ 0.26 & 0.22 $\pm$ 0.19 & 0.00 $\pm$ 0.00 & 0.00 $\pm$ 0.00 \\
Baseline 4 & init0\_n5000 & 5000 & 0.59 $\pm$ 0.16 & 0.54 $\pm$ 0.16 & 0.50 $\pm$ 0.15 & 0.45 $\pm$ 0.15 & 0.39 $\pm$ 0.14 & 0.19 $\pm$ 0.20 & 0.03 $\pm$ 0.05 & 0.00 $\pm$ 0.00 \\
Baseline 4 & init0\_n10000 & 10000 & 0.68 $\pm$ 0.02 & 0.63 $\pm$ 0.03 & 0.59 $\pm$ 0.03 & 0.53 $\pm$ 0.03 & 0.47 $\pm$ 0.04 & 0.32 $\pm$ 0.18 & 0.16 $\pm$ 0.14 & 0.00 $\pm$ 0.00 \\
Baseline 4 & init1\_n500 & 500 & 0.18 $\pm$ 0.01 & 0.15 $\pm$ 0.01 & 0.13 $\pm$ 0.01 & 0.10 $\pm$ 0.01 & 0.07 $\pm$ 0.00 & 0.05 $\pm$ 0.00 & 0.02 $\pm$ 0.00 & 0.00 $\pm$ 0.00 \\
Baseline 4 & init1\_n1000 & 1000 & 0.17 $\pm$ 0.01 & 0.15 $\pm$ 0.00 & 0.12 $\pm$ 0.00 & 0.10 $\pm$ 0.00 & 0.07 $\pm$ 0.00 & 0.04 $\pm$ 0.00 & 0.02 $\pm$ 0.00 & 0.00 $\pm$ 0.00 \\
Baseline 4 & init1\_n5000 & 5000 & 0.17 $\pm$ 0.01 & 0.14 $\pm$ 0.01 & 0.12 $\pm$ 0.00 & 0.09 $\pm$ 0.00 & 0.07 $\pm$ 0.00 & 0.04 $\pm$ 0.00 & 0.02 $\pm$ 0.00 & 0.00 $\pm$ 0.00 \\
Baseline 4 & init1\_n10000 & 10000 & 0.17 $\pm$ 0.00 & 0.14 $\pm$ 0.00 & 0.12 $\pm$ 0.00 & 0.09 $\pm$ 0.00 & 0.07 $\pm$ 0.00 & 0.04 $\pm$ 0.00 & 0.02 $\pm$ 0.00 & 0.00 $\pm$ 0.00 \\
\bottomrule
\end{tabular}%
}
\end{table*}

\begin{table*}[t]
\centering
\scriptsize
\caption{Full boundary-confidence CRD results on CIFAR-100. Entries are percentages of all test inputs satisfying both the boundary-confidence condition and the radius threshold.}
\label{tab:app_crd_boundary_full_cifar100}
\resizebox{\textwidth}{!}{%
\begin{tabular}{lllcccccccc}
\toprule
Method & Variant & Budget & $t=0$ & $t=0.0125$ & $t=0.025$ & $t=0.0375$ & $t=0.05$ & $t=0.0625$ & $t=0.075$ & $t=0.0875$ \\
\midrule
RRISE & trainhead\_initbase\_n500 & 500 & 0.28 $\pm$ 0.00 & 0.24 $\pm$ 0.00 & 0.19 $\pm$ 0.00 & 0.14 $\pm$ 0.00 & 0.09 $\pm$ 0.00 & 0.03 $\pm$ 0.00 & 0.00 $\pm$ 0.00 & 0.00 $\pm$ 0.00 \\
RRISE & trainhead\_initbase\_n1000 & 1000 & 0.28 $\pm$ 0.01 & 0.24 $\pm$ 0.02 & 0.19 $\pm$ 0.01 & 0.14 $\pm$ 0.01 & 0.10 $\pm$ 0.01 & 0.03 $\pm$ 0.00 & 0.00 $\pm$ 0.00 & 0.00 $\pm$ 0.00 \\
RRISE & trainhead\_initbase\_n5000 & 5000 & 0.25 $\pm$ 0.01 & 0.21 $\pm$ 0.01 & 0.17 $\pm$ 0.01 & 0.12 $\pm$ 0.01 & 0.08 $\pm$ 0.01 & 0.02 $\pm$ 0.00 & 0.00 $\pm$ 0.00 & 0.00 $\pm$ 0.00 \\
RRISE & trainhead\_initbase\_n10000 & 10000 & 0.24 $\pm$ 0.01 & 0.20 $\pm$ 0.01 & 0.16 $\pm$ 0.01 & 0.11 $\pm$ 0.00 & 0.07 $\pm$ 0.00 & 0.02 $\pm$ 0.00 & 0.00 $\pm$ 0.00 & 0.00 $\pm$ 0.00 \\
RRISE & trainall\_initbase\_n500 & 500 & 0.22 $\pm$ 0.00 & 0.18 $\pm$ 0.00 & 0.14 $\pm$ 0.00 & 0.09 $\pm$ 0.00 & 0.05 $\pm$ 0.00 & 0.02 $\pm$ 0.00 & 0.00 $\pm$ 0.00 & 0.00 $\pm$ 0.00 \\
RRISE & trainall\_initbase\_n1000 & 1000 & 0.22 $\pm$ 0.00 & 0.17 $\pm$ 0.00 & 0.13 $\pm$ 0.00 & 0.09 $\pm$ 0.00 & 0.05 $\pm$ 0.00 & 0.01 $\pm$ 0.00 & 0.00 $\pm$ 0.00 & 0.00 $\pm$ 0.00 \\
RRISE & trainall\_initbase\_n5000 & 5000 & 0.22 $\pm$ 0.00 & 0.18 $\pm$ 0.00 & 0.13 $\pm$ 0.00 & 0.09 $\pm$ 0.00 & 0.05 $\pm$ 0.00 & 0.01 $\pm$ 0.00 & 0.00 $\pm$ 0.00 & 0.00 $\pm$ 0.00 \\
RRISE & trainall\_initbase\_n10000 & 10000 & 0.23 $\pm$ 0.00 & 0.18 $\pm$ 0.00 & 0.13 $\pm$ 0.00 & 0.09 $\pm$ 0.00 & 0.05 $\pm$ 0.00 & 0.01 $\pm$ 0.00 & 0.00 $\pm$ 0.00 & 0.00 $\pm$ 0.00 \\
RRISE & trainall\_initrandom\_n500 & 500 & 0.00 $\pm$ 0.00 & 0.00 $\pm$ 0.00 & 0.00 $\pm$ 0.00 & 0.00 $\pm$ 0.00 & 0.00 $\pm$ 0.00 & 0.00 $\pm$ 0.00 & 0.00 $\pm$ 0.00 & 0.00 $\pm$ 0.00 \\
RRISE & trainall\_initrandom\_n1000 & 1000 & 0.00 $\pm$ 0.00 & 0.00 $\pm$ 0.00 & 0.00 $\pm$ 0.00 & 0.00 $\pm$ 0.00 & 0.00 $\pm$ 0.00 & 0.00 $\pm$ 0.00 & 0.00 $\pm$ 0.00 & 0.00 $\pm$ 0.00 \\
RRISE & trainall\_initrandom\_n5000 & 5000 & 0.00 $\pm$ 0.00 & 0.00 $\pm$ 0.00 & 0.00 $\pm$ 0.00 & 0.00 $\pm$ 0.00 & 0.00 $\pm$ 0.00 & 0.00 $\pm$ 0.00 & 0.00 $\pm$ 0.00 & 0.00 $\pm$ 0.00 \\
RRISE & trainall\_initrandom\_n10000 & 10000 & 0.00 $\pm$ 0.00 & 0.00 $\pm$ 0.00 & 0.00 $\pm$ 0.00 & 0.00 $\pm$ 0.00 & 0.00 $\pm$ 0.00 & 0.00 $\pm$ 0.00 & 0.00 $\pm$ 0.00 & 0.00 $\pm$ 0.00 \\
\midrule
Baseline 1 & n500 & 500 & 0.20 $\pm$ 0.00 & 0.16 $\pm$ 0.00 & 0.12 $\pm$ 0.00 & 0.08 $\pm$ 0.00 & 0.05 $\pm$ 0.00 & 0.01 $\pm$ 0.00 & 0.00 $\pm$ 0.00 & 0.00 $\pm$ 0.00 \\
Baseline 1 & n1000 & 1000 & 0.20 $\pm$ 0.00 & 0.16 $\pm$ 0.00 & 0.12 $\pm$ 0.00 & 0.08 $\pm$ 0.00 & 0.05 $\pm$ 0.00 & 0.01 $\pm$ 0.00 & 0.00 $\pm$ 0.00 & 0.00 $\pm$ 0.00 \\
Baseline 1 & n5000 & 5000 & 0.20 $\pm$ 0.00 & 0.16 $\pm$ 0.00 & 0.12 $\pm$ 0.00 & 0.08 $\pm$ 0.00 & 0.05 $\pm$ 0.00 & 0.01 $\pm$ 0.00 & 0.00 $\pm$ 0.00 & 0.00 $\pm$ 0.00 \\
Baseline 1 & n10000 & 10000 & 0.20 $\pm$ 0.00 & 0.16 $\pm$ 0.00 & 0.12 $\pm$ 0.00 & 0.08 $\pm$ 0.00 & 0.05 $\pm$ 0.00 & 0.01 $\pm$ 0.00 & 0.00 $\pm$ 0.00 & 0.00 $\pm$ 0.00 \\
\midrule
Baseline 2 & k500\_decline0.01 & 500 & 0.19 $\pm$ 0.00 & 0.15 $\pm$ 0.00 & 0.11 $\pm$ 0.00 & 0.07 $\pm$ 0.00 & 0.03 $\pm$ 0.00 & 0.01 $\pm$ 0.00 & 0.00 $\pm$ 0.00 & 0.00 $\pm$ 0.00 \\
Baseline 2 & k1000\_decline0.01 & 1000 & 0.18 $\pm$ 0.00 & 0.14 $\pm$ 0.00 & 0.10 $\pm$ 0.00 & 0.06 $\pm$ 0.00 & 0.03 $\pm$ 0.00 & 0.01 $\pm$ 0.00 & 0.00 $\pm$ 0.00 & 0.00 $\pm$ 0.00 \\
Baseline 2 & k5000\_decline0.01 & 5000 & 0.18 $\pm$ 0.00 & 0.14 $\pm$ 0.00 & 0.10 $\pm$ 0.00 & 0.05 $\pm$ 0.00 & 0.04 $\pm$ 0.00 & 0.01 $\pm$ 0.00 & 0.00 $\pm$ 0.00 & 0.00 $\pm$ 0.00 \\
Baseline 2 & k10000\_decline0.01 & 10000 & 0.18 $\pm$ 0.00 & 0.14 $\pm$ 0.00 & 0.10 $\pm$ 0.00 & 0.05 $\pm$ 0.00 & 0.04 $\pm$ 0.00 & 0.01 $\pm$ 0.00 & 0.00 $\pm$ 0.00 & 0.00 $\pm$ 0.00 \\
Baseline 2 & k500\_decline0.05 & 500 & 0.19 $\pm$ 0.00 & 0.15 $\pm$ 0.00 & 0.11 $\pm$ 0.00 & 0.07 $\pm$ 0.00 & 0.03 $\pm$ 0.00 & 0.01 $\pm$ 0.00 & 0.00 $\pm$ 0.00 & 0.00 $\pm$ 0.00 \\
Baseline 2 & k1000\_decline0.05 & 1000 & 0.18 $\pm$ 0.00 & 0.14 $\pm$ 0.00 & 0.10 $\pm$ 0.00 & 0.06 $\pm$ 0.00 & 0.03 $\pm$ 0.00 & 0.01 $\pm$ 0.00 & 0.00 $\pm$ 0.00 & 0.00 $\pm$ 0.00 \\
Baseline 2 & k5000\_decline0.05 & 5000 & 0.17 $\pm$ 0.00 & 0.14 $\pm$ 0.00 & 0.10 $\pm$ 0.00 & 0.05 $\pm$ 0.00 & 0.04 $\pm$ 0.00 & 0.01 $\pm$ 0.00 & 0.00 $\pm$ 0.00 & 0.00 $\pm$ 0.00 \\
Baseline 2 & k10000\_decline0.05 & 10000 & 0.18 $\pm$ 0.00 & 0.14 $\pm$ 0.00 & 0.10 $\pm$ 0.00 & 0.05 $\pm$ 0.00 & 0.04 $\pm$ 0.00 & 0.01 $\pm$ 0.00 & 0.00 $\pm$ 0.00 & 0.00 $\pm$ 0.00 \\
\midrule
Baseline 3 & n500\_tol0.01 & 500 & 0.20 $\pm$ 0.00 & 0.16 $\pm$ 0.00 & 0.12 $\pm$ 0.00 & 0.08 $\pm$ 0.00 & 0.05 $\pm$ 0.00 & 0.01 $\pm$ 0.00 & 0.00 $\pm$ 0.00 & 0.00 $\pm$ 0.00 \\
Baseline 3 & n1000\_tol0.01 & 1000 & 0.20 $\pm$ 0.00 & 0.16 $\pm$ 0.00 & 0.12 $\pm$ 0.00 & 0.08 $\pm$ 0.00 & 0.05 $\pm$ 0.00 & 0.01 $\pm$ 0.00 & 0.00 $\pm$ 0.00 & 0.00 $\pm$ 0.00 \\
Baseline 3 & n5000\_tol0.01 & 5000 & 0.20 $\pm$ 0.00 & 0.16 $\pm$ 0.00 & 0.12 $\pm$ 0.00 & 0.08 $\pm$ 0.00 & 0.05 $\pm$ 0.00 & 0.01 $\pm$ 0.00 & 0.00 $\pm$ 0.00 & 0.00 $\pm$ 0.00 \\
Baseline 3 & n10000\_tol0.01 & 10000 & 0.20 $\pm$ 0.00 & 0.16 $\pm$ 0.00 & 0.12 $\pm$ 0.00 & 0.08 $\pm$ 0.00 & 0.05 $\pm$ 0.00 & 0.01 $\pm$ 0.00 & 0.00 $\pm$ 0.00 & 0.00 $\pm$ 0.00 \\
Baseline 3 & n500\_tol0.05 & 500 & 0.20 $\pm$ 0.00 & 0.16 $\pm$ 0.00 & 0.12 $\pm$ 0.00 & 0.08 $\pm$ 0.00 & 0.05 $\pm$ 0.00 & 0.01 $\pm$ 0.00 & 0.00 $\pm$ 0.00 & 0.00 $\pm$ 0.00 \\
Baseline 3 & n1000\_tol0.05 & 1000 & 0.20 $\pm$ 0.00 & 0.16 $\pm$ 0.00 & 0.12 $\pm$ 0.00 & 0.09 $\pm$ 0.00 & 0.05 $\pm$ 0.00 & 0.01 $\pm$ 0.00 & 0.00 $\pm$ 0.00 & 0.00 $\pm$ 0.00 \\
Baseline 3 & n5000\_tol0.05 & 5000 & 0.20 $\pm$ 0.00 & 0.16 $\pm$ 0.00 & 0.12 $\pm$ 0.00 & 0.08 $\pm$ 0.00 & 0.05 $\pm$ 0.00 & 0.01 $\pm$ 0.00 & 0.00 $\pm$ 0.00 & 0.00 $\pm$ 0.00 \\
Baseline 3 & n10000\_tol0.05 & 10000 & 0.20 $\pm$ 0.00 & 0.16 $\pm$ 0.00 & 0.12 $\pm$ 0.00 & 0.08 $\pm$ 0.00 & 0.05 $\pm$ 0.00 & 0.01 $\pm$ 0.00 & 0.00 $\pm$ 0.00 & 0.00 $\pm$ 0.00 \\
\midrule
Baseline 4 & init0\_n500 & 500 & 0.00 $\pm$ 0.00 & 0.00 $\pm$ 0.00 & 0.00 $\pm$ 0.00 & 0.00 $\pm$ 0.00 & 0.00 $\pm$ 0.00 & 0.00 $\pm$ 0.00 & 0.00 $\pm$ 0.00 & 0.00 $\pm$ 0.00 \\
Baseline 4 & init0\_n1000 & 1000 & 0.00 $\pm$ 0.00 & 0.00 $\pm$ 0.00 & 0.00 $\pm$ 0.00 & 0.00 $\pm$ 0.00 & 0.00 $\pm$ 0.00 & 0.00 $\pm$ 0.00 & 0.00 $\pm$ 0.00 & 0.00 $\pm$ 0.00 \\
Baseline 4 & init0\_n5000 & 5000 & 0.00 $\pm$ 0.00 & 0.00 $\pm$ 0.00 & 0.00 $\pm$ 0.00 & 0.00 $\pm$ 0.00 & 0.00 $\pm$ 0.00 & 0.00 $\pm$ 0.00 & 0.00 $\pm$ 0.00 & 0.00 $\pm$ 0.00 \\
Baseline 4 & init0\_n10000 & 10000 & 0.00 $\pm$ 0.00 & 0.00 $\pm$ 0.00 & 0.00 $\pm$ 0.00 & 0.00 $\pm$ 0.00 & 0.00 $\pm$ 0.00 & 0.00 $\pm$ 0.00 & 0.00 $\pm$ 0.00 & 0.00 $\pm$ 0.00 \\
Baseline 4 & init1\_n500 & 500 & 0.28 $\pm$ 0.00 & 0.17 $\pm$ 0.01 & 0.00 $\pm$ 0.00 & 0.00 $\pm$ 0.00 & 0.00 $\pm$ 0.00 & 0.00 $\pm$ 0.00 & 0.00 $\pm$ 0.00 & 0.00 $\pm$ 0.00 \\
Baseline 4 & init1\_n1000 & 1000 & 0.29 $\pm$ 0.02 & 0.20 $\pm$ 0.03 & 0.00 $\pm$ 0.00 & 0.00 $\pm$ 0.00 & 0.00 $\pm$ 0.00 & 0.00 $\pm$ 0.00 & 0.00 $\pm$ 0.00 & 0.00 $\pm$ 0.00 \\
Baseline 4 & init1\_n5000 & 5000 & 0.30 $\pm$ 0.01 & 0.21 $\pm$ 0.02 & 0.00 $\pm$ 0.00 & 0.00 $\pm$ 0.00 & 0.00 $\pm$ 0.00 & 0.00 $\pm$ 0.00 & 0.00 $\pm$ 0.00 & 0.00 $\pm$ 0.00 \\
Baseline 4 & init1\_n10000 & 10000 & 0.30 $\pm$ 0.01 & 0.22 $\pm$ 0.01 & 0.00 $\pm$ 0.00 & 0.00 $\pm$ 0.00 & 0.00 $\pm$ 0.00 & 0.00 $\pm$ 0.00 & 0.00 $\pm$ 0.00 & 0.00 $\pm$ 0.00 \\
\bottomrule
\end{tabular}%
}
\end{table*}

\begin{table*}[t]
\centering
\scriptsize
\caption{Full boundary-confidence CRD results on Tiny ImageNet. Entries are percentages of all test inputs satisfying both the boundary-confidence condition and the radius threshold.}
\label{tab:app_crd_boundary_full_tiny_imagenet}
\resizebox{\textwidth}{!}{%
\begin{tabular}{lllcccccccc}
\toprule
Method & Variant & Budget & $t=0$ & $t=0.0125$ & $t=0.025$ & $t=0.0375$ & $t=0.05$ & $t=0.0625$ & $t=0.075$ & $t=0.0875$ \\
\midrule
RRISE & trainhead\_initbase\_n500 & 500 & 0.24 $\pm$ 0.00 & 0.20 $\pm$ 0.01 & 0.16 $\pm$ 0.01 & 0.11 $\pm$ 0.01 & 0.07 $\pm$ 0.00 & 0.02 $\pm$ 0.00 & 0.00 $\pm$ 0.00 & 0.00 $\pm$ 0.00 \\
RRISE & trainhead\_initbase\_n1000 & 1000 & 0.24 $\pm$ 0.01 & 0.20 $\pm$ 0.00 & 0.15 $\pm$ 0.01 & 0.11 $\pm$ 0.00 & 0.07 $\pm$ 0.00 & 0.02 $\pm$ 0.00 & 0.00 $\pm$ 0.00 & 0.00 $\pm$ 0.00 \\
RRISE & trainhead\_initbase\_n5000 & 5000 & 0.22 $\pm$ 0.00 & 0.18 $\pm$ 0.00 & 0.14 $\pm$ 0.00 & 0.10 $\pm$ 0.00 & 0.06 $\pm$ 0.00 & 0.02 $\pm$ 0.00 & 0.00 $\pm$ 0.00 & 0.00 $\pm$ 0.00 \\
RRISE & trainhead\_initbase\_n10000 & 10000 & 0.23 $\pm$ 0.01 & 0.19 $\pm$ 0.01 & 0.14 $\pm$ 0.01 & 0.10 $\pm$ 0.01 & 0.06 $\pm$ 0.00 & 0.02 $\pm$ 0.00 & 0.00 $\pm$ 0.00 & 0.00 $\pm$ 0.00 \\
RRISE & trainall\_initbase\_n500 & 500 & 0.26 $\pm$ 0.00 & 0.21 $\pm$ 0.00 & 0.16 $\pm$ 0.00 & 0.11 $\pm$ 0.00 & 0.06 $\pm$ 0.00 & 0.02 $\pm$ 0.00 & 0.00 $\pm$ 0.00 & 0.00 $\pm$ 0.00 \\
RRISE & trainall\_initbase\_n1000 & 1000 & 0.26 $\pm$ 0.00 & 0.21 $\pm$ 0.00 & 0.15 $\pm$ 0.00 & 0.11 $\pm$ 0.00 & 0.06 $\pm$ 0.00 & 0.02 $\pm$ 0.00 & 0.00 $\pm$ 0.00 & 0.00 $\pm$ 0.00 \\
RRISE & trainall\_initbase\_n5000 & 5000 & 0.26 $\pm$ 0.00 & 0.20 $\pm$ 0.00 & 0.15 $\pm$ 0.00 & 0.10 $\pm$ 0.00 & 0.06 $\pm$ 0.00 & 0.02 $\pm$ 0.00 & 0.00 $\pm$ 0.00 & 0.00 $\pm$ 0.00 \\
RRISE & trainall\_initbase\_n10000 & 10000 & 0.27 $\pm$ 0.00 & 0.21 $\pm$ 0.00 & 0.16 $\pm$ 0.00 & 0.11 $\pm$ 0.00 & 0.06 $\pm$ 0.00 & 0.02 $\pm$ 0.00 & 0.00 $\pm$ 0.00 & 0.00 $\pm$ 0.00 \\
RRISE & trainall\_initrandom\_n500 & 500 & 0.00 $\pm$ 0.00 & 0.00 $\pm$ 0.00 & 0.00 $\pm$ 0.00 & 0.00 $\pm$ 0.00 & 0.00 $\pm$ 0.00 & 0.00 $\pm$ 0.00 & 0.00 $\pm$ 0.00 & 0.00 $\pm$ 0.00 \\
RRISE & trainall\_initrandom\_n1000 & 1000 & 0.00 $\pm$ 0.00 & 0.00 $\pm$ 0.00 & 0.00 $\pm$ 0.00 & 0.00 $\pm$ 0.00 & 0.00 $\pm$ 0.00 & 0.00 $\pm$ 0.00 & 0.00 $\pm$ 0.00 & 0.00 $\pm$ 0.00 \\
RRISE & trainall\_initrandom\_n5000 & 5000 & 0.00 $\pm$ 0.00 & 0.00 $\pm$ 0.00 & 0.00 $\pm$ 0.00 & 0.00 $\pm$ 0.00 & 0.00 $\pm$ 0.00 & 0.00 $\pm$ 0.00 & 0.00 $\pm$ 0.00 & 0.00 $\pm$ 0.00 \\
RRISE & trainall\_initrandom\_n10000 & 10000 & 0.00 $\pm$ 0.00 & 0.00 $\pm$ 0.00 & 0.00 $\pm$ 0.00 & 0.00 $\pm$ 0.00 & 0.00 $\pm$ 0.00 & 0.00 $\pm$ 0.00 & 0.00 $\pm$ 0.00 & 0.00 $\pm$ 0.00 \\
\midrule
Baseline 1 & n500 & 500 & 0.21 $\pm$ 0.00 & 0.16 $\pm$ 0.00 & 0.12 $\pm$ 0.00 & 0.08 $\pm$ 0.00 & 0.05 $\pm$ 0.00 & 0.01 $\pm$ 0.00 & 0.00 $\pm$ 0.00 & 0.00 $\pm$ 0.00 \\
Baseline 1 & n1000 & 1000 & 0.21 $\pm$ 0.00 & 0.16 $\pm$ 0.00 & 0.12 $\pm$ 0.00 & 0.08 $\pm$ 0.00 & 0.05 $\pm$ 0.00 & 0.01 $\pm$ 0.00 & 0.00 $\pm$ 0.00 & 0.00 $\pm$ 0.00 \\
Baseline 1 & n5000 & 5000 & 0.21 $\pm$ 0.00 & 0.16 $\pm$ 0.00 & 0.12 $\pm$ 0.00 & 0.08 $\pm$ 0.00 & 0.05 $\pm$ 0.00 & 0.01 $\pm$ 0.00 & 0.00 $\pm$ 0.00 & 0.00 $\pm$ 0.00 \\
Baseline 1 & n10000 & 10000 & 0.21 $\pm$ 0.00 & 0.16 $\pm$ 0.00 & 0.12 $\pm$ 0.00 & 0.08 $\pm$ 0.00 & 0.05 $\pm$ 0.00 & 0.01 $\pm$ 0.00 & 0.00 $\pm$ 0.00 & 0.00 $\pm$ 0.00 \\
\midrule
Baseline 2 & k500\_decline0.01 & 500 & 0.20 $\pm$ 0.00 & 0.15 $\pm$ 0.00 & 0.11 $\pm$ 0.00 & 0.07 $\pm$ 0.00 & 0.03 $\pm$ 0.00 & 0.01 $\pm$ 0.00 & 0.00 $\pm$ 0.00 & 0.00 $\pm$ 0.00 \\
Baseline 2 & k1000\_decline0.01 & 1000 & 0.19 $\pm$ 0.00 & 0.15 $\pm$ 0.00 & 0.10 $\pm$ 0.00 & 0.06 $\pm$ 0.00 & 0.03 $\pm$ 0.00 & 0.01 $\pm$ 0.00 & 0.00 $\pm$ 0.00 & 0.00 $\pm$ 0.00 \\
Baseline 2 & k5000\_decline0.01 & 5000 & 0.18 $\pm$ 0.00 & 0.14 $\pm$ 0.00 & 0.10 $\pm$ 0.00 & 0.05 $\pm$ 0.00 & 0.04 $\pm$ 0.00 & 0.01 $\pm$ 0.00 & 0.00 $\pm$ 0.00 & 0.00 $\pm$ 0.00 \\
Baseline 2 & k10000\_decline0.01 & 10000 & 0.18 $\pm$ 0.00 & 0.14 $\pm$ 0.00 & 0.10 $\pm$ 0.00 & 0.05 $\pm$ 0.00 & 0.04 $\pm$ 0.00 & 0.01 $\pm$ 0.00 & 0.00 $\pm$ 0.00 & 0.00 $\pm$ 0.00 \\
Baseline 2 & k500\_decline0.05 & 500 & 0.20 $\pm$ 0.00 & 0.16 $\pm$ 0.00 & 0.11 $\pm$ 0.00 & 0.07 $\pm$ 0.00 & 0.03 $\pm$ 0.00 & 0.01 $\pm$ 0.00 & 0.00 $\pm$ 0.00 & 0.00 $\pm$ 0.00 \\
Baseline 2 & k1000\_decline0.05 & 1000 & 0.19 $\pm$ 0.00 & 0.15 $\pm$ 0.00 & 0.11 $\pm$ 0.00 & 0.06 $\pm$ 0.00 & 0.03 $\pm$ 0.00 & 0.01 $\pm$ 0.00 & 0.00 $\pm$ 0.00 & 0.00 $\pm$ 0.00 \\
Baseline 2 & k5000\_decline0.05 & 5000 & 0.18 $\pm$ 0.00 & 0.14 $\pm$ 0.00 & 0.10 $\pm$ 0.00 & 0.05 $\pm$ 0.00 & 0.04 $\pm$ 0.00 & 0.01 $\pm$ 0.00 & 0.00 $\pm$ 0.00 & 0.00 $\pm$ 0.00 \\
Baseline 2 & k10000\_decline0.05 & 10000 & 0.18 $\pm$ 0.00 & 0.14 $\pm$ 0.00 & 0.10 $\pm$ 0.00 & 0.05 $\pm$ 0.00 & 0.04 $\pm$ 0.00 & 0.01 $\pm$ 0.00 & 0.00 $\pm$ 0.00 & 0.00 $\pm$ 0.00 \\
\midrule
Baseline 3 & n500\_tol0.01 & 500 & 0.21 $\pm$ 0.00 & 0.16 $\pm$ 0.00 & 0.12 $\pm$ 0.00 & 0.08 $\pm$ 0.00 & 0.05 $\pm$ 0.00 & 0.01 $\pm$ 0.00 & 0.00 $\pm$ 0.00 & 0.00 $\pm$ 0.00 \\
Baseline 3 & n1000\_tol0.01 & 1000 & 0.21 $\pm$ 0.00 & 0.16 $\pm$ 0.00 & 0.12 $\pm$ 0.00 & 0.08 $\pm$ 0.00 & 0.05 $\pm$ 0.00 & 0.01 $\pm$ 0.00 & 0.00 $\pm$ 0.00 & 0.00 $\pm$ 0.00 \\
Baseline 3 & n5000\_tol0.01 & 5000 & 0.21 $\pm$ 0.00 & 0.16 $\pm$ 0.00 & 0.12 $\pm$ 0.00 & 0.08 $\pm$ 0.00 & 0.05 $\pm$ 0.00 & 0.01 $\pm$ 0.00 & 0.00 $\pm$ 0.00 & 0.00 $\pm$ 0.00 \\
Baseline 3 & n10000\_tol0.01 & 10000 & 0.21 $\pm$ 0.00 & 0.16 $\pm$ 0.00 & 0.12 $\pm$ 0.00 & 0.08 $\pm$ 0.00 & 0.05 $\pm$ 0.00 & 0.01 $\pm$ 0.00 & 0.00 $\pm$ 0.00 & 0.00 $\pm$ 0.00 \\
Baseline 3 & n500\_tol0.05 & 500 & 0.21 $\pm$ 0.00 & 0.16 $\pm$ 0.00 & 0.12 $\pm$ 0.00 & 0.08 $\pm$ 0.00 & 0.05 $\pm$ 0.00 & 0.01 $\pm$ 0.00 & 0.00 $\pm$ 0.00 & 0.00 $\pm$ 0.00 \\
Baseline 3 & n1000\_tol0.05 & 1000 & 0.21 $\pm$ 0.00 & 0.16 $\pm$ 0.00 & 0.12 $\pm$ 0.00 & 0.09 $\pm$ 0.00 & 0.05 $\pm$ 0.00 & 0.01 $\pm$ 0.00 & 0.00 $\pm$ 0.00 & 0.00 $\pm$ 0.00 \\
Baseline 3 & n5000\_tol0.05 & 5000 & 0.21 $\pm$ 0.00 & 0.16 $\pm$ 0.00 & 0.12 $\pm$ 0.00 & 0.09 $\pm$ 0.00 & 0.05 $\pm$ 0.00 & 0.01 $\pm$ 0.00 & 0.00 $\pm$ 0.00 & 0.00 $\pm$ 0.00 \\
Baseline 3 & n10000\_tol0.05 & 10000 & 0.21 $\pm$ 0.00 & 0.17 $\pm$ 0.00 & 0.12 $\pm$ 0.00 & 0.08 $\pm$ 0.00 & 0.05 $\pm$ 0.00 & 0.01 $\pm$ 0.00 & 0.00 $\pm$ 0.00 & 0.00 $\pm$ 0.00 \\
\midrule
Baseline 4 & init0\_n500 & 500 & 0.00 $\pm$ 0.00 & 0.00 $\pm$ 0.00 & 0.00 $\pm$ 0.00 & 0.00 $\pm$ 0.00 & 0.00 $\pm$ 0.00 & 0.00 $\pm$ 0.00 & 0.00 $\pm$ 0.00 & 0.00 $\pm$ 0.00 \\
Baseline 4 & init0\_n1000 & 1000 & 0.00 $\pm$ 0.00 & 0.00 $\pm$ 0.00 & 0.00 $\pm$ 0.00 & 0.00 $\pm$ 0.00 & 0.00 $\pm$ 0.00 & 0.00 $\pm$ 0.00 & 0.00 $\pm$ 0.00 & 0.00 $\pm$ 0.00 \\
Baseline 4 & init0\_n5000 & 5000 & 0.00 $\pm$ 0.00 & 0.00 $\pm$ 0.00 & 0.00 $\pm$ 0.00 & 0.00 $\pm$ 0.00 & 0.00 $\pm$ 0.00 & 0.00 $\pm$ 0.00 & 0.00 $\pm$ 0.00 & 0.00 $\pm$ 0.00 \\
Baseline 4 & init0\_n10000 & 10000 & 0.00 $\pm$ 0.00 & 0.00 $\pm$ 0.00 & 0.00 $\pm$ 0.00 & 0.00 $\pm$ 0.00 & 0.00 $\pm$ 0.00 & 0.00 $\pm$ 0.00 & 0.00 $\pm$ 0.00 & 0.00 $\pm$ 0.00 \\
Baseline 4 & init1\_n500 & 500 & 0.58 $\pm$ 0.02 & 0.52 $\pm$ 0.02 & 0.47 $\pm$ 0.02 & 0.40 $\pm$ 0.03 & 0.30 $\pm$ 0.05 & 0.07 $\pm$ 0.12 & 0.00 $\pm$ 0.00 & 0.00 $\pm$ 0.00 \\
Baseline 4 & init1\_n1000 & 1000 & 0.36 $\pm$ 0.03 & 0.31 $\pm$ 0.03 & 0.26 $\pm$ 0.03 & 0.20 $\pm$ 0.03 & 0.14 $\pm$ 0.03 & 0.05 $\pm$ 0.02 & 0.00 $\pm$ 0.00 & 0.00 $\pm$ 0.00 \\
Baseline 4 & init1\_n5000 & 5000 & 0.34 $\pm$ 0.01 & 0.29 $\pm$ 0.01 & 0.24 $\pm$ 0.01 & 0.18 $\pm$ 0.01 & 0.12 $\pm$ 0.00 & 0.04 $\pm$ 0.00 & 0.00 $\pm$ 0.00 & 0.00 $\pm$ 0.00 \\
Baseline 4 & init1\_n10000 & 10000 & 0.42 $\pm$ 0.14 & 0.37 $\pm$ 0.14 & 0.32 $\pm$ 0.14 & 0.26 $\pm$ 0.13 & 0.20 $\pm$ 0.12 & 0.10 $\pm$ 0.10 & 0.00 $\pm$ 0.00 & 0.00 $\pm$ 0.00 \\
\bottomrule
\end{tabular}%
}
\end{table*}

\paragraph{Certified-input CRD.}
Tables~\ref{tab:app_crd_certified_full}--\ref{tab:app_crd_certified_full_tiny_imagenet} report the CRD conditioned on certified inputs, i.e., inputs that already satisfy $\widetilde p_A(\x)>1/2$.  These entries are conditional fractions over certified inputs: the value 1.00 at $t=0$ means that all certified inputs have radius larger than zero under that method and variant.

\begin{table*}[t]
\centering
\scriptsize
\caption{Full CRD results over certified inputs on FashionMNIST. Entries are conditional fractions over certified inputs; 1.00 means all certified inputs exceed the threshold.}
\label{tab:app_crd_certified_full}
\resizebox{\textwidth}{!}{%
\begin{tabular}{lllcccccccc}
\toprule
Method & Variant & Budget & $t=0$ & $t=0.025$ & $t=0.05$ & $t=0.075$ & $t=0.1$ & $t=0.125$ & $t=0.15$ & $t=0.175$ \\
\midrule
RRISE & trainhead\_initbase\_n500 & 500 & 1.00 $\pm$ 0.00 & 0.99 $\pm$ 0.00 & 0.99 $\pm$ 0.00 & 0.98 $\pm$ 0.00 & 0.97 $\pm$ 0.00 & 0.97 $\pm$ 0.00 & 0.96 $\pm$ 0.00 & 0.95 $\pm$ 0.00 \\
RRISE & trainhead\_initbase\_n1000 & 1000 & 1.00 $\pm$ 0.00 & 0.99 $\pm$ 0.00 & 0.99 $\pm$ 0.00 & 0.98 $\pm$ 0.00 & 0.97 $\pm$ 0.00 & 0.97 $\pm$ 0.00 & 0.96 $\pm$ 0.00 & 0.95 $\pm$ 0.00 \\
RRISE & trainhead\_initbase\_n5000 & 5000 & 1.00 $\pm$ 0.00 & 0.99 $\pm$ 0.00 & 0.99 $\pm$ 0.00 & 0.98 $\pm$ 0.00 & 0.97 $\pm$ 0.00 & 0.97 $\pm$ 0.00 & 0.96 $\pm$ 0.00 & 0.95 $\pm$ 0.00 \\
RRISE & trainhead\_initbase\_n10000 & 10000 & 1.00 $\pm$ 0.00 & 0.99 $\pm$ 0.00 & 0.99 $\pm$ 0.00 & 0.98 $\pm$ 0.00 & 0.97 $\pm$ 0.00 & 0.96 $\pm$ 0.00 & 0.96 $\pm$ 0.00 & 0.95 $\pm$ 0.00 \\
RRISE & trainall\_initbase\_n500 & 500 & 1.00 $\pm$ 0.00 & 0.99 $\pm$ 0.00 & 0.98 $\pm$ 0.00 & 0.98 $\pm$ 0.00 & 0.97 $\pm$ 0.00 & 0.96 $\pm$ 0.00 & 0.95 $\pm$ 0.00 & 0.95 $\pm$ 0.00 \\
RRISE & trainall\_initbase\_n1000 & 1000 & 1.00 $\pm$ 0.00 & 0.99 $\pm$ 0.00 & 0.99 $\pm$ 0.00 & 0.98 $\pm$ 0.00 & 0.97 $\pm$ 0.00 & 0.96 $\pm$ 0.00 & 0.95 $\pm$ 0.00 & 0.95 $\pm$ 0.00 \\
RRISE & trainall\_initbase\_n5000 & 5000 & 1.00 $\pm$ 0.00 & 0.99 $\pm$ 0.00 & 0.99 $\pm$ 0.00 & 0.98 $\pm$ 0.00 & 0.97 $\pm$ 0.00 & 0.96 $\pm$ 0.00 & 0.96 $\pm$ 0.00 & 0.95 $\pm$ 0.00 \\
RRISE & trainall\_initbase\_n10000 & 10000 & 1.00 $\pm$ 0.00 & 0.99 $\pm$ 0.00 & 0.98 $\pm$ 0.00 & 0.98 $\pm$ 0.00 & 0.97 $\pm$ 0.00 & 0.96 $\pm$ 0.00 & 0.95 $\pm$ 0.00 & 0.95 $\pm$ 0.00 \\
RRISE & trainall\_initrandom\_n500 & 500 & 1.00 $\pm$ 0.00 & 0.99 $\pm$ 0.00 & 0.98 $\pm$ 0.00 & 0.97 $\pm$ 0.00 & 0.96 $\pm$ 0.00 & 0.95 $\pm$ 0.00 & 0.94 $\pm$ 0.00 & 0.94 $\pm$ 0.00 \\
RRISE & trainall\_initrandom\_n1000 & 1000 & 1.00 $\pm$ 0.00 & 0.99 $\pm$ 0.00 & 0.98 $\pm$ 0.00 & 0.97 $\pm$ 0.00 & 0.96 $\pm$ 0.00 & 0.95 $\pm$ 0.00 & 0.94 $\pm$ 0.00 & 0.93 $\pm$ 0.00 \\
RRISE & trainall\_initrandom\_n5000 & 5000 & 1.00 $\pm$ 0.00 & 0.99 $\pm$ 0.00 & 0.98 $\pm$ 0.00 & 0.97 $\pm$ 0.00 & 0.96 $\pm$ 0.00 & 0.95 $\pm$ 0.00 & 0.94 $\pm$ 0.00 & 0.93 $\pm$ 0.00 \\
RRISE & trainall\_initrandom\_n10000 & 10000 & 1.00 $\pm$ 0.00 & 0.99 $\pm$ 0.00 & 0.98 $\pm$ 0.00 & 0.97 $\pm$ 0.00 & 0.96 $\pm$ 0.00 & 0.95 $\pm$ 0.00 & 0.94 $\pm$ 0.00 & 0.93 $\pm$ 0.00 \\
\midrule
Baseline 1 & n500 & 500 & 1.00 $\pm$ 0.00 & 0.99 $\pm$ 0.00 & 0.98 $\pm$ 0.00 & 0.98 $\pm$ 0.00 & 0.97 $\pm$ 0.00 & 0.96 $\pm$ 0.00 & 0.96 $\pm$ 0.00 & 0.95 $\pm$ 0.00 \\
Baseline 1 & n1000 & 1000 & 1.00 $\pm$ 0.00 & 0.99 $\pm$ 0.00 & 0.99 $\pm$ 0.00 & 0.98 $\pm$ 0.00 & 0.97 $\pm$ 0.00 & 0.96 $\pm$ 0.00 & 0.96 $\pm$ 0.00 & 0.95 $\pm$ 0.00 \\
Baseline 1 & n5000 & 5000 & 1.00 $\pm$ 0.00 & 0.99 $\pm$ 0.00 & 0.99 $\pm$ 0.00 & 0.98 $\pm$ 0.00 & 0.97 $\pm$ 0.00 & 0.96 $\pm$ 0.00 & 0.96 $\pm$ 0.00 & 0.95 $\pm$ 0.00 \\
Baseline 1 & n10000 & 10000 & 1.00 $\pm$ 0.00 & 0.99 $\pm$ 0.00 & 0.99 $\pm$ 0.00 & 0.98 $\pm$ 0.00 & 0.97 $\pm$ 0.00 & 0.96 $\pm$ 0.00 & 0.96 $\pm$ 0.00 & 0.95 $\pm$ 0.00 \\
\midrule
Baseline 2 & k500\_decline0.01 & 500 & 1.00 $\pm$ 0.00 & 0.99 $\pm$ 0.00 & 0.98 $\pm$ 0.00 & 0.98 $\pm$ 0.00 & 0.97 $\pm$ 0.00 & 0.96 $\pm$ 0.00 & 0.96 $\pm$ 0.00 & 0.95 $\pm$ 0.00 \\
Baseline 2 & k1000\_decline0.01 & 1000 & 1.00 $\pm$ 0.00 & 0.99 $\pm$ 0.00 & 0.98 $\pm$ 0.00 & 0.98 $\pm$ 0.00 & 0.97 $\pm$ 0.00 & 0.96 $\pm$ 0.00 & 0.96 $\pm$ 0.00 & 0.95 $\pm$ 0.00 \\
Baseline 2 & k5000\_decline0.01 & 5000 & 1.00 $\pm$ 0.00 & 0.99 $\pm$ 0.00 & 0.99 $\pm$ 0.00 & 0.98 $\pm$ 0.00 & 0.97 $\pm$ 0.00 & 0.97 $\pm$ 0.00 & 0.96 $\pm$ 0.00 & 0.96 $\pm$ 0.00 \\
Baseline 2 & k10000\_decline0.01 & 10000 & 1.00 $\pm$ 0.00 & 0.99 $\pm$ 0.00 & 0.98 $\pm$ 0.00 & 0.98 $\pm$ 0.00 & 0.97 $\pm$ 0.00 & 0.97 $\pm$ 0.00 & 0.96 $\pm$ 0.00 & 0.96 $\pm$ 0.00 \\
Baseline 2 & k500\_decline0.05 & 500 & 1.00 $\pm$ 0.00 & 0.99 $\pm$ 0.00 & 0.99 $\pm$ 0.00 & 0.98 $\pm$ 0.00 & 0.97 $\pm$ 0.00 & 0.96 $\pm$ 0.00 & 0.96 $\pm$ 0.00 & 0.95 $\pm$ 0.00 \\
Baseline 2 & k1000\_decline0.05 & 1000 & 1.00 $\pm$ 0.00 & 0.99 $\pm$ 0.00 & 0.99 $\pm$ 0.00 & 0.98 $\pm$ 0.00 & 0.97 $\pm$ 0.00 & 0.96 $\pm$ 0.00 & 0.96 $\pm$ 0.00 & 0.95 $\pm$ 0.00 \\
Baseline 2 & k5000\_decline0.05 & 5000 & 1.00 $\pm$ 0.00 & 0.99 $\pm$ 0.00 & 0.98 $\pm$ 0.00 & 0.98 $\pm$ 0.00 & 0.97 $\pm$ 0.00 & 0.97 $\pm$ 0.00 & 0.96 $\pm$ 0.00 & 0.96 $\pm$ 0.00 \\
Baseline 2 & k10000\_decline0.05 & 10000 & 1.00 $\pm$ 0.00 & 0.99 $\pm$ 0.00 & 0.98 $\pm$ 0.00 & 0.98 $\pm$ 0.00 & 0.97 $\pm$ 0.00 & 0.97 $\pm$ 0.00 & 0.96 $\pm$ 0.00 & 0.96 $\pm$ 0.00 \\
\midrule
Baseline 3 & n500\_tol0.01 & 500 & 1.00 $\pm$ 0.00 & 0.99 $\pm$ 0.00 & 0.98 $\pm$ 0.00 & 0.98 $\pm$ 0.00 & 0.97 $\pm$ 0.00 & 0.96 $\pm$ 0.00 & 0.96 $\pm$ 0.00 & 0.95 $\pm$ 0.00 \\
Baseline 3 & n1000\_tol0.01 & 1000 & 1.00 $\pm$ 0.00 & 0.99 $\pm$ 0.00 & 0.99 $\pm$ 0.00 & 0.98 $\pm$ 0.00 & 0.97 $\pm$ 0.00 & 0.96 $\pm$ 0.00 & 0.96 $\pm$ 0.00 & 0.95 $\pm$ 0.00 \\
Baseline 3 & n5000\_tol0.01 & 5000 & 1.00 $\pm$ 0.00 & 0.99 $\pm$ 0.00 & 0.99 $\pm$ 0.00 & 0.98 $\pm$ 0.00 & 0.97 $\pm$ 0.00 & 0.96 $\pm$ 0.00 & 0.96 $\pm$ 0.00 & 0.95 $\pm$ 0.00 \\
Baseline 3 & n10000\_tol0.01 & 10000 & 1.00 $\pm$ 0.00 & 0.99 $\pm$ 0.00 & 0.99 $\pm$ 0.00 & 0.98 $\pm$ 0.00 & 0.97 $\pm$ 0.00 & 0.96 $\pm$ 0.00 & 0.96 $\pm$ 0.00 & 0.95 $\pm$ 0.00 \\
Baseline 3 & n500\_tol0.05 & 500 & 1.00 $\pm$ 0.00 & 0.99 $\pm$ 0.00 & 0.98 $\pm$ 0.00 & 0.98 $\pm$ 0.00 & 0.97 $\pm$ 0.00 & 0.96 $\pm$ 0.00 & 0.96 $\pm$ 0.00 & 0.95 $\pm$ 0.00 \\
Baseline 3 & n1000\_tol0.05 & 1000 & 1.00 $\pm$ 0.00 & 0.99 $\pm$ 0.00 & 0.99 $\pm$ 0.00 & 0.98 $\pm$ 0.00 & 0.97 $\pm$ 0.00 & 0.96 $\pm$ 0.00 & 0.96 $\pm$ 0.00 & 0.95 $\pm$ 0.00 \\
Baseline 3 & n5000\_tol0.05 & 5000 & 1.00 $\pm$ 0.00 & 0.99 $\pm$ 0.00 & 0.98 $\pm$ 0.00 & 0.98 $\pm$ 0.00 & 0.97 $\pm$ 0.00 & 0.96 $\pm$ 0.00 & 0.96 $\pm$ 0.00 & 0.95 $\pm$ 0.00 \\
Baseline 3 & n10000\_tol0.05 & 10000 & 1.00 $\pm$ 0.00 & 0.99 $\pm$ 0.00 & 0.99 $\pm$ 0.00 & 0.98 $\pm$ 0.00 & 0.97 $\pm$ 0.00 & 0.96 $\pm$ 0.00 & 0.96 $\pm$ 0.00 & 0.95 $\pm$ 0.00 \\
\midrule
Baseline 4 & init0\_n500 & 500 & 1.00 $\pm$ 0.00 & 0.99 $\pm$ 0.00 & 0.98 $\pm$ 0.00 & 0.97 $\pm$ 0.00 & 0.96 $\pm$ 0.00 & 0.95 $\pm$ 0.00 & 0.95 $\pm$ 0.01 & 0.94 $\pm$ 0.01 \\
Baseline 4 & init0\_n1000 & 1000 & 1.00 $\pm$ 0.00 & 0.99 $\pm$ 0.00 & 0.98 $\pm$ 0.00 & 0.97 $\pm$ 0.00 & 0.96 $\pm$ 0.00 & 0.95 $\pm$ 0.00 & 0.94 $\pm$ 0.00 & 0.93 $\pm$ 0.00 \\
Baseline 4 & init0\_n5000 & 5000 & 1.00 $\pm$ 0.00 & 0.99 $\pm$ 0.00 & 0.98 $\pm$ 0.00 & 0.97 $\pm$ 0.00 & 0.96 $\pm$ 0.00 & 0.95 $\pm$ 0.00 & 0.94 $\pm$ 0.00 & 0.93 $\pm$ 0.00 \\
Baseline 4 & init0\_n10000 & 10000 & 1.00 $\pm$ 0.00 & 0.99 $\pm$ 0.00 & 0.98 $\pm$ 0.00 & 0.98 $\pm$ 0.00 & 0.97 $\pm$ 0.00 & 0.96 $\pm$ 0.01 & 0.95 $\pm$ 0.01 & 0.94 $\pm$ 0.01 \\
Baseline 4 & init1\_n500 & 500 & 1.00 $\pm$ 0.00 & 0.99 $\pm$ 0.00 & 0.99 $\pm$ 0.00 & 0.98 $\pm$ 0.00 & 0.97 $\pm$ 0.00 & 0.96 $\pm$ 0.00 & 0.96 $\pm$ 0.00 & 0.95 $\pm$ 0.00 \\
Baseline 4 & init1\_n1000 & 1000 & 1.00 $\pm$ 0.00 & 0.99 $\pm$ 0.00 & 0.98 $\pm$ 0.00 & 0.98 $\pm$ 0.00 & 0.97 $\pm$ 0.00 & 0.96 $\pm$ 0.00 & 0.96 $\pm$ 0.01 & 0.95 $\pm$ 0.01 \\
Baseline 4 & init1\_n5000 & 5000 & 1.00 $\pm$ 0.00 & 0.99 $\pm$ 0.00 & 0.99 $\pm$ 0.00 & 0.98 $\pm$ 0.00 & 0.97 $\pm$ 0.00 & 0.96 $\pm$ 0.00 & 0.96 $\pm$ 0.00 & 0.95 $\pm$ 0.00 \\
Baseline 4 & init1\_n10000 & 10000 & 1.00 $\pm$ 0.00 & 0.99 $\pm$ 0.00 & 0.99 $\pm$ 0.00 & 0.98 $\pm$ 0.00 & 0.97 $\pm$ 0.00 & 0.97 $\pm$ 0.00 & 0.96 $\pm$ 0.00 & 0.95 $\pm$ 0.00 \\
\bottomrule
\end{tabular}%
}
\end{table*}

\begin{table*}[t]
\centering
\scriptsize
\caption{Full CRD results over certified inputs on CIFAR-10. Entries are conditional fractions over certified inputs; 1.00 means all certified inputs exceed the threshold.}
\label{tab:app_crd_certified_full_cifar10}
\resizebox{\textwidth}{!}{%
\begin{tabular}{lllcccccccc}
\toprule
Method & Variant & Budget & $t=0$ & $t=0.025$ & $t=0.05$ & $t=0.075$ & $t=0.1$ & $t=0.125$ & $t=0.15$ & $t=0.175$ \\
\midrule
RRISE & trainhead\_initbase\_n500 & 500 & 1.00 $\pm$ 0.00 & 0.97 $\pm$ 0.00 & 0.95 $\pm$ 0.00 & 0.92 $\pm$ 0.00 & 0.89 $\pm$ 0.00 & 0.86 $\pm$ 0.00 & 0.84 $\pm$ 0.00 & 0.81 $\pm$ 0.00 \\
RRISE & trainhead\_initbase\_n1000 & 1000 & 1.00 $\pm$ 0.00 & 0.97 $\pm$ 0.00 & 0.95 $\pm$ 0.00 & 0.92 $\pm$ 0.00 & 0.90 $\pm$ 0.00 & 0.87 $\pm$ 0.00 & 0.84 $\pm$ 0.00 & 0.81 $\pm$ 0.00 \\
RRISE & trainhead\_initbase\_n5000 & 5000 & 1.00 $\pm$ 0.00 & 0.98 $\pm$ 0.00 & 0.95 $\pm$ 0.00 & 0.92 $\pm$ 0.00 & 0.90 $\pm$ 0.00 & 0.87 $\pm$ 0.00 & 0.85 $\pm$ 0.00 & 0.82 $\pm$ 0.00 \\
RRISE & trainhead\_initbase\_n10000 & 10000 & 1.00 $\pm$ 0.00 & 0.98 $\pm$ 0.00 & 0.95 $\pm$ 0.00 & 0.92 $\pm$ 0.00 & 0.90 $\pm$ 0.00 & 0.87 $\pm$ 0.00 & 0.85 $\pm$ 0.00 & 0.82 $\pm$ 0.00 \\
RRISE & trainall\_initbase\_n500 & 500 & 1.00 $\pm$ 0.00 & 0.97 $\pm$ 0.00 & 0.94 $\pm$ 0.00 & 0.92 $\pm$ 0.00 & 0.89 $\pm$ 0.00 & 0.86 $\pm$ 0.00 & 0.84 $\pm$ 0.00 & 0.81 $\pm$ 0.00 \\
RRISE & trainall\_initbase\_n1000 & 1000 & 1.00 $\pm$ 0.00 & 0.97 $\pm$ 0.00 & 0.94 $\pm$ 0.00 & 0.92 $\pm$ 0.00 & 0.89 $\pm$ 0.00 & 0.87 $\pm$ 0.00 & 0.84 $\pm$ 0.00 & 0.82 $\pm$ 0.00 \\
RRISE & trainall\_initbase\_n5000 & 5000 & 1.00 $\pm$ 0.00 & 0.97 $\pm$ 0.00 & 0.94 $\pm$ 0.00 & 0.92 $\pm$ 0.00 & 0.89 $\pm$ 0.00 & 0.87 $\pm$ 0.00 & 0.84 $\pm$ 0.00 & 0.82 $\pm$ 0.00 \\
RRISE & trainall\_initbase\_n10000 & 10000 & 1.00 $\pm$ 0.00 & 0.97 $\pm$ 0.00 & 0.95 $\pm$ 0.00 & 0.92 $\pm$ 0.00 & 0.89 $\pm$ 0.00 & 0.87 $\pm$ 0.00 & 0.84 $\pm$ 0.00 & 0.82 $\pm$ 0.00 \\
RRISE & trainall\_initrandom\_n500 & 500 & 1.00 $\pm$ 0.00 & 0.94 $\pm$ 0.00 & 0.89 $\pm$ 0.01 & 0.82 $\pm$ 0.01 & 0.76 $\pm$ 0.01 & 0.67 $\pm$ 0.02 & 0.55 $\pm$ 0.05 & 0.12 $\pm$ 0.21 \\
RRISE & trainall\_initrandom\_n1000 & 1000 & 1.00 $\pm$ 0.00 & 0.95 $\pm$ 0.00 & 0.89 $\pm$ 0.00 & 0.83 $\pm$ 0.01 & 0.78 $\pm$ 0.01 & 0.71 $\pm$ 0.02 & 0.62 $\pm$ 0.04 & 0.50 $\pm$ 0.08 \\
RRISE & trainall\_initrandom\_n5000 & 5000 & 1.00 $\pm$ 0.00 & 0.95 $\pm$ 0.00 & 0.90 $\pm$ 0.01 & 0.84 $\pm$ 0.01 & 0.79 $\pm$ 0.02 & 0.73 $\pm$ 0.02 & 0.66 $\pm$ 0.03 & 0.58 $\pm$ 0.04 \\
RRISE & trainall\_initrandom\_n10000 & 10000 & 1.00 $\pm$ 0.00 & 0.95 $\pm$ 0.01 & 0.90 $\pm$ 0.01 & 0.84 $\pm$ 0.02 & 0.79 $\pm$ 0.03 & 0.73 $\pm$ 0.04 & 0.66 $\pm$ 0.06 & 0.56 $\pm$ 0.13 \\
\midrule
Baseline 1 & n500 & 500 & 1.00 $\pm$ 0.00 & 0.97 $\pm$ 0.00 & 0.94 $\pm$ 0.00 & 0.92 $\pm$ 0.00 & 0.89 $\pm$ 0.00 & 0.86 $\pm$ 0.00 & 0.84 $\pm$ 0.00 & 0.82 $\pm$ 0.00 \\
Baseline 1 & n1000 & 1000 & 1.00 $\pm$ 0.00 & 0.97 $\pm$ 0.00 & 0.94 $\pm$ 0.00 & 0.92 $\pm$ 0.00 & 0.89 $\pm$ 0.00 & 0.87 $\pm$ 0.00 & 0.84 $\pm$ 0.00 & 0.82 $\pm$ 0.00 \\
Baseline 1 & n5000 & 5000 & 1.00 $\pm$ 0.00 & 0.97 $\pm$ 0.00 & 0.94 $\pm$ 0.00 & 0.92 $\pm$ 0.00 & 0.89 $\pm$ 0.00 & 0.87 $\pm$ 0.00 & 0.84 $\pm$ 0.00 & 0.82 $\pm$ 0.00 \\
Baseline 1 & n10000 & 10000 & 1.00 $\pm$ 0.00 & 0.97 $\pm$ 0.00 & 0.94 $\pm$ 0.00 & 0.92 $\pm$ 0.00 & 0.89 $\pm$ 0.00 & 0.87 $\pm$ 0.00 & 0.84 $\pm$ 0.00 & 0.82 $\pm$ 0.00 \\
\midrule
Baseline 2 & k500\_decline0.01 & 500 & 1.00 $\pm$ 0.00 & 0.97 $\pm$ 0.00 & 0.94 $\pm$ 0.00 & 0.91 $\pm$ 0.00 & 0.88 $\pm$ 0.00 & 0.86 $\pm$ 0.00 & 0.84 $\pm$ 0.00 & 0.82 $\pm$ 0.00 \\
Baseline 2 & k1000\_decline0.01 & 1000 & 1.00 $\pm$ 0.00 & 0.97 $\pm$ 0.00 & 0.94 $\pm$ 0.00 & 0.91 $\pm$ 0.00 & 0.88 $\pm$ 0.00 & 0.86 $\pm$ 0.00 & 0.85 $\pm$ 0.00 & 0.83 $\pm$ 0.00 \\
Baseline 2 & k5000\_decline0.01 & 5000 & 1.00 $\pm$ 0.00 & 0.97 $\pm$ 0.00 & 0.94 $\pm$ 0.00 & 0.91 $\pm$ 0.00 & 0.89 $\pm$ 0.00 & 0.88 $\pm$ 0.00 & 0.86 $\pm$ 0.00 & 0.84 $\pm$ 0.00 \\
Baseline 2 & k10000\_decline0.01 & 10000 & 1.00 $\pm$ 0.00 & 0.97 $\pm$ 0.00 & 0.94 $\pm$ 0.00 & 0.91 $\pm$ 0.00 & 0.89 $\pm$ 0.00 & 0.88 $\pm$ 0.00 & 0.86 $\pm$ 0.00 & 0.84 $\pm$ 0.00 \\
Baseline 2 & k500\_decline0.05 & 500 & 1.00 $\pm$ 0.00 & 0.97 $\pm$ 0.00 & 0.94 $\pm$ 0.00 & 0.91 $\pm$ 0.00 & 0.89 $\pm$ 0.00 & 0.86 $\pm$ 0.00 & 0.84 $\pm$ 0.00 & 0.82 $\pm$ 0.00 \\
Baseline 2 & k1000\_decline0.05 & 1000 & 1.00 $\pm$ 0.00 & 0.97 $\pm$ 0.00 & 0.94 $\pm$ 0.00 & 0.91 $\pm$ 0.00 & 0.88 $\pm$ 0.00 & 0.86 $\pm$ 0.00 & 0.85 $\pm$ 0.00 & 0.83 $\pm$ 0.00 \\
Baseline 2 & k5000\_decline0.05 & 5000 & 1.00 $\pm$ 0.00 & 0.97 $\pm$ 0.00 & 0.94 $\pm$ 0.00 & 0.91 $\pm$ 0.00 & 0.89 $\pm$ 0.00 & 0.88 $\pm$ 0.00 & 0.86 $\pm$ 0.00 & 0.83 $\pm$ 0.00 \\
Baseline 2 & k10000\_decline0.05 & 10000 & 1.00 $\pm$ 0.00 & 0.97 $\pm$ 0.00 & 0.94 $\pm$ 0.00 & 0.91 $\pm$ 0.00 & 0.89 $\pm$ 0.00 & 0.88 $\pm$ 0.00 & 0.86 $\pm$ 0.00 & 0.83 $\pm$ 0.00 \\
\midrule
Baseline 3 & n500\_tol0.01 & 500 & 1.00 $\pm$ 0.00 & 0.97 $\pm$ 0.00 & 0.94 $\pm$ 0.00 & 0.92 $\pm$ 0.00 & 0.89 $\pm$ 0.00 & 0.87 $\pm$ 0.00 & 0.84 $\pm$ 0.00 & 0.82 $\pm$ 0.00 \\
Baseline 3 & n1000\_tol0.01 & 1000 & 1.00 $\pm$ 0.00 & 0.97 $\pm$ 0.00 & 0.94 $\pm$ 0.00 & 0.91 $\pm$ 0.00 & 0.89 $\pm$ 0.00 & 0.87 $\pm$ 0.00 & 0.84 $\pm$ 0.00 & 0.82 $\pm$ 0.00 \\
Baseline 3 & n5000\_tol0.01 & 5000 & 1.00 $\pm$ 0.00 & 0.97 $\pm$ 0.00 & 0.94 $\pm$ 0.00 & 0.91 $\pm$ 0.00 & 0.89 $\pm$ 0.00 & 0.87 $\pm$ 0.00 & 0.84 $\pm$ 0.00 & 0.82 $\pm$ 0.00 \\
Baseline 3 & n10000\_tol0.01 & 10000 & 1.00 $\pm$ 0.00 & 0.97 $\pm$ 0.00 & 0.94 $\pm$ 0.00 & 0.91 $\pm$ 0.00 & 0.89 $\pm$ 0.00 & 0.87 $\pm$ 0.00 & 0.84 $\pm$ 0.00 & 0.82 $\pm$ 0.00 \\
Baseline 3 & n500\_tol0.05 & 500 & 1.00 $\pm$ 0.00 & 0.97 $\pm$ 0.00 & 0.94 $\pm$ 0.00 & 0.92 $\pm$ 0.00 & 0.89 $\pm$ 0.00 & 0.86 $\pm$ 0.00 & 0.84 $\pm$ 0.00 & 0.82 $\pm$ 0.00 \\
Baseline 3 & n1000\_tol0.05 & 1000 & 1.00 $\pm$ 0.00 & 0.97 $\pm$ 0.00 & 0.94 $\pm$ 0.00 & 0.92 $\pm$ 0.00 & 0.89 $\pm$ 0.00 & 0.86 $\pm$ 0.00 & 0.84 $\pm$ 0.00 & 0.81 $\pm$ 0.00 \\
Baseline 3 & n5000\_tol0.05 & 5000 & 1.00 $\pm$ 0.00 & 0.97 $\pm$ 0.00 & 0.94 $\pm$ 0.00 & 0.92 $\pm$ 0.00 & 0.89 $\pm$ 0.00 & 0.87 $\pm$ 0.00 & 0.84 $\pm$ 0.00 & 0.82 $\pm$ 0.00 \\
Baseline 3 & n10000\_tol0.05 & 10000 & 1.00 $\pm$ 0.00 & 0.97 $\pm$ 0.00 & 0.94 $\pm$ 0.00 & 0.91 $\pm$ 0.00 & 0.89 $\pm$ 0.00 & 0.86 $\pm$ 0.00 & 0.84 $\pm$ 0.00 & 0.82 $\pm$ 0.00 \\
\midrule
Baseline 4 & init0\_n500 & 500 & 1.00 $\pm$ 0.00 & 0.92 $\pm$ 0.00 & 0.83 $\pm$ 0.01 & 0.70 $\pm$ 0.02 & 0.17 $\pm$ 0.29 & 0.00 $\pm$ 0.00 & 0.00 $\pm$ 0.00 & 0.00 $\pm$ 0.00 \\
Baseline 4 & init0\_n1000 & 1000 & 1.00 $\pm$ 0.00 & 0.93 $\pm$ 0.01 & 0.86 $\pm$ 0.02 & 0.76 $\pm$ 0.05 & 0.45 $\pm$ 0.39 & 0.33 $\pm$ 0.28 & 0.00 $\pm$ 0.00 & 0.00 $\pm$ 0.00 \\
Baseline 4 & init0\_n5000 & 5000 & 1.00 $\pm$ 0.00 & 0.94 $\pm$ 0.01 & 0.88 $\pm$ 0.02 & 0.80 $\pm$ 0.02 & 0.71 $\pm$ 0.05 & 0.42 $\pm$ 0.37 & 0.19 $\pm$ 0.33 & 0.12 $\pm$ 0.21 \\
Baseline 4 & init0\_n10000 & 10000 & 1.00 $\pm$ 0.00 & 0.94 $\pm$ 0.00 & 0.87 $\pm$ 0.01 & 0.79 $\pm$ 0.02 & 0.70 $\pm$ 0.04 & 0.47 $\pm$ 0.25 & 0.24 $\pm$ 0.21 & 0.00 $\pm$ 0.00 \\
Baseline 4 & init1\_n500 & 500 & 1.00 $\pm$ 0.00 & 0.97 $\pm$ 0.00 & 0.94 $\pm$ 0.00 & 0.92 $\pm$ 0.00 & 0.89 $\pm$ 0.01 & 0.86 $\pm$ 0.01 & 0.83 $\pm$ 0.01 & 0.80 $\pm$ 0.01 \\
Baseline 4 & init1\_n1000 & 1000 & 1.00 $\pm$ 0.00 & 0.97 $\pm$ 0.00 & 0.95 $\pm$ 0.00 & 0.92 $\pm$ 0.00 & 0.89 $\pm$ 0.00 & 0.86 $\pm$ 0.01 & 0.83 $\pm$ 0.01 & 0.81 $\pm$ 0.01 \\
Baseline 4 & init1\_n5000 & 5000 & 1.00 $\pm$ 0.00 & 0.97 $\pm$ 0.00 & 0.95 $\pm$ 0.00 & 0.92 $\pm$ 0.00 & 0.89 $\pm$ 0.01 & 0.87 $\pm$ 0.01 & 0.84 $\pm$ 0.01 & 0.81 $\pm$ 0.01 \\
Baseline 4 & init1\_n10000 & 10000 & 1.00 $\pm$ 0.00 & 0.97 $\pm$ 0.00 & 0.94 $\pm$ 0.00 & 0.92 $\pm$ 0.00 & 0.89 $\pm$ 0.00 & 0.86 $\pm$ 0.00 & 0.84 $\pm$ 0.00 & 0.81 $\pm$ 0.00 \\
\bottomrule
\end{tabular}%
}
\end{table*}

\begin{table*}[t]
\centering
\scriptsize
\caption{Full CRD results over certified inputs on CIFAR-100. Entries are conditional fractions over certified inputs; 1.00 means all certified inputs exceed the threshold.}
\label{tab:app_crd_certified_full_cifar100}
\resizebox{\textwidth}{!}{%
\begin{tabular}{lllcccccccc}
\toprule
Method & Variant & Budget & $t=0$ & $t=0.0125$ & $t=0.025$ & $t=0.0375$ & $t=0.05$ & $t=0.0625$ & $t=0.075$ & $t=0.0875$ \\
\midrule
RRISE & trainhead\_initbase\_n500 & 500 & 1.00 $\pm$ 0.00 & 0.94 $\pm$ 0.00 & 0.87 $\pm$ 0.00 & 0.80 $\pm$ 0.00 & 0.73 $\pm$ 0.00 & 0.64 $\pm$ 0.00 & 0.50 $\pm$ 0.01 & 0.00 $\pm$ 0.00 \\
RRISE & trainhead\_initbase\_n1000 & 1000 & 1.00 $\pm$ 0.00 & 0.94 $\pm$ 0.00 & 0.87 $\pm$ 0.00 & 0.80 $\pm$ 0.00 & 0.73 $\pm$ 0.01 & 0.64 $\pm$ 0.02 & 0.50 $\pm$ 0.05 & 0.11 $\pm$ 0.19 \\
RRISE & trainhead\_initbase\_n5000 & 5000 & 1.00 $\pm$ 0.00 & 0.94 $\pm$ 0.00 & 0.89 $\pm$ 0.00 & 0.82 $\pm$ 0.01 & 0.76 $\pm$ 0.01 & 0.69 $\pm$ 0.02 & 0.59 $\pm$ 0.03 & 0.45 $\pm$ 0.09 \\
RRISE & trainhead\_initbase\_n10000 & 10000 & 1.00 $\pm$ 0.00 & 0.94 $\pm$ 0.00 & 0.88 $\pm$ 0.00 & 0.82 $\pm$ 0.00 & 0.76 $\pm$ 0.00 & 0.70 $\pm$ 0.01 & 0.62 $\pm$ 0.01 & 0.51 $\pm$ 0.03 \\
RRISE & trainall\_initbase\_n500 & 500 & 1.00 $\pm$ 0.00 & 0.94 $\pm$ 0.00 & 0.88 $\pm$ 0.00 & 0.82 $\pm$ 0.00 & 0.77 $\pm$ 0.01 & 0.72 $\pm$ 0.00 & 0.67 $\pm$ 0.00 & 0.63 $\pm$ 0.00 \\
RRISE & trainall\_initbase\_n1000 & 1000 & 1.00 $\pm$ 0.00 & 0.94 $\pm$ 0.00 & 0.88 $\pm$ 0.01 & 0.83 $\pm$ 0.00 & 0.78 $\pm$ 0.00 & 0.73 $\pm$ 0.00 & 0.68 $\pm$ 0.01 & 0.63 $\pm$ 0.01 \\
RRISE & trainall\_initbase\_n5000 & 5000 & 1.00 $\pm$ 0.00 & 0.94 $\pm$ 0.00 & 0.88 $\pm$ 0.00 & 0.82 $\pm$ 0.00 & 0.77 $\pm$ 0.00 & 0.73 $\pm$ 0.00 & 0.68 $\pm$ 0.00 & 0.64 $\pm$ 0.00 \\
RRISE & trainall\_initbase\_n10000 & 10000 & 1.00 $\pm$ 0.00 & 0.94 $\pm$ 0.00 & 0.88 $\pm$ 0.00 & 0.82 $\pm$ 0.00 & 0.77 $\pm$ 0.00 & 0.73 $\pm$ 0.00 & 0.68 $\pm$ 0.00 & 0.64 $\pm$ 0.00 \\
RRISE & trainall\_initrandom\_n500 & 500 & -- & -- & -- & -- & -- & -- & -- & -- \\
RRISE & trainall\_initrandom\_n1000 & 1000 & -- & -- & -- & -- & -- & -- & -- & -- \\
RRISE & trainall\_initrandom\_n5000 & 5000 & -- & -- & -- & -- & -- & -- & -- & -- \\
RRISE & trainall\_initrandom\_n10000 & 10000 & -- & -- & -- & -- & -- & -- & -- & -- \\
\midrule
Baseline 1 & n500 & 500 & 1.00 $\pm$ 0.00 & 0.96 $\pm$ 0.00 & 0.92 $\pm$ 0.00 & 0.88 $\pm$ 0.00 & 0.84 $\pm$ 0.00 & 0.80 $\pm$ 0.00 & 0.77 $\pm$ 0.00 & 0.73 $\pm$ 0.00 \\
Baseline 1 & n1000 & 1000 & 1.00 $\pm$ 0.00 & 0.96 $\pm$ 0.00 & 0.92 $\pm$ 0.00 & 0.88 $\pm$ 0.00 & 0.84 $\pm$ 0.00 & 0.80 $\pm$ 0.00 & 0.77 $\pm$ 0.00 & 0.73 $\pm$ 0.00 \\
Baseline 1 & n5000 & 5000 & 1.00 $\pm$ 0.00 & 0.96 $\pm$ 0.00 & 0.92 $\pm$ 0.00 & 0.88 $\pm$ 0.00 & 0.84 $\pm$ 0.00 & 0.80 $\pm$ 0.00 & 0.77 $\pm$ 0.00 & 0.73 $\pm$ 0.00 \\
Baseline 1 & n10000 & 10000 & 1.00 $\pm$ 0.00 & 0.96 $\pm$ 0.00 & 0.92 $\pm$ 0.00 & 0.88 $\pm$ 0.00 & 0.84 $\pm$ 0.00 & 0.80 $\pm$ 0.00 & 0.77 $\pm$ 0.00 & 0.73 $\pm$ 0.00 \\
\midrule
Baseline 2 & k500\_decline0.01 & 500 & 1.00 $\pm$ 0.00 & 0.96 $\pm$ 0.00 & 0.91 $\pm$ 0.00 & 0.87 $\pm$ 0.00 & 0.83 $\pm$ 0.00 & 0.80 $\pm$ 0.00 & 0.78 $\pm$ 0.00 & 0.74 $\pm$ 0.00 \\
Baseline 2 & k1000\_decline0.01 & 1000 & 1.00 $\pm$ 0.00 & 0.96 $\pm$ 0.00 & 0.91 $\pm$ 0.00 & 0.87 $\pm$ 0.00 & 0.83 $\pm$ 0.00 & 0.81 $\pm$ 0.00 & 0.78 $\pm$ 0.00 & 0.75 $\pm$ 0.00 \\
Baseline 2 & k5000\_decline0.01 & 5000 & 1.00 $\pm$ 0.00 & 0.96 $\pm$ 0.00 & 0.91 $\pm$ 0.00 & 0.86 $\pm$ 0.00 & 0.85 $\pm$ 0.00 & 0.82 $\pm$ 0.00 & 0.79 $\pm$ 0.00 & 0.75 $\pm$ 0.00 \\
Baseline 2 & k10000\_decline0.01 & 10000 & 1.00 $\pm$ 0.00 & 0.96 $\pm$ 0.00 & 0.91 $\pm$ 0.00 & 0.86 $\pm$ 0.00 & 0.85 $\pm$ 0.00 & 0.82 $\pm$ 0.00 & 0.79 $\pm$ 0.00 & 0.75 $\pm$ 0.00 \\
Baseline 2 & k500\_decline0.05 & 500 & 1.00 $\pm$ 0.00 & 0.96 $\pm$ 0.00 & 0.91 $\pm$ 0.00 & 0.87 $\pm$ 0.00 & 0.83 $\pm$ 0.00 & 0.80 $\pm$ 0.00 & 0.77 $\pm$ 0.00 & 0.74 $\pm$ 0.00 \\
Baseline 2 & k1000\_decline0.05 & 1000 & 1.00 $\pm$ 0.00 & 0.96 $\pm$ 0.00 & 0.91 $\pm$ 0.00 & 0.87 $\pm$ 0.00 & 0.83 $\pm$ 0.00 & 0.81 $\pm$ 0.00 & 0.78 $\pm$ 0.00 & 0.75 $\pm$ 0.00 \\
Baseline 2 & k5000\_decline0.05 & 5000 & 1.00 $\pm$ 0.00 & 0.96 $\pm$ 0.00 & 0.91 $\pm$ 0.00 & 0.86 $\pm$ 0.00 & 0.85 $\pm$ 0.00 & 0.82 $\pm$ 0.00 & 0.79 $\pm$ 0.00 & 0.75 $\pm$ 0.00 \\
Baseline 2 & k10000\_decline0.05 & 10000 & 1.00 $\pm$ 0.00 & 0.96 $\pm$ 0.00 & 0.91 $\pm$ 0.00 & 0.87 $\pm$ 0.00 & 0.85 $\pm$ 0.00 & 0.82 $\pm$ 0.00 & 0.79 $\pm$ 0.00 & 0.75 $\pm$ 0.00 \\
\midrule
Baseline 3 & n500\_tol0.01 & 500 & 1.00 $\pm$ 0.00 & 0.96 $\pm$ 0.00 & 0.92 $\pm$ 0.00 & 0.88 $\pm$ 0.00 & 0.84 $\pm$ 0.00 & 0.80 $\pm$ 0.00 & 0.77 $\pm$ 0.00 & 0.73 $\pm$ 0.00 \\
Baseline 3 & n1000\_tol0.01 & 1000 & 1.00 $\pm$ 0.00 & 0.96 $\pm$ 0.00 & 0.92 $\pm$ 0.00 & 0.88 $\pm$ 0.00 & 0.84 $\pm$ 0.00 & 0.80 $\pm$ 0.00 & 0.77 $\pm$ 0.00 & 0.73 $\pm$ 0.00 \\
Baseline 3 & n5000\_tol0.01 & 5000 & 1.00 $\pm$ 0.00 & 0.96 $\pm$ 0.00 & 0.92 $\pm$ 0.00 & 0.88 $\pm$ 0.00 & 0.84 $\pm$ 0.00 & 0.80 $\pm$ 0.00 & 0.77 $\pm$ 0.00 & 0.73 $\pm$ 0.00 \\
Baseline 3 & n10000\_tol0.01 & 10000 & 1.00 $\pm$ 0.00 & 0.96 $\pm$ 0.00 & 0.92 $\pm$ 0.00 & 0.88 $\pm$ 0.00 & 0.84 $\pm$ 0.00 & 0.80 $\pm$ 0.00 & 0.77 $\pm$ 0.00 & 0.73 $\pm$ 0.00 \\
Baseline 3 & n500\_tol0.05 & 500 & 1.00 $\pm$ 0.00 & 0.96 $\pm$ 0.00 & 0.92 $\pm$ 0.00 & 0.88 $\pm$ 0.00 & 0.84 $\pm$ 0.00 & 0.80 $\pm$ 0.00 & 0.77 $\pm$ 0.00 & 0.73 $\pm$ 0.00 \\
Baseline 3 & n1000\_tol0.05 & 1000 & 1.00 $\pm$ 0.00 & 0.96 $\pm$ 0.00 & 0.92 $\pm$ 0.00 & 0.88 $\pm$ 0.00 & 0.84 $\pm$ 0.00 & 0.80 $\pm$ 0.00 & 0.77 $\pm$ 0.00 & 0.73 $\pm$ 0.00 \\
Baseline 3 & n5000\_tol0.05 & 5000 & 1.00 $\pm$ 0.00 & 0.96 $\pm$ 0.00 & 0.92 $\pm$ 0.00 & 0.88 $\pm$ 0.00 & 0.84 $\pm$ 0.00 & 0.80 $\pm$ 0.00 & 0.77 $\pm$ 0.00 & 0.73 $\pm$ 0.00 \\
Baseline 3 & n10000\_tol0.05 & 10000 & 1.00 $\pm$ 0.00 & 0.96 $\pm$ 0.00 & 0.92 $\pm$ 0.00 & 0.88 $\pm$ 0.00 & 0.84 $\pm$ 0.00 & 0.80 $\pm$ 0.00 & 0.77 $\pm$ 0.00 & 0.73 $\pm$ 0.00 \\
\midrule
Baseline 4 & init0\_n500 & 500 & -- & -- & -- & -- & -- & -- & -- & -- \\
Baseline 4 & init0\_n1000 & 1000 & -- & -- & -- & -- & -- & -- & -- & -- \\
Baseline 4 & init0\_n5000 & 5000 & -- & -- & -- & -- & -- & -- & -- & -- \\
Baseline 4 & init0\_n10000 & 10000 & -- & -- & -- & -- & -- & -- & -- & -- \\
Baseline 4 & init1\_n500 & 500 & 1.00 $\pm$ 0.00 & 0.61 $\pm$ 0.03 & 0.00 $\pm$ 0.00 & 0.00 $\pm$ 0.00 & 0.00 $\pm$ 0.00 & 0.00 $\pm$ 0.00 & 0.00 $\pm$ 0.00 & 0.00 $\pm$ 0.00 \\
Baseline 4 & init1\_n1000 & 1000 & 1.00 $\pm$ 0.00 & 0.70 $\pm$ 0.06 & 0.00 $\pm$ 0.00 & 0.00 $\pm$ 0.00 & 0.00 $\pm$ 0.00 & 0.00 $\pm$ 0.00 & 0.00 $\pm$ 0.00 & 0.00 $\pm$ 0.00 \\
Baseline 4 & init1\_n5000 & 5000 & 1.00 $\pm$ 0.00 & 0.72 $\pm$ 0.03 & 0.00 $\pm$ 0.00 & 0.00 $\pm$ 0.00 & 0.00 $\pm$ 0.00 & 0.00 $\pm$ 0.00 & 0.00 $\pm$ 0.00 & 0.00 $\pm$ 0.00 \\
Baseline 4 & init1\_n10000 & 10000 & 1.00 $\pm$ 0.00 & 0.73 $\pm$ 0.03 & 0.00 $\pm$ 0.00 & 0.00 $\pm$ 0.00 & 0.00 $\pm$ 0.00 & 0.00 $\pm$ 0.00 & 0.00 $\pm$ 0.00 & 0.00 $\pm$ 0.00 \\
\bottomrule
\end{tabular}%
}
\end{table*}

\begin{table*}[t]
\centering
\scriptsize
\caption{Full CRD results over certified inputs on Tiny ImageNet. Entries are conditional fractions over certified inputs; 1.00 means all certified inputs exceed the threshold.}
\label{tab:app_crd_certified_full_tiny_imagenet}
\resizebox{\textwidth}{!}{%
\begin{tabular}{lllcccccccc}
\toprule
Method & Variant & Budget & $t=0$ & $t=0.0125$ & $t=0.025$ & $t=0.0375$ & $t=0.05$ & $t=0.0625$ & $t=0.075$ & $t=0.0875$ \\
\midrule
RRISE & trainhead\_initbase\_n500 & 500 & 1.00 $\pm$ 0.00 & 0.95 $\pm$ 0.00 & 0.89 $\pm$ 0.00 & 0.84 $\pm$ 0.00 & 0.78 $\pm$ 0.00 & 0.72 $\pm$ 0.01 & 0.65 $\pm$ 0.01 & 0.56 $\pm$ 0.02 \\
RRISE & trainhead\_initbase\_n1000 & 1000 & 1.00 $\pm$ 0.00 & 0.95 $\pm$ 0.00 & 0.90 $\pm$ 0.00 & 0.84 $\pm$ 0.00 & 0.79 $\pm$ 0.01 & 0.73 $\pm$ 0.01 & 0.67 $\pm$ 0.01 & 0.59 $\pm$ 0.02 \\
RRISE & trainhead\_initbase\_n5000 & 5000 & 1.00 $\pm$ 0.00 & 0.95 $\pm$ 0.00 & 0.90 $\pm$ 0.00 & 0.85 $\pm$ 0.00 & 0.80 $\pm$ 0.00 & 0.75 $\pm$ 0.00 & 0.69 $\pm$ 0.01 & 0.63 $\pm$ 0.01 \\
RRISE & trainhead\_initbase\_n10000 & 10000 & 1.00 $\pm$ 0.00 & 0.95 $\pm$ 0.00 & 0.90 $\pm$ 0.00 & 0.85 $\pm$ 0.00 & 0.80 $\pm$ 0.01 & 0.74 $\pm$ 0.01 & 0.69 $\pm$ 0.01 & 0.63 $\pm$ 0.02 \\
RRISE & trainall\_initbase\_n500 & 500 & 1.00 $\pm$ 0.00 & 0.92 $\pm$ 0.00 & 0.85 $\pm$ 0.00 & 0.79 $\pm$ 0.00 & 0.72 $\pm$ 0.01 & 0.66 $\pm$ 0.01 & 0.59 $\pm$ 0.01 & 0.53 $\pm$ 0.02 \\
RRISE & trainall\_initbase\_n1000 & 1000 & 1.00 $\pm$ 0.00 & 0.92 $\pm$ 0.00 & 0.85 $\pm$ 0.01 & 0.78 $\pm$ 0.01 & 0.72 $\pm$ 0.01 & 0.65 $\pm$ 0.02 & 0.59 $\pm$ 0.03 & 0.53 $\pm$ 0.04 \\
RRISE & trainall\_initbase\_n5000 & 5000 & 1.00 $\pm$ 0.00 & 0.92 $\pm$ 0.00 & 0.85 $\pm$ 0.00 & 0.79 $\pm$ 0.00 & 0.73 $\pm$ 0.00 & 0.67 $\pm$ 0.00 & 0.61 $\pm$ 0.00 & 0.56 $\pm$ 0.01 \\
RRISE & trainall\_initbase\_n10000 & 10000 & 1.00 $\pm$ 0.00 & 0.92 $\pm$ 0.00 & 0.85 $\pm$ 0.00 & 0.79 $\pm$ 0.00 & 0.73 $\pm$ 0.00 & 0.67 $\pm$ 0.00 & 0.61 $\pm$ 0.00 & 0.56 $\pm$ 0.01 \\
RRISE & trainall\_initrandom\_n500 & 500 & -- & -- & -- & -- & -- & -- & -- & -- \\
RRISE & trainall\_initrandom\_n1000 & 1000 & -- & -- & -- & -- & -- & -- & -- & -- \\
RRISE & trainall\_initrandom\_n5000 & 5000 & -- & -- & -- & -- & -- & -- & -- & -- \\
RRISE & trainall\_initrandom\_n10000 & 10000 & -- & -- & -- & -- & -- & -- & -- & -- \\
\midrule
Baseline 1 & n500 & 500 & 1.00 $\pm$ 0.00 & 0.95 $\pm$ 0.00 & 0.91 $\pm$ 0.00 & 0.87 $\pm$ 0.00 & 0.83 $\pm$ 0.00 & 0.79 $\pm$ 0.00 & 0.76 $\pm$ 0.00 & 0.72 $\pm$ 0.00 \\
Baseline 1 & n1000 & 1000 & 1.00 $\pm$ 0.00 & 0.95 $\pm$ 0.00 & 0.91 $\pm$ 0.00 & 0.87 $\pm$ 0.00 & 0.83 $\pm$ 0.00 & 0.79 $\pm$ 0.00 & 0.76 $\pm$ 0.00 & 0.72 $\pm$ 0.00 \\
Baseline 1 & n5000 & 5000 & 1.00 $\pm$ 0.00 & 0.95 $\pm$ 0.00 & 0.91 $\pm$ 0.00 & 0.87 $\pm$ 0.00 & 0.83 $\pm$ 0.00 & 0.79 $\pm$ 0.00 & 0.76 $\pm$ 0.00 & 0.72 $\pm$ 0.00 \\
Baseline 1 & n10000 & 10000 & 1.00 $\pm$ 0.00 & 0.95 $\pm$ 0.00 & 0.91 $\pm$ 0.00 & 0.87 $\pm$ 0.00 & 0.83 $\pm$ 0.00 & 0.79 $\pm$ 0.00 & 0.76 $\pm$ 0.00 & 0.72 $\pm$ 0.00 \\
\midrule
Baseline 2 & k500\_decline0.01 & 500 & 1.00 $\pm$ 0.00 & 0.95 $\pm$ 0.00 & 0.91 $\pm$ 0.00 & 0.86 $\pm$ 0.00 & 0.82 $\pm$ 0.00 & 0.79 $\pm$ 0.00 & 0.77 $\pm$ 0.00 & 0.73 $\pm$ 0.00 \\
Baseline 2 & k1000\_decline0.01 & 1000 & 1.00 $\pm$ 0.00 & 0.95 $\pm$ 0.00 & 0.91 $\pm$ 0.00 & 0.86 $\pm$ 0.00 & 0.82 $\pm$ 0.00 & 0.80 $\pm$ 0.00 & 0.77 $\pm$ 0.00 & 0.74 $\pm$ 0.00 \\
Baseline 2 & k5000\_decline0.01 & 5000 & 1.00 $\pm$ 0.00 & 0.95 $\pm$ 0.00 & 0.91 $\pm$ 0.00 & 0.86 $\pm$ 0.00 & 0.84 $\pm$ 0.00 & 0.81 $\pm$ 0.00 & 0.78 $\pm$ 0.00 & 0.74 $\pm$ 0.00 \\
Baseline 2 & k10000\_decline0.01 & 10000 & 1.00 $\pm$ 0.00 & 0.95 $\pm$ 0.00 & 0.91 $\pm$ 0.00 & 0.86 $\pm$ 0.00 & 0.84 $\pm$ 0.00 & 0.81 $\pm$ 0.00 & 0.78 $\pm$ 0.00 & 0.74 $\pm$ 0.00 \\
Baseline 2 & k500\_decline0.05 & 500 & 1.00 $\pm$ 0.00 & 0.95 $\pm$ 0.00 & 0.91 $\pm$ 0.00 & 0.86 $\pm$ 0.00 & 0.82 $\pm$ 0.00 & 0.79 $\pm$ 0.00 & 0.77 $\pm$ 0.00 & 0.73 $\pm$ 0.00 \\
Baseline 2 & k1000\_decline0.05 & 1000 & 1.00 $\pm$ 0.00 & 0.95 $\pm$ 0.00 & 0.91 $\pm$ 0.00 & 0.86 $\pm$ 0.00 & 0.82 $\pm$ 0.00 & 0.80 $\pm$ 0.00 & 0.77 $\pm$ 0.00 & 0.74 $\pm$ 0.00 \\
Baseline 2 & k5000\_decline0.05 & 5000 & 1.00 $\pm$ 0.00 & 0.95 $\pm$ 0.00 & 0.91 $\pm$ 0.00 & 0.86 $\pm$ 0.00 & 0.84 $\pm$ 0.00 & 0.81 $\pm$ 0.00 & 0.78 $\pm$ 0.00 & 0.74 $\pm$ 0.00 \\
Baseline 2 & k10000\_decline0.05 & 10000 & 1.00 $\pm$ 0.00 & 0.96 $\pm$ 0.00 & 0.91 $\pm$ 0.00 & 0.86 $\pm$ 0.00 & 0.84 $\pm$ 0.00 & 0.81 $\pm$ 0.00 & 0.78 $\pm$ 0.00 & 0.74 $\pm$ 0.00 \\
\midrule
Baseline 3 & n500\_tol0.01 & 500 & 1.00 $\pm$ 0.00 & 0.95 $\pm$ 0.00 & 0.91 $\pm$ 0.00 & 0.87 $\pm$ 0.00 & 0.83 $\pm$ 0.00 & 0.79 $\pm$ 0.00 & 0.75 $\pm$ 0.00 & 0.72 $\pm$ 0.00 \\
Baseline 3 & n1000\_tol0.01 & 1000 & 1.00 $\pm$ 0.00 & 0.95 $\pm$ 0.00 & 0.91 $\pm$ 0.00 & 0.87 $\pm$ 0.00 & 0.83 $\pm$ 0.00 & 0.79 $\pm$ 0.00 & 0.76 $\pm$ 0.00 & 0.72 $\pm$ 0.00 \\
Baseline 3 & n5000\_tol0.01 & 5000 & 1.00 $\pm$ 0.00 & 0.95 $\pm$ 0.00 & 0.91 $\pm$ 0.00 & 0.87 $\pm$ 0.00 & 0.83 $\pm$ 0.00 & 0.79 $\pm$ 0.00 & 0.76 $\pm$ 0.00 & 0.72 $\pm$ 0.00 \\
Baseline 3 & n10000\_tol0.01 & 10000 & 1.00 $\pm$ 0.00 & 0.95 $\pm$ 0.00 & 0.91 $\pm$ 0.00 & 0.87 $\pm$ 0.00 & 0.83 $\pm$ 0.00 & 0.79 $\pm$ 0.00 & 0.76 $\pm$ 0.00 & 0.72 $\pm$ 0.00 \\
Baseline 3 & n500\_tol0.05 & 500 & 1.00 $\pm$ 0.00 & 0.95 $\pm$ 0.00 & 0.91 $\pm$ 0.00 & 0.87 $\pm$ 0.00 & 0.83 $\pm$ 0.00 & 0.79 $\pm$ 0.00 & 0.75 $\pm$ 0.00 & 0.72 $\pm$ 0.00 \\
Baseline 3 & n1000\_tol0.05 & 1000 & 1.00 $\pm$ 0.00 & 0.95 $\pm$ 0.00 & 0.91 $\pm$ 0.00 & 0.87 $\pm$ 0.00 & 0.83 $\pm$ 0.00 & 0.79 $\pm$ 0.00 & 0.75 $\pm$ 0.00 & 0.72 $\pm$ 0.00 \\
Baseline 3 & n5000\_tol0.05 & 5000 & 1.00 $\pm$ 0.00 & 0.95 $\pm$ 0.00 & 0.91 $\pm$ 0.00 & 0.87 $\pm$ 0.00 & 0.83 $\pm$ 0.00 & 0.79 $\pm$ 0.00 & 0.75 $\pm$ 0.00 & 0.72 $\pm$ 0.00 \\
Baseline 3 & n10000\_tol0.05 & 10000 & 1.00 $\pm$ 0.00 & 0.95 $\pm$ 0.00 & 0.91 $\pm$ 0.00 & 0.87 $\pm$ 0.00 & 0.82 $\pm$ 0.00 & 0.79 $\pm$ 0.00 & 0.75 $\pm$ 0.00 & 0.72 $\pm$ 0.00 \\
\midrule
Baseline 4 & init0\_n500 & 500 & -- & -- & -- & -- & -- & -- & -- & -- \\
Baseline 4 & init0\_n1000 & 1000 & -- & -- & -- & -- & -- & -- & -- & -- \\
Baseline 4 & init0\_n5000 & 5000 & -- & -- & -- & -- & -- & -- & -- & -- \\
Baseline 4 & init0\_n10000 & 10000 & -- & -- & -- & -- & -- & -- & -- & -- \\
Baseline 4 & init1\_n500 & 500 & 1.00 $\pm$ 0.00 & 0.91 $\pm$ 0.00 & 0.81 $\pm$ 0.01 & 0.69 $\pm$ 0.03 & 0.52 $\pm$ 0.07 & 0.12 $\pm$ 0.21 & 0.00 $\pm$ 0.00 & 0.00 $\pm$ 0.00 \\
Baseline 4 & init1\_n1000 & 1000 & 1.00 $\pm$ 0.00 & 0.92 $\pm$ 0.00 & 0.83 $\pm$ 0.00 & 0.74 $\pm$ 0.00 & 0.64 $\pm$ 0.01 & 0.50 $\pm$ 0.03 & 0.11 $\pm$ 0.19 & 0.00 $\pm$ 0.00 \\
Baseline 4 & init1\_n5000 & 5000 & 1.00 $\pm$ 0.00 & 0.92 $\pm$ 0.00 & 0.84 $\pm$ 0.00 & 0.75 $\pm$ 0.01 & 0.65 $\pm$ 0.01 & 0.53 $\pm$ 0.01 & 0.31 $\pm$ 0.02 & 0.00 $\pm$ 0.00 \\
Baseline 4 & init1\_n10000 & 10000 & 1.00 $\pm$ 0.00 & 0.92 $\pm$ 0.01 & 0.83 $\pm$ 0.01 & 0.74 $\pm$ 0.02 & 0.63 $\pm$ 0.04 & 0.47 $\pm$ 0.09 & 0.14 $\pm$ 0.12 & 0.00 $\pm$ 0.00 \\
\bottomrule
\end{tabular}%
}
\end{table*}

\clearpage
\newpage
\section{Theoretical Properties of the Offline \ours Objective}
\label{app:rrise_theoretical_strength}

This appendix formalizes the statistical role of the offline MC targets used by \ours. The key fact is simple but important: for any fixed surrogate parameter vector $\btheta$, the soft-label cross-entropy computed with the empirical MC target is an unbiased estimator of the ideal cross-entropy to the true smoothed distribution. The finite MC budget $n$ controls the variance of this supervision. The learned surrogate can still contain finite-target, approximation, and optimization error, which is why calibrated lower-bound correction is applied before reporting certified radii.

\subsection{Offline MC Targets}
\label{app:rrise_mc_targets}

For a fixed input vector $\x$, smoothing level $\sigma$, and base classifier $f$, define the true smoothed class distribution
\begin{equation}
 p(k\mid\x,\sigma)
 =
 \Prob_{\bepsilon\sim\Normal(\bzero,\sigma^2\bI)}
 \!\left[f(\x+\bepsilon)=k\right].
\end{equation}
\ours constructs the empirical MC target
\begin{equation}
 \widehat p_n(k\mid\x,\sigma)
 =
 \frac{1}{n}\sum_{j=1}^n
 \bone\!\left[f(\x+\bepsilon_j)=k\right],
 \qquad
 \bepsilon_j\stackrel{\mathrm{i.i.d.}}{\sim}\Normal(\bzero,\sigma^2\bI).
\end{equation}
For every class $k$,
\begin{equation}
 \Ex\!\left[\widehat p_n(k\mid\x,\sigma)\right]
 =
 p(k\mid\x,\sigma),
 \qquad
 \operatorname{Var}\!\left[\widehat p_n(k\mid\x,\sigma)\right]
 =
 \frac{p(k\mid\x,\sigma)(1-p(k\mid\x,\sigma))}{n}.
\end{equation}
Equivalently,
\begin{equation}
 n\widehat p_n(\cdot\mid\x,\sigma)
 \sim
 \operatorname{Multinomial}\!\left(n,p(\cdot\mid\x,\sigma)\right),
\end{equation}
and
\begin{equation}
 \operatorname{Cov}\!\left[\widehat p_n(\cdot\mid\x,\sigma)\right]
 =
 \frac{1}{n}\left(\diag(p)-pp^\top\right),
\end{equation}
where $p=p(\cdot\mid\x,\sigma)$. Thus larger $n$ gives lower-variance supervision, while smaller $n$ reduces offline cost at the price of noisier targets.

\subsection{Objective Alignment}
\label{app:rrise_objective_alignment}

The ideal population objective for learning the smoothed distribution is
\begin{equation}
 \mathcal L^\star(\btheta)
 =
 \Ex_{\x}\!\left[
 -\sum_{k=1}^K p(k\mid\x,\sigma)\log q_{\btheta}(k\mid\x)
 \right].
\end{equation}
The empirical-target objective is
\begin{equation}
 \widehat{\mathcal L}_n(\btheta)
 =
 \Ex_{\x}\!\left[
 -\sum_{k=1}^K \widehat p_n(k\mid\x,\sigma)\log q_{\btheta}(k\mid\x)
 \right].
\end{equation}
For fixed $\btheta$, the objective is linear in $\widehat p_n$. Therefore,
\begin{align}
 \Ex_{\widehat p_n}\!\left[\widehat{\mathcal L}_n(\btheta)\right]
 & =
 \Ex_{\x}\!\left[
 -\sum_{k=1}^K \Ex\!\left[\widehat p_n(k\mid\x,\sigma)\right]
 \log q_{\btheta}(k\mid\x)
 \right] \\
 & =
 \Ex_{\x}\!\left[
 -\sum_{k=1}^K p(k\mid\x,\sigma)\log q_{\btheta}(k\mid\x)
 \right]
 =
 \mathcal L^\star(\btheta).
\end{align}
Under regularity conditions that permit exchanging differentiation and expectation,
\begin{equation}
 \Ex_{\widehat p_n}\!\left[\nabla_{\btheta}\widehat{\mathcal L}_n(\btheta)\right]
 =
 \nabla_{\btheta}\mathcal L^\star(\btheta).
\end{equation}
Thus the finite-$n$ loss and gradient are unbiased at any fixed parameter vector. The optimized parameter vector is a nonlinear function of the finite-$n$ targets, so the final learned model can still reflect target noise, model approximation, and optimization effects.

\subsection{Population Optimum}
\label{app:rrise_population_optimum}

For a fixed input $\x$, the ideal cross-entropy is
\begin{equation}
 \mathcal L_{\x}(q)
 =
 -\sum_{k=1}^K p(k\mid\x,\sigma)\log q(k\mid\x).
\end{equation}
It decomposes as
\begin{equation}
 \mathcal L_{\x}(q)
 =
 H(p(\cdot\mid\x,\sigma))
 +
 \KL\!\left(p(\cdot\mid\x,\sigma)\middle\|q(\cdot\mid\x)\right).
\end{equation}
The entropy term does not depend on $q$, and the KL divergence is minimized exactly when $q(\cdot\mid\x)=p(\cdot\mid\x,\sigma)$. Therefore, with sufficient model capacity and successful optimization, the population objective targets the same smoothed distribution used by randomized smoothing.

\subsection{Calibration and Amortization}
\label{app:rrise_calibration_strength}

The surrogate output $q_{\btheta}(\cdot\mid\x)$ is a fast estimate of the smoothed distribution, not itself a certificate. Calibration converts it into the lower-bound quantity
\begin{equation}
 \ptildeA(\x)=\min\{1,\max\{0,\qA(\x)-\delta\}\},
 \qquad \delta\ge0.
\end{equation}
Therefore $\ptildeA(\x)\le\qA(\x)$, and calibration can only decrease the top probability used for radius certification. This separates the learning problem from the certification problem: learning provides a one-forward-pass approximation, while calibration controls overestimation before a radius is reported.

The computational benefit is amortization. Fixed-budget RS certifies $m$ inputs with $n$ samples at cost
\begin{equation}
 C_{\mathrm{MC}}(m)=mn
\end{equation}
base-model forward passes. \ours pays the target-construction cost once,
\begin{equation}
 C_{\mathrm{target}}\approx |\mathcal D_{\mathrm{train}}|n,
\end{equation}
then certifies $m$ inputs at cost
\begin{equation}
 C_{\mathrm{RRISE}}(m)=C_{\mathrm{target}}+C_{\mathrm{train}}+m.
\end{equation}
Thus \ours is most useful when many future certificates are required under the same base classifier and smoothing distribution.

\clearpage
\newpage
\section{Comparison Between \ours and Baseline~4}
\label{app:rrise_vs_baseline4}

Baseline~4~\citep{bhardwaj2024accelerated} is the closest prior offline-surrogate method for accelerating randomized smoothing. Like \ours, it constructs an offline dataset of MC class-count targets from a frozen base classifier and trains a surrogate to predict the smoothed class distribution. The two methods differ along the following components: (i) the training objective --- \ours uses soft-label cross-entropy, whereas Baseline~4 uses Jensen--Shannon divergence; (ii) the surrogate architecture and initialization --- \ours fine-tunes the base classifier itself, whereas Baseline~4 trains a separate ResNet surrogate from random initialization regardless of the base architecture; and (iii) the certification post-processing --- \ours applies a one-time conformal calibration before reporting any radius, whereas Baseline~4, as originally proposed, has no calibration step. This appendix discusses each in turn, with particular attention to the gradient consequences of the objective choice.

\subsection{Shared Offline Target}
\label{app:rrise_b4_common_target}

Both methods use the same offline target. For each training input $\x_i$, draw $\bepsilon_{i,j}\stackrel{\mathrm{i.i.d.}}{\sim}\Normal(\bzero,\sigma^2\bI)$ for $j=1,\dots,n$ and form the empirical class-count vector
\begin{equation}
 \widehat p_i(k)
 \;\triangleq\;
 \frac{1}{n}\sum_{j=1}^n
 \bone\left[f(\x_i+\bepsilon_{i,j})=k\right],
 \qquad k=1,\dots,K.
\end{equation}
Writing $p_i(k)\triangleq p(k\mid\x_i,\sigma)$ for the true smoothed probability from~\eqref{eq:smoothed_prob}, the target satisfies $\Ex[\widehat p_i(k)]=p_i(k)$ but is itself a noisy estimator with finite-$n$ deviation from $p_i$. Both methods inherit the same $\widehat p_i$ and therefore the same target-construction bias relative to the population $p_i$. The question this appendix addresses is not whether $\widehat p_i$ is noisy --- it is, equally, for both methods --- but whether each method's gradient is an unbiased estimator of the gradient of its own ideal objective, taken with respect to $\widehat p_i$.

\subsection{\ours: Unbiased Gradients with Respect to \texorpdfstring{$\widehat p_i$}{p-hat-i}}
\label{app:rrise_ce_vs_b4}

\ours minimizes the soft-label cross-entropy
\begin{equation}
 \widehat{\mathcal L}_{\mathrm{RRISE}}(\btheta)
 \;=\;
 -\frac{1}{|\mathcal B|}
 \sum_{i\in\mathcal B}\sum_{k=1}^K
 \widehat p_i(k)\,\log q_{\btheta}(\x_i)_k.
\end{equation}
Cross-entropy is \emph{linear} in its first argument. Consequently, for any fixed $\btheta$ the gradient
\begin{equation}
 \nabla_{\btheta}\widehat{\mathcal L}_{\mathrm{RRISE}}(\btheta)
 \;=\;
 -\frac{1}{|\mathcal B|}
 \sum_{i\in\mathcal B}\sum_{k=1}^K
 \widehat p_i(k)\,\nabla_{\btheta}\log q_{\btheta}(\x_i)_k
\end{equation}
is itself linear in $\widehat p_i$. Linearity is the essential property: at fixed $\btheta$, the finite-$n$ gradient is the exact gradient of the cross-entropy objective evaluated at the realized target $\widehat p_i$, with no curvature correction. The MC budget $n$ enters only through the variance of $\widehat p_i$ and hence the variance of the gradient; it does not introduce any nonlinear distortion between $\widehat p_i$ and the gradient direction.

\subsection{Baseline~4: Biased Gradients with Respect to \texorpdfstring{$\widehat p_i$}{p-hat-i}}
\label{app:b4_js_objective}

Baseline~4 minimizes
\begin{equation}
 \widehat{\mathcal L}_{\mathrm{B4}}(\btheta)
 \;=\;
 \frac{1}{|\mathcal B|}
 \sum_{i\in\mathcal B}
 \JS\!\left(\widehat p_i\,\middle\|\,q_{\btheta}(\x_i)\right).
\end{equation}
Jensen--Shannon divergence is \emph{nonlinear} in its first argument. To make the gradient consequence precise, fix $\btheta$ and let $\widetilde p_i$ denote a second i.i.d.\ MC realization with the same $n$, drawn independently from the same distribution as $\widehat p_i$. Then $\Ex[\widetilde p_i]=\Ex[\widehat p_i]=\bar p_i$, but in general
\begin{equation}
 \Ex_{\widetilde p_i}\!\left[\nabla_{\btheta}\JS(\widetilde p_i\,\|\,q_{\btheta}(\x_i))\right]
 \;\neq\;
 \nabla_{\btheta}\JS\!\left(\bar p_i\,\big\|\,q_{\btheta}(\x_i)\right).
\end{equation}
That is, even within the family of finite-$n$ targets the method itself uses, the expected gradient at $\btheta$ is offset from the gradient evaluated at the mean target by a curvature-dependent term. A second-order Taylor expansion of any smooth divergence $D(\cdot,q)$ around $\bar p_i$ formalizes this: writing $\widetilde p_i=\bar p_i+\x_i$ with $\Ex[\x_i]=\bzero$ and $\mathrm{Cov}(\x_i)=\tfrac{1}{n}\!\left(\diag(\bar p_i)-\bar p_i\bar p_i^\top\right)$, differentiation of the second-order term yields an $O(1/n)$ gradient offset whose magnitude scales with $\nabla^2_{pp}D$. For cross-entropy, $\nabla^2_{pp}D\equiv\bzero$ and the offset vanishes identically; for Jensen--Shannon divergence, $\nabla^2_{pp}D$ is generally nonzero on the simplex and the offset persists at every $\btheta$ until $n\to\infty$.

\subsection{Surrogate Architecture and Initialization}
\label{app:rrise_b4_architecture}

The two methods also differ in how the surrogate is constructed. \ours fine-tunes the base classifier directly: $q_{\btheta}$ inherits the base architecture --- MLP-Mixer-Tiny, ResNet-18, EfficientNet-B0, or ViT-Tiny in our experiments --- and is initialized from base-classifier weights, with only the estimator head trained by default (Section~\ref{sec:training}, with end-to-end and random-initialization variants ablated in Appendix~\ref{app:ablation_list}). Baseline~4, in contrast, fixes a ResNet surrogate trained from random initialization regardless of the base classifier, following the protocol in~\citet{bhardwaj2024accelerated}. Two practical consequences follow. First, \ours reuses representations the base classifier has already learned to be approximately invariant to Gaussian noise (since base classifiers are trained with noise augmentation), giving the surrogate a strong initialization and reducing the offline training cost. Second, the base architecture and surrogate architecture are guaranteed to match, so any architectural prior that helps the base classifier on a given dataset --- patch-mixing on FashionMNIST, attention on Tiny ImageNet --- transfers to the surrogate at no additional design cost.

\subsection{Calibration}
\label{app:rrise_b4_certification_treatment}

The third difference concerns how each method translates surrogate outputs into certified radii. Baseline~4, as originally proposed, has no calibration step and treats surrogate probabilities directly as smoothed probabilities; this is unsafe for certification, because a point estimate that overestimates the smoothed top probability will inflate the radius. \ours instead applies the conformal calibration of Section~\ref{sec:calibration}, which yields a finite-sample lower bound on $p_A(\x)$ before any radius is reported. For a fair empirical comparison, we apply the same calibration protocol to Baseline~4, so that any radius gap reflects the surrogates themselves under a common post-processing rule rather than the absence of calibration in the original method.

\subsection{Summary}
\label{app:rrise_b4_summary}

\begin{table}[h]
\centering
\small
\caption{Comparison between \ours and Baseline~4 under the shared offline-target construction. The three groups correspond to the three axes of difference: training objective, surrogate construction, and calibration.}
\label{tab:rrise_vs_baseline4}
\begin{tabular}{lll}
\toprule
Property & \ours & Baseline~4 \\
\midrule
\multicolumn{3}{l}{\textit{Shared offline target}}\\
Offline MC targets & Yes & Yes \\
Target distribution & Normalized MC class counts & Normalized MC class counts \\
Test-time MC sampling & No & No \\
\midrule
\multicolumn{3}{l}{\textit{Training objective}}\\
Loss function & Soft-label cross-entropy & Jensen--Shannon divergence \\
Linear in target $\widehat p_i$ & Yes & No \\
Gradient unbiased w.r.t.\ $\widehat p_i$ at fixed $\btheta$ & Yes & Generally no \\
\midrule
\multicolumn{3}{l}{\textit{Surrogate architecture and initialization}}\\
Architecture & Matches base classifier & Fixed ResNet \\
Initialization & Base-classifier weights & Random \\
\midrule
\multicolumn{3}{l}{\textit{Calibration}}\\
In original method & Yes & No \\
In our evaluation & Yes & Yes (added for fair comparison) \\
\bottomrule
\end{tabular}
\end{table}
\clearpage
\newpage
\section{Proofs}
\label{app:calibration_proof}

This appendix proves Proposition~\ref{prop:calibration} and Corollary~\ref{cor:argmax_agreement}. The proof separates three ingredients: exchangeability of conformal residuals, the one-sided Clopper--Pearson guarantee for the calibration sampling step, and the randomized-smoothing certificate.

Throughout, the base classifier $f$, surrogate $q_{\btheta}$, smoothing level $\sigma$, calibration sample budget $n$, and failure levels $\beta,\gamma$ are fixed before calibration and test inputs are drawn. Calibration inputs $\x_1^{\mathrm{cal}},\dots,\x_M^{\mathrm{cal}}$ and the test input $\x$ are i.i.d. from the deployment distribution $\mathcal D$.

\subsection{Exchangeability of Calibration Residuals}

\begin{lemma}[Exchangeability]
\label{lem:exchangeability}
Let each calibration or test input be paired with independent Gaussian perturbations $\bepsilon_{1:n}\stackrel{\mathrm{i.i.d.}}{\sim}\Normal(\bzero,\sigma^2\bI)$. Define the residual at any input $\x'$ by
\begin{equation}
 r(\x')
 =
 \qA(\x')-\underline p(\x'),
\end{equation}
where $\underline p(\x')$ is the Clopper--Pearson lower bound, computed from the perturbations at $\x'$, for the probability of the surrogate-predicted class $\hat g(\x')$. Then the residuals of the $M$ calibration inputs and the residual of the independent test input are exchangeable.
\end{lemma}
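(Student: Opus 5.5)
The plan is to show that the $M+1$ residuals are i.i.d., which is stronger than exchangeability. The key observation is that each residual is one fixed measurable map applied to an i.i.d. pair (input, its own noise draws).

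\textbf{Step 1: write the residual as a fixed map.} For any $\x'$ and noise block $\bepsilon_{1:n}$, set
\[
\Psi(\x',\bepsilon_{1:n}) \;=\; \qA(\x') - \mathrm{CP}_{\beta}\!\Big(\textstyle\sum_{j=1}^n \bone\{f(\x'+\bepsilon_j)=\hat g(\x')\},\, n\Big),
\]
where $\mathrm{CP}_\beta(k,n)$ is the one-sided Clopper--Pearson lower bound at confidence $1-\beta$. This map is deterministic and fixed before any calibration or test data are drawn. The reason is that $f$, $q_{\btheta}$ (hence $\hat g$ and $\qA$), $\sigma$, $n$ and $\beta$ are all fixed in advance, by the standing assumptions of this appendix. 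In particular, the surrogate was trained on data disjoint from the calibration and test inputs.

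Measurability holds because each ingredient is measurable:
\begin{itemize}
\item $\hat g$ is an argmax over finitely many measurable coordinates, with a fixed tie-breaking rule;
\item $f$ is measurable;
\item $\mathrm{CP}_\beta(\cdot,n)$ is a function on the finite set $\{0,\dots,n\}$.
\end{itemize}

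\textbf{Step 2: the pairs are i.i.d.} Set $Z_i=(\x_i^{\mathrm{cal}},\bepsilon^{(i)}_{1:n})$ for $i\le M$ and $Z_{M+1}=(\x,\bepsilon^{(M+1)}_{1:n})$. The inputs are i.i.d. from $\mathcal D$, and each point's noise block is drawn i.i.d. Gaussian, independently of all inputs and of the other blocks. Hence the $Z_i$ are i.i.d. with common law $\mathcal D\otimes\Normal(\bzero,\sigma^2\bI)^{\otimes n}$.

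\textbf{Step 3: conclude.} Then $r_i=\Psi(Z_i)$ are i.i.d., and i.i.d. variables are exchangeable: for any permutation $\pi$, the joint law of $(r_{\pi(1)},\dots,r_{\pi(M+1)})$ is the product law, which does not depend on $\pi$.

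\textbf{Main obstacle.} The only real work is verifying that nothing in $\Psi$ depends on the realized calibration set. The procedure computes $\delta$ from the calibration residuals, but $\delta$ enters only afterwards, in the conformal step, and not in any residual itself. So the residuals stay i.i.d.

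There is one subtle case. If noise were reused across points, or the surrogate were tuned on calibration data, exchangeability could fail. I will make explicit that fresh, independent noise is drawn per point, as in Proposition~\ref{prop:calibration}. The test point's noise is notional: it is used in the analysis only, never at deployment.
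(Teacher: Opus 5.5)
Your proposal is correct and follows the paper's proof essentially verbatim: the paper likewise writes each residual as the same fixed measurable map (evaluate $q_{\btheta}$ on the clean input, count noisy predictions agreeing with $\hat g$, take the Clopper--Pearson bound, subtract it from $\qA$) applied to i.i.d.\ input--noise pairs, and concludes exchangeability. Your additional checks (measurability, $\delta$ not entering any residual, fresh independent noise per point) and your i.i.d.\ conclusion, which holds conditionally on the fixed surrogate, only make explicit what the paper leaves implicit.
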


\begin{proof}
Each residual is obtained by applying the same deterministic mapping to an input--noise pair: evaluate $q_{\btheta}$ on the clean input, count noisy base-model predictions agreeing with the resulting surrogate class, compute the Clopper--Pearson lower bound, and subtract it from $\qA$. The input--noise pairs are i.i.d., hence exchangeable. Applying the same measurable mapping to each coordinate preserves exchangeability.
\end{proof}

\subsection{Proof of Proposition~\ref{prop:calibration}}

\calibrationprop*

\begin{proof}
For analysis only, imagine drawing $n$ fresh perturbations at the independent test input $\x$ and computing the corresponding one-sided Clopper--Pearson lower bound $\underline p_{\mathrm{test}}$ for the probability of the surrogate-predicted class $\hat g(\x)$. Since the agreement indicators are Bernoulli with parameter $p(\hat g(\x)\mid\x,\sigma)$, the Clopper--Pearson construction gives
\begin{equation}
 \Prob\!\left[
 \underline p_{\mathrm{test}}
 \le
 p(\hat g(\x)\mid\x,\sigma)
 \right]
 \ge 1-\beta.
 \label{eq:cp_bound}
\end{equation}
Let
\begin{equation}
 r_{\mathrm{test}}
 =
 \qA(\x)-\underline p_{\mathrm{test}}.
\end{equation}
By Lemma~\ref{lem:exchangeability}, the calibration residuals and $r_{\mathrm{test}}$ are exchangeable. If $r_{(k)}$ is the $k$-th order statistic of the calibration residuals with $k=\lceil(M+1)(1-\gamma)\rceil$, split conformal prediction yields
\begin{equation}
 \Prob\!\left[r_{\mathrm{test}}\le r_{(k)}\right]
 \ge 1-\gamma.
\end{equation}
The deployed offset is $\delta=\max\{0,r_{(k)}\}$, so $\delta\ge r_{(k)}$ and therefore
\begin{equation}
 \Prob\!\left[
 \qA(\x)-\delta
 \le
 \underline p_{\mathrm{test}}
 \right]
 \ge 1-\gamma.
 \label{eq:conformal_bound}
\end{equation}
For every input $\x$, the probability of the surrogate-chosen class is bounded by the smoothed top-class probability:
\begin{equation}
 p(\hat g(\x)\mid\x,\sigma)
 \le
 \max_c p(c\mid\x,\sigma)
 =
 \pA(\x;\sigma).
 \label{eq:top_probability_dominates}
\end{equation}
Combining~\eqref{eq:cp_bound} and~\eqref{eq:conformal_bound} by a union bound gives an event of probability at least $1-\beta-\gamma$ on which
\begin{equation}
 \qA(\x)-\delta
 \le
 \underline p_{\mathrm{test}}
 \le
 p(\hat g(\x)\mid\x,\sigma)
 \le
 \pA(\x;\sigma).
 \label{eq:proof_chain}
\end{equation}
This proves~\eqref{eq:coverage}.
\end{proof}

\subsection{Proof of Corollary~\ref{cor:argmax_agreement}}

\argmaxcorollary*

\begin{proof}
Assume $\qA(\x)-\delta>1/2$. On the event in~\eqref{eq:proof_chain}, which has probability at least $1-\beta-\gamma$, we have
\begin{equation}
 p(\hat g(\x)\mid\x,\sigma)
 \ge
 \qA(\x)-\delta
 >
 \frac12.
 \label{eq:surrogate_class_above_half}
\end{equation}
A class distribution cannot assign probability greater than $1/2$ to two distinct classes. Therefore the class $\hat g(\x)$ must be the unique smoothed argmax, and $\hat g(\x)=g(\x;\sigma)$.

On the same event, $\pA(\x;\sigma)\ge \qA(\x)-\delta>1/2$. By the standard randomized-smoothing certificate of \citet{cohen2019certified}, $g(\cdot;\sigma)$ is constant on the ball of radius $\sigma\Phi^{-1}(\pA(\x;\sigma))$ around $\x$. Since $\Phi^{-1}$ is monotone and $\pA(\x;\sigma)\ge \qA(\x)-\delta$, this certified ball contains
\begin{equation}
 \left\{\x':\|\x'-\x\|_2\le \sigma\Phi^{-1}(\qA(\x)-\delta)\right\}
 =
 \left\{\x':\|\x'-\x\|_2\le \Rtilde(\x;\sigma)\right\}.
\end{equation}
The constant value on this smaller ball is $g(\x;\sigma)=\hat g(\x)$.
\end{proof}

\clearpage
\newpage



\end{document}